\documentclass{article} %
\usepackage{iclr2027_conference,times}

\iclrfinalcopy

\usepackage{amsmath,amsfonts,bm}

\def\eqref#1{equation~\ref{#1}}

\def\1{\bm{1}}

\DeclareMathAlphabet{\mathsfit}{\encodingdefault}{\sfdefault}{m}{sl}
\SetMathAlphabet{\mathsfit}{bold}{\encodingdefault}{\sfdefault}{bx}{n}

\DeclareMathOperator*{\argmax}{arg\,max}
\DeclareMathOperator*{\argmin}{arg\,min}

\usepackage{hyperref}
\usepackage{url}

\usepackage[utf8]{inputenc} %
\usepackage[T1]{fontenc}    %
\usepackage{hyperref}       %
\usepackage{url}            %
\usepackage{booktabs}       %
\usepackage{amsfonts}       %
\usepackage{nicefrac}       %
\usepackage{microtype}      %
\usepackage{xcolor}         %

\usepackage{amsmath,amsthm,bbm,bm}
\usepackage{multirow,makecell}
\usepackage{aliascnt,enumitem}
\usepackage{algorithm,algpseudocode}
\usepackage{graphicx,subcaption}
\usepackage{hyperref,cleveref}

\newtheorem{definition}{Definition}
\newtheorem{assumption}{Assumption}
\newtheorem{lemma}{Lemma}
\newtheorem*{remark}{Remark}
\newtheorem{theorem}{Theorem}

\newaliascnt{proposition}{theorem}
\newtheorem{proposition}[proposition]{Proposition}
\aliascntresetthe{proposition}

\newaliascnt{corollary}{theorem}
\newtheorem{corollary}[corollary]{Corollary}
\aliascntresetthe{corollary}

\crefname{theorem}{Theorem}{Theorems}
\Crefname{theorem}{Theorem}{Theorems}
\crefname{proposition}{Proposition}{Propositions}
\Crefname{proposition}{Proposition}{Propositions}
\crefname{corollary}{Corollary}{Corollaries}
\Crefname{corollary}{Corollary}{Corollaries}
\crefname{definition}{Definition}{Definitions}
\Crefname{definition}{Definition}{Definitions}
\crefname{assumption}{Assumption}{Assumptions}
\Crefname{assumption}{Assumption}{Assumptions}
\crefname{lemma}{Lemma}{Lemmas}
\Crefname{lemma}{Lemma}{Lemmas}

\crefname{equation}{Eq.}{Eqs.}
\crefname{table}{Tab.}{Tabs.}
\crefname{figure}{Fig.}{Figs.}
\crefname{algorithm}{Algorithm}{Algorithms}
\crefname{section}{Section}{Sections}
\Crefname{section}{Section}{Sections}

\newcommand{\st}{\textnormal{s.t.}\ }
\newcommand{\eg}{\textit{e.g.},\ }
\newcommand{\vs}{\textit{v.s.},\ }
\newcommand{\ie}{\textit{i.e.},\ }
\newcommand{\iid}{\textit{i.i.d.}\ }
\newcommand{\wrt}{\textnormal{w.r.t.}\ }

\title{Dynamic Minimax Regret Optimization\\for Robust LLM Post-Training}

\author{Chengbo Zang$^1$, Haoyu Dong$^{1,2}$, Mehmet Kerem Turkcan$^3$,\\
\textbf{Gil Zussman$^1$, Zoran Kostic$^1$, Javad Ghaderi$^1$}\\
$^1$Department of Electrical Engineering, Columbia University\\
$^2$Department of Computer Science \& Engineering, University of California, Santa Cruz\\
$^3$Department of Civil Engineering, Columbia University\\
\texttt{\{cz2678,hd2573,mkt2126,gz2136,zk2172,jg3465\}@columbia.edu}
}

\begin{document}

\maketitle
\lhead{Preprint}

\begin{abstract}
Modern LLM training increasingly relies on heterogeneous data sources spanning different domains, tasks, preference distributions, and difficulty levels.
We study \textit{dynamic minimax regret} for group-distributionally robust LLM post-training under instantaneous mini-batch-only bandit feedback.
The framework views the training as a two-player sampler-optimizer process: a sampler adaptively selects among data sources using bandit feedback, while an optimizer updates the model parameters using stochastic gradients from the selected source.
We focus on the practically restrictive setting where source losses evolve with model training but historical data are not re-evaluated, requiring the sampler to track instantaneous worst-sources from stale partial feedback.
We propose DUCB-OGD, a simple and scalable algorithm that couples a Discounted Upper-Confidence-Bound sampler with an Online Gradient Descent optimizer.
The sampler maintains exponential moving average loss estimates and confidence radii based on discounted effective sample sizes, avoiding costly re-evaluation of past data or intrusive changes to standard training pipelines.
For $K$ data sources and $T$ training steps, we prove that DUCB-OGD achieves a dynamic minimax regret of $\tilde{O}(K^{1/4}T^{3/4})$, which is optimal up to logarithmic factors for the undiscounted objective under our feedback model.
Extensive experiments across supervised fine-tuning, preference optimization, and reinforcement learning show that DUCB-OGD integrates seamlessly into modern LLM training pipelines and improves worst-group robustness with negligible computational overhead compared with standard sampling baselines.
\end{abstract}

\section{Introduction}
\label{sec:intro}

Diverse data mixtures are central to modern Large Language Model (LLM) post-training. For example, supervised fine-tuning data may combine instruction-following, coding, mathematical reasoning, safety, and domain-specific examples; preference datasets may reflect different annotator populations or task families; and reinforcement learning objectives may expose the model to prompts whose rewards evolve unevenly during training~\citep{ouyang_training_2022,rafailov_direct_2023,schulman_proximal_2017,shao_deepseekmath_2024}.
As a result, post-training is not merely an average-loss minimization problem. A model can improve substantially on common or easy sources, while remaining weak on smaller, harder, or temporarily under-trained groups.
This makes robust performance across data sources a central objective in LLM post-training. In practical deployment, worst-source robustness is a complementary and equally important criterion to the commonly reported mean performance, but often under-emphasized in aggregate evaluations.

Group-distributionally robust optimization (GDRO) provides a principled formulation for this goal by minimizing the worst-case expected loss over a predefined set of data sources~\citep{duchi_learning_2021,sagawa_distributionally_2020,oren_distributionally_2019}.
Informally, GDRO seeks model parameters that perform well on the most challenging source rather than the average performance across sources.
However, the standard GDRO viewpoint is essentially static: robustness is typically evaluated with respect to the worst-performing source at the final or time-averaged solution, or determined in hindsight over the training trajectory~\citep{zhang_stochastic_2023,soma_near-optimal_2025}.
This static benchmark does not fully capture the shift of the worst-performing source as the model evolves in LLM post-training.
A robust algorithm should therefore track and respond to the evolving worst source, rather than optimize against a fixed source chosen in hindsight.

Recent work has shown that mixture proportions can strongly affect pretraining and post-training performance, motivating both offline mixture optimization and online adaptive sampling methods~\citep{xie_doremi_2023,chen_skill-it_2023,ye_data_2025,corrado_automixalign_2025,shin_dynamixsft_2025}.
Such mechanisms can be expensive or intrusive in large-scale post-training systems, where throughput, memory efficiency, and compatibility with distributed training libraries are critical.
This creates a gap between the theoretical appeal of GDRO and the practical constraints of LLM post-training: we need adaptive algorithms that are robust, theoretically grounded, and lightweight enough to integrate into standard training pipelines relying on well-optimized deep learning libraries. We discuss related work in detail in Appendix~\ref{sec:review}.

In this paper, we study \textit{dynamic minimax regret} for group-distributionally robust LLM post-training under instantaneous mini-batch-only bandit feedback.
The training trajectory can be naturally decomposed into a \textit{sampler}, which selects a source using only its observed mini-batch loss, and an \textit{optimizer}, which updates the model using the corresponding stochastic gradient.
We propose DUCB-OGD, a simple algorithm that combines a Discounted Upper-Confidence-Bound (DUCB)~\citep{kivinen_upper-confidence_2011} sampler with an Online Gradient Descent (OGD)~\citep{hazan_introduction_2016} optimizer.
In contrast, while~\citet{zang_adaptive_2025} already study dynamic worst-source tracking, they refresh source-loss estimates by re-evaluating accumulated historical samples.
Our replay-free formulation instead uses fresh mini-batches as unbiased stochastic estimates of population loss, adds no auxiliary model evaluations, and requires only $O(K)$ scalar state, at the cost of tracking unobserved sources from stale feedback.
We show that this simple sampler-optimizer design achieves dynamic minimax regret of $\tilde{O}(K^{1/4}T^{3/4})$ for $K$ data sources over $T$ training steps, which is optimal for the undiscounted objective up to logarithmic factors under our feedback model.

Our main contributions are summarized as follows:
\begin{enumerate}[leftmargin=2em,topsep=0em,itemsep=0em]
    \item We formulate robust LLM post-training under \textit{instantaneous mini-batch-only feedback}, where evolving source losses must be tracked from stale observations without historical replay or auxiliary model evaluations.
    \item We propose DUCB-OGD, an efficient algorithm that combines DUCB sampling with OGD optimization, and prove that it achieves a dynamic minimax regret of $\tilde{O}(K^{1/4}T^{3/4})$, matching the optimal rate up to $O(\log T)$ factors for the undiscounted objective under our feedback model.
    \item We evaluate the method across supervised fine-tuning, preference optimization, and reinforcement learning, showing improved worst-source robustness under negligible computational overhead against standard and heuristic sampling baselines.
\end{enumerate}

\section{Problem Setup}
\label{sec:setup}

\begin{table}[t]
    \caption{GDRO formulations of LLM post-training objectives and estimated worst-source (WS) switching. The final column reports the mean number of switches per 1,000 steps, with the range across experiments in brackets. The worst source has the largest exponentially smoothed loss for SFT/DPO or the lowest exponentially smoothed reward for GRPO. Additional optimization dynamics are reported in \cref{sec:expr,sec:results-opt}.}
    \label{tab:prelim-objective}
    \centering
    \begin{tabular}{lcc}
        \toprule
        \textbf{Objective} & $\bm{-\ell(\theta,\cdot)}$ & \shortstack{\textbf{WS Switches / 1k Steps}} \\
        \midrule
        SFT &
            $\mathbb{E}_{o\sim\mathcal{D}^k(\cdot\mid q)}\left[\log\pi_\theta(o\mid q)\right]$ &
            \textbf{41}~[3,124] \\
        DPO &
            $\mathbb{E}_{(o^+,o^-)\sim\mathcal{D}^k(\cdot\mid q)}\left[\log p_\theta^{\mathrm{BT}}(o^+\succ o^-\mid q)\right]$ &
            \textbf{31}~[13,53] \\
        RL (GRPO) &
            $\mathbb{E}_{o\sim\pi_\theta(\cdot\mid q)}\left[r(q,o)\right]$ &
            \textbf{20}~[0,32] \\
        \bottomrule
    \end{tabular}
\end{table}

Let $\mathbb{X}$ be a data space with $K$ \textit{data sources} indexed by $[K]:=\{1,\dots,K\}$. Each source $k\in[K]$ is associated with a distribution $\mathcal{D}^k$ supported on $\mathbb{X}$, and each query to source $k$ returns \iid samples from $\mathcal{D}^k$.
Let $\theta$ be the learnable parameter from a parameter space $\mathbf{\Theta}$. Let $\ell:\mathbf{\Theta}\times\mathbb{X}\to\mathbb{R}$ be a differentiable loss function.
The \textit{expected loss} of source $k$ at the model parameter $\theta$ is defined as $\mu^k(\theta):=\mathbb{E}_{X\sim\mathcal{D}^k} [\ell(\theta,X)]$. 
A global minimax solution is any optimizer of the GDRO objective, i.e.,
\begin{equation}\label{eq:gdro}
    \theta^\star \in \argmin_{\theta\in\mathbf{\Theta}} \max_{k\in[K]} \mu^k(\theta).
\end{equation}

This formulation captures a broad class of LLM post-training objectives through appropriate choices of the source distributions and loss functions as described in~\cref{tab:prelim-objective}.
Let $\pi_\theta$ be an LLM, and $q,o$ be a pair of prompt and response. Then $\pi_\theta(o|q)$ is the probability that the LLM generates the response $o$ given the prompt $q$.
In supervised fine-tuning (SFT), each source generates prompt-response pairs, and the loss is the negative log-likelihood of the target response~\citep{ouyang_training_2022}.
For direct preference optimization (DPO), let $(o^+,o^-)$ denote a preferred and dispreferred response pair. Let $p_\theta^{\mathrm{BT}}(o^+\succ o^-|q)$ denote the Bradley-Terry probability~\citep{rafailov_direct_2023,bradley_rank_1952} that the model prefers $o^+$ to $o^-$, and the loss is the negative Bradley-Terry preference probability.
For reinforcement learning (RL), let $r(q,o)$ denote the reward assigned to the response $o$ for the prompt $q$. Each source generates prompts, the model induces the response distribution, and the loss is the negative reward~\citep{schulman_proximal_2017,shao_deepseekmath_2024}.
For simplicity, \cref{tab:prelim-objective} presents these objectives at the sequence level, omitting token-wise factorization, reference-model regularization, and implementation-specific normalization terms.

We consider an iterative training algorithm $\mathcal{A}$, consisting of a \textit{sampler} and an \textit{optimizer}, that runs for $T$ steps. At each step $t\in[T]$, the sampler specifies  a distribution $\mathcal{P}_t$ over the data-source indices $[K]$. A source $k_t\sim\mathcal{P}_t$ is then selected, and a mini-batch $\mathcal{B}_t\subseteq\mathbb{X}^{k_t}$ of \iid samples is drawn from the corresponding distribution $\mathcal{D}^{k_t}$.
The model observes $\mathcal{B}_t$ and calculates the empirical loss 
$\hat{\mu}^{k_t}(\theta_t;\mathcal{B}_t):=\sum_{X\in\mathcal{B}_t}\ell(\theta_t,X)/|\mathcal{B}_t|$ based on the current model parameter $\theta_t$. Then the optimizer updates the parameter to obtain $\theta_{t+1}$ for the next step.
The sampler observes only $\hat{\mu}^{k_t}(\theta_t;\mathcal{B}_t)$; it neither observes the current losses of unselected sources nor re-evaluates historical mini-batches.
After $T$ steps, the algorithm generates a trajectory of data source indices and model parameters~$\mathcal{A}_T:=\{(k_t,\theta_t),t=1,\dots,T\}$. Our goal is to evaluate this trajectory by how well it controls the \textit{adaptive} sequence of maximum source-wise losses throughout the training.
Following the notion of dynamic regret in online learning, the (undiscounted) \textit{dynamic minimax regret} is defined as follows.

\begin{definition}[Undiscounted Dynamic Minimax Regret]
    \label{def:undiscount-minimax-regret}
    Let $\theta^\star$ be the global minimax optimum in \cref{eq:gdro}.
    The undiscounted dynamic minimax regret of an algorithm $\mathcal{A}$ after $T$ steps is defined as
    \begin{equation}
        R(\mathcal{A}_T) := \sum_{t=1}^T \max_{k\in[K]} ~ \mu^k(\theta_t) - T \max_{k\in[K]} ~ \mu^k(\theta^\star).
    \end{equation}
\end{definition}

This definition compares the worst-case loss incurred by $\mathcal{A}_T$ with the best achievable minimax loss.
Crucially, the maximizing source in $\max_k\mu^k(\theta_t)$ may vary with $t$, which makes the regret \textit{dynamic}~\citep{zang_adaptive_2025}, \ie at each step, it reflects the source that is currently under-performing, rather than a single fixed source chosen in hindsight~\citep{zhang_stochastic_2023,soma_near-optimal_2025}.
To analyze the sampler-optimizer structure, we decompose the regret into an optimization component and a bandit component. 
The optimization component measures how well the optimizer competes with the minimax solution on the sampled sources. 
The bandit component measures how well the sampler tracks the instantaneous worst-performing source. 
We also introduce discount factors $\rho,\gamma\in[0,1]$ to emphasize recent observations, which is natural in post-training because source-wise losses evolve as the model changes.

\begin{definition}[Optimization Regret and Dynamic Bandit Regret]
    \label{def:optimization-bandit-regret}
    Let $\rho,\gamma\in[0,1]$.
    The discounted optimization regret of an algorithm $\mathcal{A}$ after $T$ steps is defined as
    \begin{equation}
        R_o^\rho(\mathcal{A}_T) := \sum_{t=1}^T \rho^{T-t} \Big( \mu^{k_t}(\theta_t) - \mu^{k_t}(\theta^\star) \Big),
    \end{equation}
    and the discounted dynamic bandit regret is defined as
    \begin{equation}
        R_b^\gamma(\mathcal{A}_T) := \sum_{t=1}^T \gamma^{T-t} \Big( \max_{k\in[K]} \mu^k(\theta_t) - \mu^{k_t}(\theta_t) \Big).
    \end{equation}
\end{definition}

The optimization regret $R_o^\rho$ captures the loss incurred by following the optimizer trajectory instead of using the minimax solution on the sampled sources.
The bandit regret $R_b^\gamma$ captures the loss incurred by sampling $k_t$ instead of the instantaneous worst-performing source at time $t$.
The following decomposition connects these quantities to the original undiscounted regret. Its proof simply follows from the fact that $\max_k\mu^k(\theta)\ge\mu^{k_t}(\theta)$ for all $\theta$ and $t$ and is provided in Appendix~\ref{sec:proofs-decom}.

\begin{proposition}[Regret Decomposition]
    Any trajectory $\mathcal{A}_T$ satisfies $R(\mathcal{A}_T)\le R_o^1(\mathcal{A}_T)+R_b^1(\mathcal{A}_T)$.
    \label{th:regret-decomposition}
\end{proposition}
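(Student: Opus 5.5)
The plan is a direct telescoping (add-and-subtract) argument, with no earlier results needed beyond the definitions. Setting $\rho=\gamma=1$ removes the weights $\rho^{T-t}$ and $\gamma^{T-t}$ in \cref{def:optimization-bandit-regret}. We then add and subtract $\mu^{k_t}(\theta_t)$ inside each summand of $R(\mathcal{A}_T)$ from \cref{def:undiscount-minimax-regret}. This gives
\begin{equation*}
R(\mathcal{A}_T)=\sum_{t=1}^T\Big(\max_{k}\mu^k(\theta_t)-\mu^{k_t}(\theta_t)\Big)+\sum_{t=1}^T\Big(\mu^{k_t}(\theta_t)-\max_k\mu^k(\theta^\star)\Big),
\end{equation*}
where the first sum is exactly $R_b^1(\mathcal{A}_T)$.

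For the second sum, we compare it termwise with $R_o^1(\mathcal{A}_T)=\sum_t(\mu^{k_t}(\theta_t)-\mu^{k_t}(\theta^\star))$. The key inequality is $\mu^{k_t}(\theta^\star)\le\max_k\mu^k(\theta^\star)$, which holds for each $t$ because $k_t\in[K]$. It gives $-\max_k\mu^k(\theta^\star)\le-\mu^{k_t}(\theta^\star)$, so each summand of the second sum is at most the corresponding summand of $R_o^1$. Summing over $t$ and combining with the first identity yields $R(\mathcal{A}_T)\le R_o^1(\mathcal{A}_T)+R_b^1(\mathcal{A}_T)$.

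There is no real obstacle; the argument is purely deterministic and holds for every realized trajectory, including random $k_t$. The only point to watch is applying the max-dominance at $\theta^\star$, not at $\theta_t$: the bandit term already absorbs the gap at $\theta_t$ exactly, as an equality, so all the slack comes from the optimization term.
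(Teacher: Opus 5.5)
Your proof is correct and is essentially the paper's argument. The paper sums $R_o^1+R_b^1$ so that the $\mu^{k_t}(\theta_t)$ terms cancel, leaving $\sum_t(\mu^{k_t^\star}(\theta_t)-\mu^{k_t}(\theta^\star))$, and then applies the same inequality $\mu^{k_t}(\theta^\star)\le\max_k\mu^k(\theta^\star)$; you run the same telescoping starting from $R(\mathcal{A}_T)$ instead.
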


Finally, we define the normalized discounted dynamic minimax regret as follows.

\begin{definition}[Dynamic Minimax Regret]
    For any $\rho,\gamma\in[0,1]$, the normalized $(\rho,\gamma)$-discounted dynamic minimax regret of an algorithm $\mathcal{A}$ after $T$ steps is defined as
    \begin{equation}
        R^{\rho,\gamma}(\mathcal{A}_T) := \frac{T}{N_T(\rho)} R_o^\rho(\mathcal{A}_T) + \frac{T}{N_T(\gamma)} R_b^\gamma(\mathcal{A}_T),
    \end{equation}
    where $N_T(\beta):=\sum_{t=1}^T\beta^{T-t}$ for any $\beta\in[0,1]$.
    \label{def:minimax-regret}
\end{definition}

The normalization converts each discounted sum to the scale of a length-$T$ horizon, making regret bounds comparable across choices of $\rho$ and $\gamma$. Here, $\rho$ and $\gamma$ separately discount the optimization and bandit components, whereas $\beta$ will be selected by the analyses.
When $\rho=\gamma=1$, the normalized regret reduces to the undiscounted sampler-optimizer decomposition, and by~\cref{th:regret-decomposition}, $R(\mathcal{A}_T)\le R^{1,1}(\mathcal{A}_T)$.
For $\rho,\gamma<1$, the discounted statistics used in our analysis can be viewed as exponential moving-average (EMA) estimates of past observations. This connects our theoretical quantities to EMA-based state tracking, which is widely used in deep learning optimization and training diagnostics~\cite{polyak_acceleration_1992,kingma_adam_2014}.

We make the following standard assumptions for the theoretical analysis. 
Specifically,~\cref{as:loss,as:domain} are required for analyzing gradient descent, and~\cref{as:iid} is required to ensure that the empirical loss and gradient estimators are unbiased.

\begin{assumption}[Finite Optimization Domain]
    There exists a closed convex $\mathbf{\Theta}^\prime$ and $D>0$ such that, for any $\theta,\theta^\prime$ generated by the algorithm, there is $\theta,\theta^\prime\in\mathbf{\Theta}^\prime$ and $\|\theta-\theta^\prime\|\le D$.
    \label{as:domain}
\end{assumption}

\begin{assumption}[Bounded Lipschitz Loss]
    There exists $C,G>0$ such that, for all $\theta\in\mathbf{\Theta}^\prime$ and $X\in\mathbb{X}$, $\ell(\theta,X)\in[0,C]$ and $\ell$ is $G$-Lipschitz in $\theta$.
    \label{as:loss}
\end{assumption}

\begin{assumption}[Source-Wise IID Sampling]
    All data (\ie prompts and responses, if applicable) collected by querying $\mathcal{D}^k$ are sampled \iid for all $k\in[K]$.
    \label{as:iid}
\end{assumption}

We interpret the Lipschitz assumption as a regularity condition along the realized training trajectory rather than as a claim over the entire parameter space. Standard implementation techniques such as gradient clipping, KL regularization, and bounded reward design help control the observed loss variation as is illustrated in Appendix~\ref{sec:results-opt}, providing empirical evidence consistent with the assumptions.
Furthermore, the low-epoch regime of LLM post-training ensures that most training samples are observed at most once, which is aligned with the source-wise \iid assumption.

\section{Method}
\label{sec:method}

\subsection{DUCB-OGD: High-level Description}
\label{sec:method-alg}

The training of modern deep neural networks, especially LLMs, relies heavily on well-optimized libraries. A standard pipeline typically involves a \textit{sampler}, which selects a mini-batch of data samples as the model inputs, and an \textit{optimizer}, which updates the model parameters using gradients computed from those mini-batches.
Thus, we consider a two-player sampler-optimizer framework for seamless deployment in real-world training systems.

\textbf{Optimizer.}
Recall that in each step $t$, the sampler generates a mini-batch of training samples $\mathcal{B}_t$ from $\mathcal{D}^{k_t}$. The optimizer updates the model parameters according to
\begin{equation}
    \theta_{t+1} \gets \theta_t - \eta \nabla\hat{\mu}^{k_t}(\theta_t;\mathcal{B}_t),
    \label{eq:ogd}
\end{equation}
where $\eta$ is the learning rate and $\hat{\mu}^{k_t}(\theta_t;\mathcal{B}_t)$ is the average empirical loss. 
When each data source corresponds to a distinct task or domain (\eg verbal reasoning \vs math reasoning), the loss function depends on the data source selected at the current step.
Thus the optimizer is an instance of \textit{online gradient descent (OGD)}~\cite{hazan_introduction_2016}. We analyze unprojected OGD under~\cref{as:domain}, which explicitly requires the iterates to remain in the optimization domain.

\textbf{Sampler.}
At the beginning of each step $t$, the sampler observes the algorithm trajectory $\mathcal{A}_{t-1}$ and the historical empirical losses $\{\hat{\mu}^{k_\tau}(\theta_\tau;\mathcal{B}_\tau),\tau=1,\dots,t-1\}$.
Selecting which data source to query is challenging for three reasons.
First, empirical mini-batch losses are stochastic estimates of the corresponding source-wise expected losses. 
Second, the sampling environment is non-stationary because the source-wise losses change as the optimizer updates the model parameters. 
Third, the sampler faces an exploration-exploitation tradeoff: it must collect enough observations from all sources to estimate their losses accurately, while also allocating samples to sources that appear to be currently worst-performing.

However, unlike the adversarial bandit settings, where losses could change arbitrarily over time, the variation in source-wise losses between consecutive steps is controlled by the OGD learning rate $\eta$ in~\cref{eq:ogd} together with the Lipschitz condition in~\cref{as:loss}.
Thus, the sampling problem is naturally related to slowly-varying non-stationary bandits~\citep{krishnamurthy_slowly-varying_2025}. 
While the classical upper-confidence-bound (UCB) algorithm suits  stationary environments, we use Discounted UCB (or DUCB)~\citep{kivinen_upper-confidence_2011}, a simple UCB variant with a discounting mechanism.

Specifically, let $\mathbbm{1}\{\cdot\}$ be the indicator function. For a discount factor $\beta\in[0,1)$, define the EMA estimate of the empirical loss of source $k$ at step $t$ as
\begin{equation}
    \hat{\mu}_t^k := \frac{1}{N_t^k(\beta)} \sum_{\tau=1}^t \mathbbm{1} \left\{ k_\tau = k \right\} \cdot \beta^{t-\tau} \hat{\mu}^k(\theta_\tau;\mathcal{B}_\tau), \quad
    N_t^k(\beta) := \sum_{\tau=1}^t \mathbbm{1} \left\{ k_\tau = k \right\} \cdot \beta^{t-\tau}.
    \label{eq:ema}
\end{equation}
Here, $N_t^k(\beta)$ reflects the \textit{effective sample size} of source $k$. Let $H(\beta):=1/(1-\beta)$ for $\beta\in[0,1)$ (known as the \textit{effective horizon} of a discounted sequence) and set $H(1)=T$ for undiscounted case.

During the first $K$ steps, DUCB performs initial exploration by selecting each source once, i.e., $k_t=t$ for $t=1,\dots,K$. Then for $t\ge K+1$, it selects the source $k_t$ as
\begin{equation}
    k_t \gets \arg\max_k ~ \big(\hat{\mu}_{t-1}^k + r_{t-1}^k\big), \quad
    r_{t-1}^k := 2 C \sqrt{\frac{2 \alpha \ln H(\beta)}{N_{t-1}^k(\beta)}},
    \label{eq:ducb}
\end{equation}
where $\alpha>0$ is a constant controlling the aggressiveness of exploration and $C$ bounds the loss as in~\cref{as:loss}. Equivalently, this sampling rule can be represented as a categorical distribution $k_t\sim\operatorname{Cat}(p_t)$ over $[K]$, with probabilities $p_{t,k}=\mathbbm{1}\{k=k_t\}$ for each $k\in[K]$.
We discuss the implementation details of DUCB using practical LLM post-training libraries in Appendix~\ref{sec:impl}.

Intuitively, the DUCB index simply couples the normalized EMA estimated empirical loss $\hat{\mu}_t^k$ with an optimistic confidence radius of $O(N_t^k(\beta)^{-1/2})$ for each $k\in[K]$. Due to the recursive identity of $N_t^k(\beta)=\beta N_{t-1}^k(\beta)+\mathbbm{1}\{k_t=k\}$, the effective sample size of each data source decays by a factor of $\beta$ whenever it is not selected at the current step.
This naturally reflects a loss of confidence in $\hat{\mu}_t^k$ over time, as the current model parameter $\theta_t$ drifts further away from the states in which the historical losses were observed.

\subsection{Main Theoretical Results}
\label{sec:method-theory}

Now we present the theoretical guarantees of DUCB-OGD. While the milestones of the proof are outlined below, we provide the complete version in Appendix~\ref{sec:proofs}.

\begin{theorem}[Dynamic Minimax Regret of DUCB-OGD]
    Suppose $\mu^k$ is convex in $\theta$ for all $k$.
    Then with $\eta=K^{-{1/4}}T^{-{3/4}}$ in \cref{eq:ogd} and $\beta=1-K^{-{1/2}}T^{-{1/2}}$ in \cref{eq:ema}, DUCB-OGD satisfies
    \begin{equation}
        \mathbb{E} \bigl[ R^{\rho,\gamma}(\operatorname{DUCB-OGD}_T) \bigr] \le \tilde O \bigl( K^\frac{1}{4} T^\frac{3}{4} \bigr),
    \end{equation}
    for any $\rho,\gamma\in[1-T^{-1},1]$.
    \label{th:minimax-regret}
\end{theorem}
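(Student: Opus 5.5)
The plan is to bound the two normalized components of $R^{\rho,\gamma}$ separately. The restriction $\rho,\gamma\in[1-T^{-1},1]$ gives $\rho^{T-t},\gamma^{T-t}\ge(1-T^{-1})^{T}\ge 1/4$ for $T\ge2$. Hence $N_T(\rho),N_T(\gamma)\ge T/4$, and every discounted sum is within a constant factor of its undiscounted version. It therefore suffices to show $\mathbb{E}[R_o^1]\le\tilde O(K^{1/4}T^{3/4})$ and $\mathbb{E}[R_b^1]\le\tilde O(K^{1/4}T^{3/4})$, with one caveat: the bandit term's summands may be negative, so we bound it through nonnegative per-step pieces (see below).

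\textbf{Optimization regret.} This is the standard unprojected OGD argument.

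1. Expand $\|\theta_{t+1}-\theta^\star\|^2$ using \cref{eq:ogd}. Bound the stochastic gradient norm by $G$ (\cref{as:loss}) and the distance to $\theta^\star$ by $D$ (\cref{as:domain}).
2. The sampler's choice $k_t$ is measurable with respect to the past, and $\mathcal{B}_t$ is fresh and \iid (\cref{as:iid}). So $\mathbb{E}[\nabla\hat\mu^{k_t}(\theta_t;\mathcal{B}_t)\mid\mathcal{F}_{t-1}]=\nabla\mu^{k_t}(\theta_t)$.
3. Convexity of $\mu^{k_t}$ then gives $\mathbb{E}[R_o^1]\le D^2/(2\eta)+\eta G^2T/2$.
4. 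With $\eta=K^{-1/4}T^{-3/4}$, this is $O(K^{1/4}T^{3/4})$.

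The loose tuning (rather than $\eta\sim T^{-1/2}$) is dictated by the bandit term, since a small $\eta$ makes the environment drift slowly.

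\textbf{Bandit regret.} The plan is to adapt the discounted-UCB analysis of \citet{kivinen_upper-confidence_2011} to a slowly varying environment.

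1. \emph{Drift bound.} Define the drift per step $\delta:=\eta G^2$. By Lipschitzness, $|\mu^k(\theta_{t})-\mu^k(\theta_\tau)|\le G\|\theta_t-\theta_\tau\|\le (t-\tau)\eta G^2$ for all $k$.
2. \emph{Bias of the EMA.} Compare $\hat\mu^k_{t-1}$ with $\mu^k(\theta_{t-1})$. The bias is at most $\delta\sum_\tau \mathbbm{1}\{k_\tau=k\}\beta^{t-1-\tau}(t-\tau)/N^k_{t-1}(\beta)$. For a source whose effective count is not tiny, this is $O(\delta H(\beta))$.
3. \emph{Concentration.} The fluctuation is a weighted martingale difference sum. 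Apply a self-normalized Hoeffding inequality for discounted sums with random weights, as in the DUCB paper: with probability $1-O(T^{-\alpha'})$, $|\hat\mu^k_{t-1}-\bar\mu^k_{t-1}|\le r^k_{t-1}$, where $\bar\mu$ is the discounted mean of the true losses.
4. \emph{Per-step regret on the good event.} Optimism gives
\[
\max_k\mu^k(\theta_t)-\mu^{k_t}(\theta_t)\le 2r^{k_t}_{t-1}+O(\delta H(\beta)).
\]
This bound is nonnegative, which is what lets us control the bandit sum despite possibly negative summands.
5. \emph{Summing the confidence radii.} Bound $\sum_t r^{k_t}_{t-1}\lesssim C\sqrt{\ln H}\sum_t N^{k_t}_{t-1}(\beta)^{-1/2}$. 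Use the discounted pigeonhole argument: split time into blocks of length $H(\beta)$. Within each block every source's count grows like its number of pulls, so each block contributes $O(\sqrt{K H})$. This gives a total of $O(\sqrt{K\ln H}\,T/\sqrt{H})$.
6. \emph{Balancing.} The total is
\[
\tilde O\bigl(T\sqrt{K/H}+T\,\delta H+\text{failure terms}\bigr).
\]
With $H=(1-\beta)^{-1}=K^{1/2}T^{1/2}$ and $\delta=\eta G^2=G^2K^{-1/4}T^{-3/4}$, the first term is $\tilde O(K^{1/4}T^{3/4})$. The second is $T\cdot K^{-1/4}T^{-3/4}\cdot K^{1/2}T^{1/2}=K^{1/4}T^{3/4}$. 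This is exactly what the stated parameters are tuned to give.

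\textbf{Main obstacle.} I expect the hardest step to be the bias/concentration step for sources that are rarely pulled. Their effective counts decay geometrically, so the stale EMA can be arbitrarily old, and the drift bias $\delta(t-\tau)$ is not uniformly $O(\delta H)$.

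The first thing I would try is to exploit the self-correction of discounting. A source with small $N^k_{t-1}$ has radius $r^k\gtrsim C\sqrt{\ln H/N}$, which exceeds $C$ once $N\lesssim\ln H$. Such a source is therefore forced to be selected before its staleness exceeds $O(H\ln H)$ steps. This keeps the age of its last observation bounded by $\tilde O(H)$, and hence its bias by $\tilde O(\delta H)$.

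Otherwise, I would fall back on the Garivier–Moulines style argument: bound the number of steps at which any source has $N^k_{t-1}<A\ln H$ by $O(KT\ln H/H)$, and charge $C$ per such step. That contributes $\tilde O(KT/H)=\tilde O(K^{1/2}T^{1/2})$, which is lower order.
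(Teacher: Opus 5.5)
Your overall route is the paper's. You split the regret into the OGD optimization regret and the DUCB bandit regret. You control the bandit part with discounted self-normalized concentration, an $\tilde O(\eta G^2H(\beta))$ EMA drift bias, a Garivier--Moulines count of low-count steps charged $C$ each, and a radius sum, and then balance at $H(\beta)=\sqrt{KT}$.

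One step fails as written: the reduction of the discounted quantities to undiscounted ones. You have the sign issue backwards. The bandit summands $\max_k\mu^k(\theta_t)-\mu^{k_t}(\theta_t)$ are always nonnegative, so $\frac{T}{N_T(\gamma)}R_b^\gamma\le 4R_b^1$ is valid. It is the optimization summands $\mu^{k_t}(\theta_t)-\mu^{k_t}(\theta^\star)$ that can be negative, because $\theta^\star$ only minimizes the maximum, and a single source can have smaller loss at $\theta_t$ than at $\theta^\star$. Reweighting a signed sum by factors in $[1/4,1]$ can change it by $\Theta(CT)$. So a bound on $\mathbb{E}[R_o^1]$ gives no bound on $\mathbb{E}[R_o^\rho]$ for $\rho<1$.

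The repair is easy but must be made: run the OGD telescoping directly with the nondecreasing weights $w_t=\rho^{T-t}$. Abel summation together with $\|\theta_t-\theta^\star\|\le D$ gives $\sum_t w_t(\|\theta_t-\theta^\star\|^2-\|\theta_{t+1}-\theta^\star\|^2)\le w_TD^2=D^2$. Hence $\mathbb{E}[R_o^\rho]\le D^2/(2\eta)+\eta G^2N_T(\rho)/2$, which after normalization is again $O(\eta^{-1}+\eta T)$. This is what the paper obtains from the discounted OGD lemma (Lemma~\ref{th:optim-regret-discount-ogd}), using $(1-\rho)N_T(\rho)\le 1$.

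Elsewhere your bandit sketch is sound, and in one place it is more accurate than the paper. Your block argument gives a radius sum of order $T\sqrt{K\ln H/H}+KT/H$, once the first pull of each source in each block is truncated at $C$ via your fallback. The paper's Lemma~\ref{th:discount-sum} instead yields $O(\sqrt{KH}+T/\sqrt{H})$ after summing over sources. Its proof drops the indicator $\mathbbm{1}\{k_t=k\}$ before telescoping, even though the increments $N^k_t(\beta)-N^k_{t-1}(\beta)$ at non-pull steps are negative. Round-robin pulls with $T\gg H$ already violate it by a factor of $\sqrt{K}$. Still, the paper's choice $H=\eta^{-2/3}K^{1/3}$ is exactly the one that balances your $T\sqrt{K/H}$ against $\eta TH$, so the stated rate is what your version actually establishes.

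For the small-count issue you flag, the cleanest handling is this. If $N^k_{t-1}(\beta)\le 8\alpha\ln H$, then $r^k_{t-1}\ge C$. So optimism for $k_t^\star$, and the per-step bound when $k_t$ has a small count, hold trivially. The drift-bias bound is then needed only above that threshold, where the weighted average age of the observations is $O(H\ln H)$.
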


In other words, the dynamic minimax regret achieves the rate of $\tilde{O}(K^\frac{1}{4}T^\frac{3}{4})$ in both undiscounted ($\rho=\gamma=1)$ and discounted scheme ($1-T^{-1}\le\rho,\gamma<1$).
To prove this, we first present the OGD optimization regret and the DUCB bandit regret respectively, and then combine them to obtain the overall dynamic minimax regret.

\begin{proposition}[Optimization Regret of OGD]
    Suppose $\mu^k$ is convex in $\theta$ for all $k$.
    OGD satisfies
    \begin{equation}
        \frac{T}{N_T(\rho)} \mathbb{E} \bigl[ R_o^\rho(\operatorname{OGD}_T) \bigr] \le O \bigl( \eta^{-1} \bigr) + O \bigl( \eta T \bigr),
        \label{eq:optim-regret}
    \end{equation}
    for any $\rho\in[1-T^{-1},1]$.
    \label{th:optim-regret}
\end{proposition}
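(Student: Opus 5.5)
We follow the standard online gradient descent analysis, with an extra accounting step for the discount weights $\rho^{T-t}$. Write $g_t:=\nabla\hat{\mu}^{k_t}(\theta_t;\mathcal{B}_t)$. Let $\mathcal{F}_{t}$ be the sigma-field generated by the history up to the choice of $k_t$ and $\theta_t$; note that $k_t$ is determined by past observations. By \cref{as:iid}, the fresh mini-batch $\mathcal{B}_t$ is independent of $\mathcal{F}_t$, so $\mathbb{E}[g_t\mid\mathcal{F}_t]=\nabla\mu^{k_t}(\theta_t)$ (differentiation under the expectation is justified by boundedness and Lipschitzness). \Cref{as:loss} gives $\|g_t\|\le G$.

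By convexity of $\mu^{k_t}$,
\[
\mu^{k_t}(\theta_t)-\mu^{k_t}(\theta^\star)\le\langle\nabla\mu^{k_t}(\theta_t),\theta_t-\theta^\star\rangle .
\]
Taking expectations and using the tower property, the right-hand side equals $\mathbb{E}\langle g_t,\theta_t-\theta^\star\rangle$. The unprojected update \cref{eq:ogd} yields the usual identity
\[
\langle g_t,\theta_t-\theta^\star\rangle=\frac{\|\theta_t-\theta^\star\|^2-\|\theta_{t+1}-\theta^\star\|^2}{2\eta}+\frac{\eta}{2}\|g_t\|^2 .
\]
Here we use \cref{as:domain} to treat $\theta^\star$ as lying within distance $D$ of every iterate. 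Set $d_t:=\|\theta_t-\theta^\star\|^2\le D^2$ and $w_t:=\rho^{T-t}\in(0,1]$. Multiplying by $w_t$ and summing gives
\[
\mathbb{E}[R_o^\rho]\le\frac{1}{2\eta}\,\mathbb{E}\sum_{t=1}^T w_t(d_t-d_{t+1})+\frac{\eta G^2}{2}N_T(\rho).
\]

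The main obstacle is the weighted telescoping sum, since the weights are not constant when $\rho<1$. By Abel summation,
\[
\sum_{t=1}^T w_t(d_t-d_{t+1})=w_1d_1+\sum_{t=2}^{T}(w_t-w_{t-1})d_t-w_Td_{T+1}.
\]
Because $w_t$ is nondecreasing in $t$ and $d_t\le D^2$, this is at most $w_1D^2+(w_T-w_1)D^2=w_TD^2=D^2$. Hence
\[
\mathbb{E}[R_o^\rho]\le\frac{D^2}{2\eta}+\frac{\eta G^2}{2}N_T(\rho).
\]

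It remains to normalize. For $\rho\in[1-T^{-1},1]$,
\[
N_T(\rho)\ge T\rho^{T-1}\ge T(1-T^{-1})^{T-1}\ge T/e,
\]
so $T/N_T(\rho)\le e$. Multiplying through then gives
\[
\frac{T}{N_T(\rho)}\mathbb{E}[R_o^\rho]\le\frac{eD^2}{2\eta}+\frac{\eta G^2T}{2},
\]
which is $O(\eta^{-1})+O(\eta T)$, as claimed. The only delicate points are the conditional unbiasedness, which needs $k_t$ to be $\mathcal{F}_t$-measurable, and the monotone-weights telescoping. The restriction $\rho\ge1-T^{-1}$ is exactly what keeps the normalization constant bounded.
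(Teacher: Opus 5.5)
Your proof is correct, but it takes a different route from the paper's.

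The paper does not redo the OGD analysis. It sets $f_t=\hat\mu^{k_t}(\cdot;\mathcal{B}_t)$ and uses the freshness of $\mathcal{B}_t$ to rewrite $\mathbb{E}[R_o^\rho]$ as the expected regret of OGD on these realized mini-batch losses against $\theta^\star$. It then invokes two black-box bounds: Hazan's $D^2/\eta+\eta G^2T$ for $\rho=1$, and a cited discounted-OGD bound $((1-\rho)D^2/\eta+\eta G^2/2)N_T(\rho)$ for $\rho<1$. Normalizing there is immediate, since $N_T(\rho)$ cancels and $(1-\rho)T\le 1$ gives the $O(\eta^{-1})$ term.

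You instead apply convexity to the population loss $\mu^{k_t}$, pass to the stochastic gradient through $\mathbb{E}[g_t\mid\mathcal{F}_t]=\nabla\mu^{k_t}(\theta_t)$, and handle the discount by Abel summation with nondecreasing weights. Your route buys three things:
\begin{itemize}
\item It uses only the stated hypothesis. Applying the OGD lemmas pathwise to $f_t$ tacitly requires each mini-batch loss $\hat\mu^{k_t}(\cdot;\mathcal{B}_t)$ to be convex, which convexity of $\mu^k$ does not give.
\item It treats $\rho=1$ and $\rho<1$ in a single computation.
\item It keeps the initial term $\rho^{T-1}\|\theta_1-\theta^\star\|^2/(2\eta)$, which the discounted bound as quoted in the paper omits. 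Without that term, the quoted bound tends to $\eta G^2T/2$ as $\rho\to1$, which fails for small $\eta$ (e.g.\ with linear losses).
\end{itemize}
Keeping that term is also why you genuinely need $T/N_T(\rho)\le e$, whereas the paper only needs $(1-\rho)T\le1$. Since the paper relaxes its first term to $O(\eta^{-1})$ anyway, the final rate is unaffected.

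The paper's route, in turn, is shorter. When the per-batch losses are convex, it also gives a pathwise bound on the realized empirical regret rather than only an in-expectation one. Both arguments rely on the same reading of \cref{as:domain}, namely that $\theta^\star$ lies within distance $D$ of every iterate.
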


\begin{remark}
    In the standard analysis of undiscounted and discounted OGD, the RHS of~\cref{eq:optim-regret} can be minimized by choosing $\eta=O(T^{-1/2})$, which gives the rate of $O(T^{1/2})$.
\end{remark}

The proof of~\cref{th:optim-regret} follows the standard OGD analysis for convex Lipschitz functions, with modifications to account for the stochasticity in the empirical loss observations. The full version is provided in Appendix~\ref{sec:proofs-optim}.
Crucially, while $\eta=O(T^\xi)$ for any $\xi\in(-1,0)$ in~\cref{eq:optim-regret} yields an optimization regret sublinear in $T$, we defer the choice of $\eta$ here since it dictates the rate that the sampling environment changes for DUCB, dramatically affecting its behavior. Instead, we first present the bandit regret with also an $\eta$ factor, then jointly tune $\eta$ to optimize both regrets.

\begin{proposition}[Bandit Regret of DUCB]
    Suppose $\eta=O(T^\xi)$ for some $\xi\in[-3/2,0)$. Then with $\beta=1-\eta^{2/3}K^{-1/3}$, DUCB satisfies
    \begin{equation}
        \frac{T}{N_T(\gamma)} \mathbb{E} \bigl[ R_b^\gamma(\operatorname{DUCB}_T) \bigr] \le \tilde O \bigl( \eta^\frac{1}{3} K^\frac{1}{3} T \bigr),
    \end{equation}
    for any $\gamma\in[1-T^{-1},1]$ and any $K$ independent of $T$.
    \label{th:bandit-regret}
\end{proposition}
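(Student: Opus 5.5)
We follow the standard analysis of discounted UCB for slowly varying, or drifting, bandits, in the spirit of \citet{kivinen_upper-confidence_2011}. The one change is that the variation budget comes from the optimizer, not from an adversary. The first step is to quantify the drift. By \cref{eq:ogd} and \cref{as:loss}, $\|\theta_{t+1}-\theta_t\|\le \eta G$. Hence $|\mu^k(\theta_{t+1})-\mu^k(\theta_t)|\le \eta G^2$ for every $k$, so $|\mu^k(\theta_\tau)-\mu^k(\theta_t)|\le \eta G^2 (t-\tau)$.

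Next we decompose the EMA estimate $\hat\mu_{t-1}^k$ into a bias term and a martingale noise term. The bias term is $\frac{1}{N_{t-1}^k}\sum_\tau \mathbbm{1}\{k_\tau=k\}\beta^{t-1-\tau}\bigl(\mu^k(\theta_\tau)-\mu^k(\theta_t)\bigr)$. The noise term is the discounted average of $\hat\mu^k(\theta_\tau;\mathcal{B}_\tau)-\mu^k(\theta_\tau)$, which by \cref{as:iid} are bounded martingale differences adapted to the history. For the bias, we truncate at a window of length $D'\approx H(\beta)\ln H(\beta)$. Terms older than the window have total weight at most $\beta^{D'}H(\beta)/N$, which is negligible. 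Terms inside the window give a bias of at most $\eta G^2 D'$. Thus $|\text{bias}|\lesssim \eta H(\beta)\ln H(\beta)$ up to lower-order terms. For the noise, we use the self-normalized Hoeffding-type inequality for discounted sums from the DUCB analysis, with a union bound over a peeling of $N$. This gives that with probability $1-O(H(\beta)^{-\alpha'})$ the noise is at most $r_{t-1}^k/2$ simultaneously. Call this good event $\mathcal{E}_t$.

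On $\mathcal{E}_t$, optimism gives a per-step bound. Let $k^\star_t=\arg\max_k\mu^k(\theta_t)$. Then
\[
\max_k\mu^k(\theta_t)-\mu^{k_t}(\theta_t)\le 2 r_{t-1}^{k_t}+2B,
\qquad B:=O\bigl(\eta H(\beta)\ln H(\beta)\bigr).
\]
The off-event steps contribute at most $C$ times the expected number of failures. This is $O(T/H(\beta)^{\alpha'})$, which is small for $\alpha$ large enough.

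The key step is summing the confidence radii, $\sum_t r_{t-1}^{k_t}$. We use the DUCB counting argument. Split $[T]$ into blocks of length $H(\beta)$. Within a block, the discounted count $N^k$ of an arm selected $m$ times is at least of order $\beta^{H}\cdot(\text{selections so far in the block})$, which is $\Omega$ of the raw count since $\beta^{H}\ge e^{-1}$. So the block sum for arm $k$ is $O(C\sqrt{\ln H}\sum_{j\le m_k} j^{-1/2})=O(\sqrt{m_k\ln H})$. Cauchy--Schwarz over the $K$ arms gives $O(\sqrt{K H\ln H})$ per block. Multiplying by the $T/H$ blocks, $\sum_t r_{t-1}^{k_t}=\tilde O(T\sqrt{K/H(\beta)})$. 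The total bias is $T\cdot B=\tilde O(\eta T H(\beta))$. Hence the undiscounted bandit regret is
\[
\tilde O\bigl(T\sqrt{K/H}+\eta T H\bigr).
\]
With $H=1/(1-\beta)=\eta^{-2/3}K^{1/3}$, both terms equal $\tilde O(\eta^{1/3}K^{1/3}T)$.

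The condition $\xi\ge-3/2$ guarantees $H\le O(T)$, so the block argument is nondegenerate and $\ln H=O(\ln T)$. For the discount $\gamma\in[1-T^{-1},1]$, each weight satisfies $\gamma^{T-t}\ge(1-1/T)^{T}\ge e^{-1}/2$ and $N_T(\gamma)\ge T/4$. Since each summand of $R_b^\gamma$ is nonnegative, $\frac{T}{N_T(\gamma)}R_b^\gamma\le 4R_b^1$, and the bound transfers with only a constant loss.

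I expect the main obstacle to be the concentration step. The weights $\mathbbm{1}\{k_\tau=k\}\beta^{t-1-\tau}$ depend on past choices, so we need a self-normalized inequality with a random effective count. This also has to be paired with showing that the confidence radius dominates both the noise and the data-dependent truncated bias. If the radius cannot absorb the bias directly, I would keep the bias as an additive $2B$ per step, as above, which is harmless given the tuning of $\beta$.
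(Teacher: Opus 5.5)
Your route is sound and reaches the stated rate, but it is organized differently from the paper's. The paper does the following:
\begin{itemize}
\item it keeps $\gamma$ inside every estimate;
\item it splits rounds by whether $N_{t-1}^{k_t}(\beta)\ge A$, charging each low-count round $C$ via the re-exploration count of \cref{th:reexplore} (this is its $O(KT/H(\beta))$ term);
\item it trades $r_{t-1}$ for $r_t$ at a factor $\sqrt{1+A^{-1}}$ in the high-probability gap bound;
\item it sums $\gamma^{T-t}r_t^{k_t}$ with the telescoping \cref{th:discount-sum};
\item it bounds the bias with a truncation window optimized under $N\ge A$.
\end{itemize}
You instead discard $\gamma$ at the outset, using nonnegative summands and $N_T(\gamma)\ge T/e$. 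This is simpler and shows the discount is immaterial on $[1-T^{-1},1]$. You also sum the radii over blocks of length $H$ with Cauchy--Schwarz.

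Your block count is the more reliable of the two. \cref{th:discount-sum} drops the indicator $\mathbbm{1}\{k_t=k\}$ as though $N_t^k(\beta)$ were nondecreasing, but it decays whenever $k$ is not chosen. Two alternating arms with $T\gg H$ already violate that lemma. Your $\tilde O(T\sqrt{K/H})$ is valid, and it is exactly the term that $H(\beta)=\eta^{-2/3}K^{1/3}$ balances against the bias $\tilde O(\eta TH)$. So the stated rate stands.

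Two details in your sketch need the care that the paper's threshold $A$ supplies.

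First, the index uses $N_{t-1}^{k_t}(\beta)$, which excludes the current pull. At an arm's first pull in a block, the radius can therefore be arbitrarily large, which breaks your per-block sum as written. Bound each round by $\min\{C,\,2r_{t-1}^{k_t}+2B\}$ and pay $C$ per arm per block. This adds $O(KT/H)=O(\eta^{2/3}K^{2/3}T)$, which is dominated since $\eta K\le 1$. It is precisely the paper's low-count term.

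Second, with $D'\approx H\ln H$ the stale part of the bias is only bounded by $C\min\{1,1/N_{t-1}^k(\beta)\}$, since $\beta^{D'}\le 1/H$. This is not negligible for moderate $N$, and it matters for $k_t^\star$, whose optimism you need. It is, however, at most $r_{t-1}^k/2$ because $2\alpha\ln H\ge 1$, so it can be absorbed into the radius alongside the noise. Alternatively, take $D'\approx H\ln\bigl(C/(\eta G^2)\bigr)$, so the bias is $\tilde O(\eta H)$ whenever $N_{t-1}^k(\beta)\ge 1$. For $N_{t-1}^k(\beta)<1$ the radius alone exceeds $C$, and optimism holds trivially.

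With these repairs your per-round bound holds on the good event and the rest of your accounting goes through.
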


\begin{remark}
    Note that when $\gamma=1$, $R_b^1(\operatorname{DUCB})$ matches the standard regret of slowly-varying non-stationary bandits by~\citet{krishnamurthy_slowly-varying_2025}.
    Alternatively, let $V:=O(\eta T)$ be the \textit{total variation budget} of the sampling environment over $T$ steps. Then~\cref{th:bandit-regret} can be rewritten as $R_b^1(\operatorname{DUCB})\le\tilde{O}(K^\frac{1}{3}T^\frac{2}{3}V^\frac{1}{3})$, matching the result of~\citet{besbes_stochastic_2014}.
\end{remark}

We provide a proof outline of~\cref{th:bandit-regret}.
While standard DUCB analysis focuses on abruptly changing environments~\cite{kivinen_upper-confidence_2011}, we adapt the proof to our slowly-varying environment based on the following decomposition.
Denote $k_t^\star:=\argmax_k\mu^k(\theta_t)$. Let $\tilde{\mu}_t^k:=\mathbb{E}[\hat{\mu}_t^k|\mathcal{A}_t]$ be the expected EMA estimated loss of the source $k$ at step $t$.
Then by~\cref{def:optimization-bandit-regret}, we have $R_b^\gamma(\mathcal{A}_T)=\sum_{t=1}^T\gamma^{T-t}(\mu^{k_t^\star}(\theta_t)-\mu^{k_t}(\theta_t))$, and for any $t\in[T]$,
\begin{equation}
    \mu^{k_t^\star}(\theta_t) - \mu^{k_t}(\theta_t)
    = \underbrace{\bigl( \tilde\mu_{t-1}^{k_t^\star} - \tilde\mu_{t-1}^{k_t} \bigr)}_{\text{Term 1}}
    - \underbrace{\bigl( \tilde\mu_{t-1}^{k_t^\star} - \mu^{k_t^\star}(\theta_t) \bigr)}_{\text{Term 2}}
    + \underbrace{\bigl( \tilde\mu_{t-1}^{k_t} - \mu^{k_t}(\theta_t) \bigr)}_{\text{Term 3}}.
    \label{eq:loss-decomposition}
\end{equation}
We prove in~\cref{th:bandit-gap-ducb} in Appendix~\ref{sec:proofs-gap} that the DUCB update rule guarantees that Term~1 is bounded by $O(r_{t-1}^k)$ with \textit{high probability}, where $r_{t-1}^k$ is the confidence radius defined in~\cref{eq:ducb}. Terms 2 and 3 reflect the bias of the EMA estimated loss \wrt the true loss, which are shown to be bounded by $\tilde{O}(\eta G^2H(\beta))$ in~\cref{th:ema-bias} in Appendix~\ref{sec:proofs-bias}. The complete proof of~\cref{th:bandit-regret} is provided in Appendix~\ref{sec:proofs-bandit}.
Now we can prove~\cref{th:minimax-regret}.

\begin{proof}[Proof of~\cref{th:minimax-regret}]
    Recall that the optimization regret in~\cref{th:optim-regret} is sublinear only when $\eta=O(T^\xi)$ for some $\xi\in(-1,0)$, which satisfies the condition of~\cref{th:bandit-regret}.
    Then by~\cref{def:minimax-regret}, and using~\cref{th:optim-regret} and~\cref{th:bandit-regret}, we have
    \begin{equation}
        \mathbb{E} \bigl[ R^{\rho,\gamma}(\operatorname{DUCB-OGD}_T) \bigr] \le O \bigl( \eta^{-1} \bigr) + O \bigl( \eta T \bigr) + \tilde O \bigl( \eta^\frac{1}{3} K^\frac{1}{3} T \bigr).
    \end{equation}
    The RHS is minimized (order-wise) by taking $\eta=K^{-{1/4}}T^{-{3/4}}$, which concludes the proof.
\end{proof}

Further note that when $\rho=\gamma$, the average dynamic minimax regret $R^{\rho,\gamma}/T$ gives the minimax optimality gap measured by the EMA estimated worst-case loss. Then the following result is a direct consequence of~\cref{th:minimax-regret}.

\begin{corollary}
    Under the conditions of~\cref{th:minimax-regret}, the expected minimax optimality gap at the EMA-weighted iterate of $\operatorname{DUCB-OGD}_T$, for any discount factor $\gamma\in[1-T^{-1},1]$, satisfies
    \begin{equation}
        \mathbb{E} \left[ \max_{k\in[K]} ~ \mu^k(\tilde{\theta}_T^\gamma) \right] - \max_{k\in[K]} ~ \mu^k(\theta^\star) \le \tilde{O} \bigl( K^\frac{1}{4} T^{-\frac{1}{4}} \bigr),
    \end{equation}
    where $\tilde{\theta}_T^\gamma:=\sum_{t=1}^T\gamma^{T-t}\theta_t/N_T(\gamma)$.
    \label{th:ema-convergence}
\end{corollary}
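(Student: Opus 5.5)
The plan is to combine convexity (Jensen's inequality) with the regret bound of \cref{th:minimax-regret}, taking $\rho=\gamma$. Since each $\mu^k$ is convex, the pointwise maximum $F(\theta):=\max_{k\in[K]}\mu^k(\theta)$ is also convex. The weights $w_t:=\gamma^{T-t}/N_T(\gamma)$ are nonnegative and sum to one. Jensen's inequality therefore gives
\begin{equation*}
    F(\tilde\theta_T^\gamma) - F(\theta^\star) \le \sum_{t=1}^T w_t \bigl( F(\theta_t) - F(\theta^\star) \bigr) = \frac{1}{N_T(\gamma)} \sum_{t=1}^T \gamma^{T-t} \Bigl( \max_k \mu^k(\theta_t) - \max_k \mu^k(\theta^\star) \Bigr).
\end{equation*}
The iterate $\tilde\theta_T^\gamma$ lies in the convex set $\mathbf{\Theta}^\prime$ because every $\theta_t$ does, so Assumption~\ref{as:loss} still applies there.

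Next I bound the discounted dynamic regret on the right by a discounted analogue of \cref{th:regret-decomposition}. For each $t$,
\begin{equation*}
    \max_k \mu^k(\theta_t) - \max_k \mu^k(\theta^\star) = \Bigl( \max_k \mu^k(\theta_t) - \mu^{k_t}(\theta_t) \Bigr) + \Bigl( \mu^{k_t}(\theta_t) - \max_k \mu^k(\theta^\star) \Bigr).
\end{equation*}
The last term is at most $\mu^{k_t}(\theta_t)-\mu^{k_t}(\theta^\star)$, since $\max_k\mu^k(\theta^\star)\ge\mu^{k_t}(\theta^\star)$. Multiplying by $\gamma^{T-t}$ and summing gives
\begin{equation*}
    \sum_t \gamma^{T-t}\bigl(F(\theta_t)-F(\theta^\star)\bigr)\le R_o^\gamma(\mathcal{A}_T)+R_b^\gamma(\mathcal{A}_T).
\end{equation*}
Dividing by $N_T(\gamma)$ yields
\begin{equation*}
    F(\tilde\theta_T^\gamma)-F(\theta^\star)\le \frac{1}{T}\Bigl(\frac{T}{N_T(\gamma)}R_o^\gamma+\frac{T}{N_T(\gamma)}R_b^\gamma\Bigr)=\frac{1}{T}R^{\gamma,\gamma}(\mathcal{A}_T).
\end{equation*}

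Finally, I take expectations. The minimax value $\max_k\mu^k(\theta^\star)$ is deterministic, so the left side becomes exactly the quantity in the corollary. The right side is $\mathbb{E}[R^{\gamma,\gamma}]/T$. By \cref{th:minimax-regret} with $\rho=\gamma\in[1-T^{-1},1]$, this is at most $\tilde O(K^{1/4}T^{3/4})/T=\tilde O(K^{1/4}T^{-1/4})$.

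I expect no serious obstacle, but two points need checking. The first is that Jensen is applied to $F$ itself rather than to each $\mu^k$ separately; this is what lets the worst source vary with $t$. The second is that \cref{th:minimax-regret} bounds exactly $R^{\rho,\gamma}$ with both terms normalized by their own $N_T$, which for $\rho=\gamma$ share a common factor. The remaining subtle point is that the expectation of the regret, not the regret pathwise, is what is controlled. Since Jensen holds pathwise, taking expectations afterward is legitimate.
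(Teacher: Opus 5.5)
Your proof is correct and follows the paper's route. The paper likewise sets $\rho=\gamma$, notes that $R^{\gamma,\gamma}/T=(R_o^\gamma+R_b^\gamma)/N_T(\gamma)$ upper-bounds the EMA-weighted worst-case loss gap (your discounted analogue of \cref{th:regret-decomposition}), and then invokes \cref{th:minimax-regret}; you additionally make explicit the Jensen step on the convex function $\max_k\mu^k$, which the paper leaves implicit when passing from the weighted average of $\max_k\mu^k(\theta_t)$ to its value at $\tilde\theta_T^\gamma$.
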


Specifically when $\rho=\gamma=1$, the results in~\cref{th:optim-regret,th:bandit-regret} are shown to be optimal up to logarithmic factors. 
This follows from the lower bounds for the optimization regret and the dynamic bandit regret, by~\citet{hazan_introduction_2016} and~\citet{krishnamurthy_slowly-varying_2025} respectively, based on constructing a hard instance where the loss landscape is non-smooth, and the worst-performing data source alters periodically over time.
Then it immediately follows that, in the undiscounted setting, the bound in~\cref{th:minimax-regret} is also optimal.

\begin{theorem}[Lower Bound]
    For every admissible sampler-optimizer algorithm $\mathcal{A}_T$, there exsits a problem instance for which the undiscounted regret satisfies $\mathbb{E}[R^{1,1}(\mathcal{A}_T)]\ge\tilde{\Omega}(K^\frac{1}{4}T^\frac{3}{4})$.
    \label{th:lower-bound}
\end{theorem}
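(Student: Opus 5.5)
Since $R^{1,1}(\mathcal{A}_T)=R_o^1(\mathcal{A}_T)+R_b^1(\mathcal{A}_T)$, with both terms nonnegative in expectation on the instances we build, it suffices to construct one instance for which the sum is $\tilde\Omega(K^{1/4}T^{3/4})$ for every admissible sampler-optimizer pair. Note that the upper-bound rate comes from balancing $\eta^{-1}$, $\eta T$ and $\eta^{1/3}K^{1/3}T$. The lower bound should therefore reproduce this tradeoff. A hard instance must penalize a slow optimizer through $R_o^1$ and a fast-moving optimizer through $R_b^1$, because fast motion forces the worst source to switch quickly. We take this coupling to be part of the admissible class: the sampler sees only the instantaneous mini-batch loss of $k_t$, and the optimizer's per-step drift of the losses is governed by its step size $\eta$ under \cref{as:loss}.

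\textbf{Construction.} We combine two gadgets in orthogonal coordinates of $\theta$. The first is the standard non-smooth convex OGD lower-bound instance of \citet{hazan_introduction_2016}, placed on one coordinate block and shared by all sources. For any algorithm whose per-step movement is at most $\eta G$, it forces $\mathbb{E}[R_o^1]\ge\Omega(\min\{\eta^{-1},\sqrt T\})$: either the iterates cannot travel the distance $D$ to $\theta^\star$ fast enough, giving $\Omega(D/(\eta G))$ suboptimal steps, or they oscillate. The second gadget lives on the remaining coordinates. There, the source-wise losses $\mu^k(\theta_t)$ along the trajectory reproduce the slowly-varying $K$-armed bandit hard instance of \citet{krishnamurthy_slowly-varying_2025}. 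Time is split into blocks of length $L$. In each block a uniformly random source $k$ receives a loss bump of height $\epsilon\approx\eta L$, which is the largest change allowed by the Lipschitz drift. The bumps are Bernoulli-noised so that mini-batch observations carry only $O(\epsilon^2)$ information per pull. The gadget is placed so that $\theta^\star$ is unaffected, i.e., the bumps average out under the max at $\theta^\star$.

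\textbf{Key steps.} First, by a KL/Le Cam argument within each block, identifying the worst source requires $\Omega(K/\epsilon^2)$ pulls. Choosing $L\asymp K/\epsilon^2$ together with $\epsilon\asymp\eta L$ gives $\epsilon\asymp(\eta K)^{1/3}$, so each block contributes $\Omega(\epsilon L)$ bandit regret. Summing over blocks yields $\mathbb{E}[R_b^1]\ge\tilde\Omega(\eta^{1/3}K^{1/3}T)$, matching \cref{th:bandit-regret}. Second, we combine the two gadgets. Since they use orthogonal coordinates and each source's loss is a sum of the two parts, the regrets add:
\[
\mathbb{E}[R^{1,1}]\ge\tilde\Omega\bigl(\eta^{-1}+\eta^{1/3}K^{1/3}T\bigr).
\]
Third, we minimize this over $\eta$. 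The minimum is attained at $\eta\asymp K^{-1/4}T^{-3/4}$, and the minimal value is $K^{1/4}T^{3/4}$. Because the bound holds for every $\eta$, it holds for every algorithm.

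\textbf{Main obstacle.} The delicate step is making the bandit gadget's drift genuinely controlled by the optimizer's own updates rather than by an oblivious adversary. If the adversary could change losses freely, the sampler lower bound would not depend on $\eta$ and the tradeoff would collapse. We would therefore first try to define the bump schedule as a function of the iterate path along an auxiliary coordinate that the shared OGD gadget forces to move at speed at least $\eta$. Progress on $R_o^1$ then necessarily induces worst-source switches at rate $\eta$. We would also need to verify convexity of each $\mu^k$ and that the max at $\theta^\star$ is not lowered by the bumps.
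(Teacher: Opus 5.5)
\textbf{The final step fails, and the statement cannot be proved in this generality.} The failing step is ``because the bound holds for every $\eta$, it holds for every algorithm.'' It treats every admissible sampler-optimizer as constant-step OGD. Under that assumption a single number $\eta$ does two jobs: it limits how fast the iterates can approach $\theta^\star$ (your $\eta^{-1}$ term), and it fixes the drift that the bandit gadget feeds on (your $\eta^{1/3}K^{1/3}T$ term). A general algorithm can decouple the two by moving fast and then freezing. Once $\theta_t$ stops moving, the sampler faces a stationary problem and no drift-driven term is forced.

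This is more than a hole in the argument: explore-then-commit beats the claimed rate on every convex instance satisfying the assumptions, which is the class your construction targets.
\begin{itemize}
\item For $T_0$ steps, run the one-source-per-round stochastic GDRO method of \citet{zhang_stochastic_2023}. Its sampler is Exp3-type and sees only the observed mini-batch loss; its optimizer is projected SGD on the selected source's stochastic gradient.
\item Then freeze at the averaged iterate $\bar\theta$ and draw $k_t\sim\bar q$, the averaged sampling distribution.
\item Each later step costs, in expectation, $\max_k\mu^k(\bar\theta)-\sum_k\bar q_k\mu^k(\theta^\star)$. This is at most the duality gap of $(\bar\theta,\bar q)$, namely $\tilde{O}(\sqrt{K/T_0})$.
\item Hence $\mathbb{E}[R^{1,1}]\le CT_0+\tilde{O}(T\sqrt{K/T_0})=\tilde{O}(K^{1/3}T^{2/3})$ for $T_0\asymp K^{1/3}T^{2/3}$. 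This is a factor $(T/K)^{1/12}$ below $K^{1/4}T^{3/4}$.
\end{itemize}
So no choice of instance can prove the statement for all admissible algorithms. Your argument could at most give a lower bound for fixed-step OGD optimizers paired with arbitrary samplers, and the theorem would have to be restricted to that class. The paper's own justification cites the lower bounds of \citet{hazan_introduction_2016} and \citet{krishnamurthy_slowly-varying_2025} and optimizes over $\eta$. That is exactly your outline, and it shares this gap.

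There is also a slip on the optimization side. The OGD-gadget bound you state, $\Omega(\min\{\eta^{-1},\sqrt T\})$, is capped at $\sqrt T$. Plugged into your minimization, it yields only $\Omega(\sqrt T)$, obtained by letting $\eta\to0$. The travel-time argument actually gives $\Omega(\min\{\eta^{-1},T\})$, and only for algorithms that move at most $\eta G$ per step throughout.

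\textbf{The bandit term is not established even for fixed-step OGD.} You flag the endogenous-drift problem yourself: the cited slowly-varying lower bound is against an oblivious adversary choosing the mean sequence, whereas here the $\mu^k$ are fixed functions and the algorithm chooses the path. Two further points break the sketch of $\Omega(\eta^{1/3}K^{1/3}T)$.

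First, your KL count charges only the loss values, but the sampler observes the trajectory $\mathcal{A}_{t-1}$ and hence the stochastic gradients. For $\mu^k(\theta_t)$ to rise by $\epsilon\asymp\eta G^2L$ during a block in which $\theta$ moves at most $\eta GL$, the directional derivative of source $k$ along the path must be of order $G$ on a constant fraction of the path. That is comparable to the gradient noise. A single pull of the bumped source then carries constant rather than $O(\epsilon^2)$ information, so the $\Omega(K/\epsilon^2)$ identification cost disappears. Shrinking $\epsilon$ to hide the gradient weakens the rate.

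Second, convexity is not a check to do at the end. Along any straight stretch of trajectory, each $\mu^k$ is convex in arc length, with derivative nondecreasing and bounded by $G$. Every rise-and-fall of one source relative to the others therefore consumes a derivative swing of order $\epsilon/(\eta GL)\asymp G$, out of a total budget of $O(GK)$. At maximal drift this permits only $O(K)$ worst-source switches, not $T/L$. The real content of such a lower bound would be forcing the trajectory to keep turning into fresh directions, independently of the sampler, and that construction is missing.
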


\subsection{Implications}

We highlight two implications following the theoretical analysis.

\textbf{Comparison to Pseudo-Regret GDRO.}
Other GDRO frameworks proposed by~\citet{zhang_stochastic_2023} and~\citet{soma_near-optimal_2025} adopt the notion of \textit{pseudo regret}, defined as $\tilde{R}(\mathcal{A}_T):=\max_k\sum_{t=1}^T\mu^k(\theta_t)-T\max_k\mu^k(\theta^\star)$, which can be bounded by $\tilde{O}(K^{1/2}T^{1/2})$. The pseudo regret is fundamentally easier to analyze than our dynamic regret in~\cref{def:undiscount-minimax-regret}, since it always holds that $\tilde{R}\le R$ (and $R\le R^{1,1}$ by~\cref{th:regret-decomposition}).
In real-world learning problems such as LLM post-training, the worst-case data source may vary as the training progresses, and dynamic regret captures this by tracking the gap \wrt the worst-case loss in each step (we provide an example in~\cref{sec:expr-compare}).
Moreover, pseudo regret controls only the uniform time-average of algorithm performance, while our dynamic-regret formulation yields guarantees for exponentially weighted trajectory averages emphasizing on later iterates as shown in~\cref{th:ema-convergence}. This better matches the practical deep learning workflows, where training dynamics are often monitored through EMA-style statistics and final checkpoints are selected from the later stage of model training.

\textbf{Scope of the Lower Bound.}
We further note that \cref{th:lower-bound} establishes optimality for the undiscounted objective under the instantaneous mini-batch-only feedback oracle, and does not imply optimality for discounted objectives or when richer feedback is available.
Indeed, \citet{zang_adaptive_2025} re-evaluate historically collected samples under the current model and obtain a dynamic-regret rate of $\tilde{O}(K^{1/2}T^{1/2})$. Yet this stronger oracle requires expensive model evaluations over a linearly growing replay-buffer, and a generalization step from replay-based empirical losses to population losses.

\section{Experiments}
\label{sec:expr}

\begin{table}[!t]
  \caption{Main results of different sampling algorithms on various LLM post-training objectives across different model scales and families. The primary metric is \textit{minimum} source-wise accuracy.}
  \label{tab:results-acc}
  \centering
  \begin{tabular}{lcccccc}
    \toprule
    \multirow{2}{*}[-0.5ex]{\textbf{Objective}} & \multicolumn{2}{c}{\textbf{Dataset}} & \multirow{2}{*}[-0.5ex]{\textbf{Model}} & \multirow{2}{*}[-0.5ex]{\textbf{Sampler}} & \multicolumn{2}{c}{\textbf{Accuracy}} \\
    \cmidrule(lr){2-3} \cmidrule(lr){6-7} & \textbf{Train} & \textbf{Test} & & & \textbf{\textit{Min}} & \textbf{Avg} \\
    \midrule
    \multirow{6}{*}{SFT} &
        \multirow{6}{*}{\makecell{Tulu-v2\\(subset)}} & \multirow{6}{*}{\makecell{GSM8k\\+\\IFEval}} &
            \multirow{3}{*}{\makecell{Qwen2.5-7B\\(Base)}} &
                Uniform & 27.15 & 39.36 \\ & & & &
                Exp3-style$^*$ & 27.73 & \textbf{40.72} \\ & & & &
                DUCB (Ours) & \textbf{28.52} & 37.89 \\
            \cmidrule(lr){4-7} & & &
            \multirow{3}{*}{\makecell{Llama3.1-8B\\(Base)}} &
                Uniform & 22.07 & 25.98 \\ & & & &
                Exp3-style$^*$ & 21.48 & \textbf{27.93} \\ & & & &
                DUCB (Ours) & \textbf{24.22} & 25.29 \\
    \midrule
    \multirow{8}{*}{DPO} &
        \multicolumn{2}{c}{\multirow{8}{*}{\makecell{HelpSteer3\\(subset)}}} &
            \multirow{4}{*}{\makecell{Qwen2.5-3B\\(Instruct)}} &
                Uniform & 76.76 & 80.53 \\ & & & &
                AMA-R$^1$ & 78.19 & 79.45 \\ & & & &
                AMA-S$^1$ & \textbf{80.25} & \textbf{82.35} \\ & & & &
                DUCB (Ours) & \textbf{79.33} & 81.59 \\
            \cmidrule(lr){4-7} & & &
            \multirow{4}{*}{\makecell{Llama3.2-3B\\(Instruct)}} &
                Uniform & 73.79 & 81.06 \\ & & & &
                AMA-R$^1$ & 74.22 & 78.00 \\ & & & &
                AMA-S$^1$ & \textbf{77.37} & 80.48 \\ & & & &
                DUCB (Ours) & \textbf{77.42} & \textbf{81.88} \\
    \midrule
    \multirow{8}{*}{GRPO} &
        \multicolumn{2}{c}{\multirow{4}{*}{Reasoning}} &
            \multirow{4}{*}{\makecell{Qwen2.5-1.5B\\(Instruct)}} &
                Uniform & 18.89 & 28.98 \\ & & & &
                Exp3-style$^*$ & 20.91 & 28.78 \\ & & & &
                DUMP$^2$ & 19.45 & \textbf{29.70} \\ & & & &
                DUCB (Ours) & \textbf{21.81} & 28.74 \\
        \cmidrule(lr){2-7} &
        \multicolumn{2}{c}{\multirow{4}{*}{CommonsenseQA}} &
            \multirow{4}{*}{\makecell{GLM4-9B\\(Instruct)}} &
                Uniform & 77.91 & 84.08 \\ & & & &
                Exp3-style$^*$ & 82.56 & 85.55 \\ & & & &
                DUMP$^2$ & 81.40 & 85.55 \\ & & & &
                DUCB (Ours) & \textbf{84.88} & \textbf{87.74} \\
    \bottomrule
    \addlinespace[2pt]
    \multicolumn{7}{@{}l}{\footnotesize \makecell[l]{
        $^1$\citet{corrado_automixalign_2025}, requires training $K$ specialist models;
        $^2$\citet{wang_dump_2025}.\\
        $^*$Sampler-only proxies of~\citet{shin_dynamixsft_2025,ramesh_multi-task_2026}.
    }} \\
  \end{tabular}
\end{table}

\subsection{Objectives and Datasets}
\label{sec:expr-objective}

We evaluate DUCB-OGD across three LLM post-training objectives:
\begin{itemize}[leftmargin=2em,topsep=0em,itemsep=0em]
    \item \textit{SFT:}
    We train base models for 32k samples on the \textit{Math} and \textit{Orca} sources of Tulu-v2~\citep{ivison_camels_2023} using TRL~\citep{von_werra_trl_2020}. We evaluate the minimum of GSM8k accuracy~\citep{cobbe_training_2021} and IFEval prompt accuracy~\citep{zhou_instruction-following_2023}.
    \item \textit{DPO:}
    We train instruction-tuned models for 16k samples on the \textit{Code}, \textit{Multilingual}, and \textit{Stem} sources of HelpSteer3~\citep{wang_helpsteer3_2025,wang_helpsteer3-preference_2025,wang_rlbff_2025} using TRL. The primary metric is minimum source-wise test preference accuracy.
    \item \textit{GRPO:}
    We consider (i) \textit{Reasoning}, comprising \textit{Arc}, \textit{Countdown}, and \textit{Zebra} as three sources~\citep{chollet_measure_2019,katz_seemingly_2025,lin_zebralogic_2025}, with a 10k-sample budget using Verl~\citep{sheng_hybridflow_2024}; and (ii) CommonsenseQA~\citep{talmor_commonsenseqa_2019}, partitioned into \textit{Social}, \textit{Physical}, \textit{Causal-Temporal}, and \textit{General} sources, with a 6.4k-sample budget using TRL. We report the minimum source-wise test accuracy.
\end{itemize}
For each objective, we train models of different sizes from the Qwen2.5~\citep{yang_qwen2_2024,team_qwen25_2024}, Qwen3~\citep{team_qwen3_2025}, Llama3~\citep{aimeta_llama_2024}, and GLM4~\citep{glm_chatglm_2024} families using different sampling algorithms.

\subsection{Robustness Across Post-Training Objectives and Model Families}
\label{sec:expr-results}

\Cref{tab:results-acc} reports minimum source-wise accuracy as the primary metric and unweighted average accuracy as a reference. All metrics are evaluated at the final training checkpoint. Additional models and metrics are reported in Appendix~\ref{sec:results}.

For SFT, we evaluate an adapted Exp3-style proxy based on DynamixSFT~\citep{shin_dynamixsft_2025}. A faithful implementation requires virtual parameter updates to estimate anticipated source-wise loss decreases and is not directly compatible with our training pipeline, and our proxy instead uses observed source-wise loss decreases. DUCB achieves the highest minimum accuracy for both models, whereas the Exp3-style sampler obtains higher average accuracy, illustrating the robustness-average performance trade-off.

For DPO, we compare against faithful implementations of AutoMixAlign~\cite{corrado_automixalign_2025}, including both AMA-R and AMA-S. AutoMixAlign uses source-specific specialist models to compute excess-loss-based sampling signals, which requires training $K$ additional specialists before the generalist model. DUCB achieves competitive or better worst-source performance without this auxiliary training cost.
These results further support the robustness of DUCB under a substantially lighter feedback and computational requirement.
Specifically, across three Qwen2.5-3B runs, DUCB obtains $79.33\pm0.62$ minimum accuracy versus $76.76\pm1.65$ for Uniform, corresponding to a paired gain of $2.57\pm1.04$ percentage points.

For GRPO, we compare against an Exp3-style sampler-only proxy of MT-GRPO~\cite{ramesh_multi-task_2026} and a faithful implementation of DUMP~\cite{wang_dump_2025} which adopts a switching-window UCB sampler. We use the source-wise training reward with improvement correction and the training advantage as the sampling signal, respectively. We omit additional components of MT-GRPO such as rollout filtering and only implement the sampler in order to maintain uniformity with other samplers and compatibility between the TRL and Verl libraries.
The results show clear margins of DUCB over other uniform and adaptive sampling baselines on the \textit{minimum} accuracies across both datasets.

Across the six main settings, DUCB achieves the highest minimum accuracy in five and remains competitive with AMA-S in the remaining DPO setting.
Detailed discussions of the results are also provided in Appendix~\ref{sec:results}.

\subsection{Computation Efficiency and Dynamic Worst-Source Adaptation}
\label{sec:expr-compare}

We specifically discuss the following two aspects of the proposed DUCB-OGD algorithm compared with other evaluated baselines.

\paragraph{Training Overhead.}
We measure per-step runtime across three Qwen2.5-3B DPO runs, with full results reported in Appendix~\ref{sec:results-overhead}. DUCB requires $21.94\pm1.14$ seconds per step, compared with $21.90\pm1.29$ for Uniform; the difference is small relative to the observed run-to-run variation. Its sampler adds no auxiliary model evaluations and maintains only $O(K)$ scalar state.
By comparison, AMA additionally trains $K$ source-specific specialists before training the generalist model. Thus, DUCB introduces little measurable online overhead beyond the underlying LLM training step and avoids AMA's auxiliary specialist-training cost.

\paragraph{Adaptation to Evolving Source Difficulty.}
We further compare DUCB's instantaneous tracking with the cumulative-reward updates of Exp3 on the \textit{Arc} and \textit{Countdown} sources of the Reasoning task in~\cref{fig:compare}. Arc initially has higher reward than Countdown, but this ordering later reverses, making Arc the new worst-performing source. DUCB shifts its sampling priority toward Arc shortly after the crossover, whereas Exp3 continues to favor Countdown until later observations outweigh its accumulated history.
This representative example, together with the switching frequencies in~\cref{tab:prelim-objective}, illustrates DUCB's ability to adapt to changing source difficulty. Because instantaneous tracking is also more sensitive to short-term reward variation, we use DEMA smoothing and forced exploration as practical stabilizers, detailed in Appendix~\ref{sec:impl}.

\begin{figure}[t]
    \centering
    \begin{subfigure}[t]{0.28\columnwidth}
        \centering
        \includegraphics[width=\linewidth]{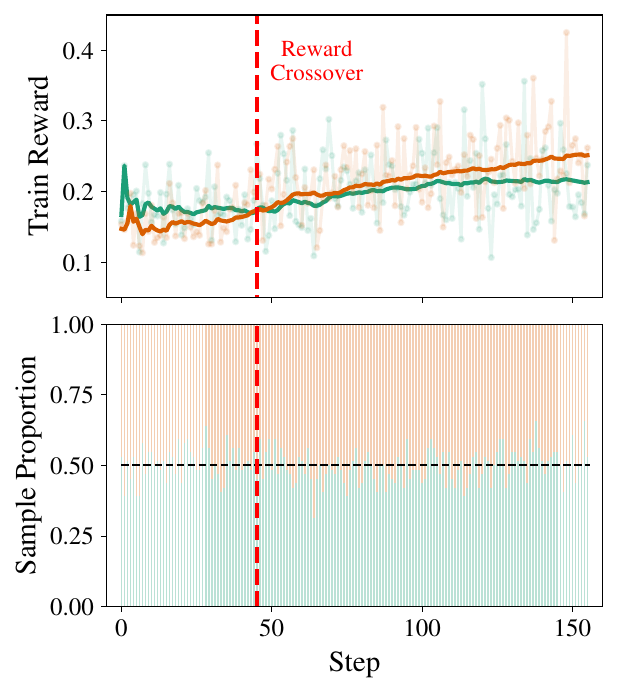}
        \caption{Uniform}
        \label{fig:compare-uniform}
    \end{subfigure}
    \hfill
    \begin{subfigure}[t]{0.28\columnwidth}
        \centering
        \includegraphics[width=\linewidth]{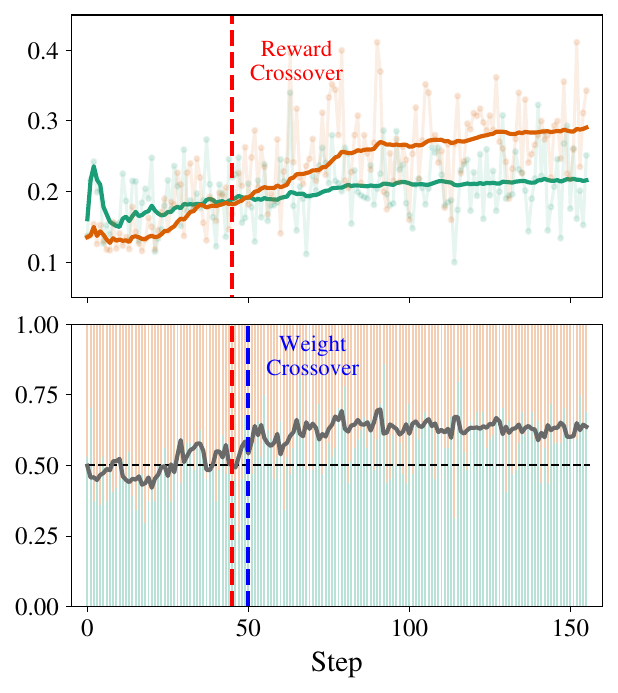}
        \caption{DUCB}
        \label{fig:compare-ducb}
    \end{subfigure}
    \hfill
    \begin{subfigure}[t]{0.28\columnwidth}
        \centering
        \includegraphics[width=\linewidth]{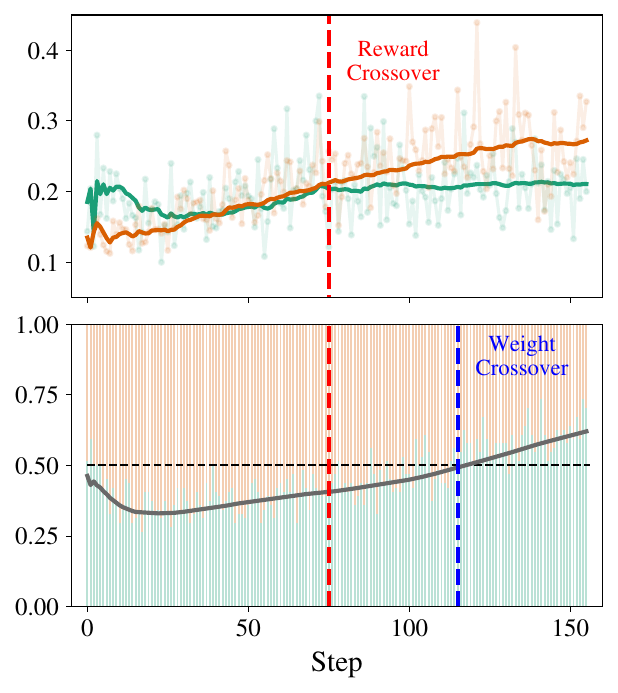}
        \caption{Exp3}
        \label{fig:compare-exp3}
    \end{subfigure}
    \begin{subfigure}[t]{0.14\columnwidth}
        \centering
        \includegraphics[width=\linewidth]{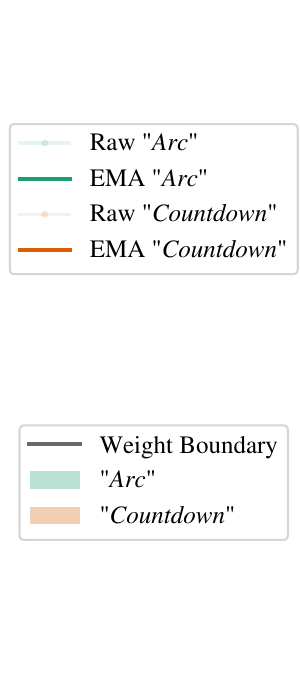}
    \end{subfigure}
    \caption{GRPO training rewards and sampling weights for uniform sampling, DUCB (guided by dynamic regret), and Exp3 (guided by pseudo regret) on two data sources (\textit{Arc} and \textit{Countdown}).}
    \label{fig:compare}
\end{figure}

\section{Conclusion}
\label{sec:conclusion}

We study dynamic minimax regret for robust multi-source LLM post-training under instantaneous mini-batch-only bandit feedback. We propose DUCB-OGD, which couples discounted UCB sampling with online gradient descent and requires neither historical re-evaluation nor auxiliary model evaluations.
Across SFT, DPO, and GRPO, DUCB-OGD achieves the highest minimum source-wise accuracy in the majority of the settings and remains competitive with contemporary baselines but avoids expensive model re-evaluations.
Frequent worst-source switching is observed across all three objectives, while DUCB's per-step runtime remains close to Uniform, supporting the relevance and practical efficiency of instantaneous dynamic tracking.

It is shown that DUCB-OGD is most applicable when worst-source robustness is the primary objective, source difficulty changes during training, and only instantaneous mini-batch feedback is available or affordable due to practical constraints. Trade-offs may occur between worst-source and average performances. The guarantees involving optimization also do not cover non-convex loss landscapes.
Our comparisons include both faithful baseline implementations and explicitly adapted sampler-only proxies, which avoid objective-specific training modifications. The evaluation is limited to 0.5B-9B models and unimodal datasets with relatively few sources. The performance on larger models and datasets with more complicated distributional characteristics remains to be studied.

\subsubsection*{Acknowledgments}
This work was supported in part by NSF grant CNS-2450567 and EEC-2133516, NSF grant CNS-2038984 and corresponding support from the Federal Highway Administration (FHA), MediaTek Inc USA, NSF grant CNS-2148128 and by funds from federal agency and industry partners as specified in the Resilient \& Intelligent NextG Systems (RINGS) program, and ARO grant W911NF2210031.
We gratefully acknowledge use of the research computing resources of the Empire AI Consortium, Inc, with support from Empire State Development of the State of New York, the Simons Foundation, and the Secunda Family Foundation.

\subsection*{AI Use Statement}

In this work, we did not use generative AI tools to generate datasets, develop theoretical models or methods, formulate or prove mathematical claims, design experiments, implement methods, process data, or interpret experimental results.
We used generative AI tools only to polish writing and improve readability. We reviewed the accuracy and consistency of all AI-assisted work, including revised text, equations, and descriptions of experimental results.
We take responsibility for the final content of this work, including all text, claims, and artifacts produced with the aid of generative AI.

\subsection*{Reproducibility statement}

We provide the complete DUCB-OGD algorithm and theoretical assumptions in the main text, with full proofs in Appendix~\ref{sec:proofs}. Appendix~\ref{sec:impl} and~\ref{sec:results} document the implementation, practical stabilizers, key training configurations, and hyperparameters.
The datasets, pretrained models, sample budgets, evaluation metrics, final-checkpoint protocol, multi-seed results, and hardware used for runtime measurements are described in~\cref{sec:expr} and Appendix~\ref{sec:results}.
All datasets and model checkpoints are publicly available and cited. We intend to release the training and evaluation code, full configurations, and scripts upon publication.

\bibliography{references}
\bibliographystyle{iclr2027_conference}

\newpage
\appendix
\section*{Appendix}

\section{Related Work}
\label{sec:review}

\paragraph{Distributionally Robust Optimization.}
Distributionally Robust Optimization studies learning algorithms that optimize model performance under worst-case shifts or uncertainty in the data distribution~\cite{hu_does_2018,duchi_learning_2021,jin_non-convex_2021,agarwal_minimax_2022}. 
Based on this formulation, GDRO explicitly minimizes the worst-case loss across predefined distribution groups~\cite{sagawa_distributionally_2020,haghtalab_-demand_2022,ghosal_distributionally_2023,zhang_stochastic_2023}. Recent theoretical work formulates GDRO as a stochastic convex-concave saddle-point problem and studies algorithms that combine data sampling with parameter optimization. For example,  \citet{zhang_stochastic_2023} use Exp3-style sampling ~\cite{auer_finite-time_2002} to track the \textit{fixed} worst-performing data source \textit{in hindsight}, and  bound the minimax optimality gap for the time-averaged model parameters, but does not account for the evolving nature of the training process.
Closest to our setting, \citet{zang_adaptive_2025} propose to track the \textit{adaptive} sequence of worst-performing data sources using UCB-style sampling~\cite{auer_finite-time_2002}. However, their approach requires repeatedly re-evaluating the current model on previously collected samples to maintain up-to-date source-wise loss estimates. This introduces substantial computational overhead and requires intrusive modifications to the standard training pipeline, limiting its practibility and compatibility with modern LLM training libraries.

\paragraph{Non-stationary Bandits.}
Multi-armed bandit algorithms provide a natural framework for adaptive data-source selection. 
Classical stochastic bandits assume fixed reward distributions~\cite{auer_finite-time_2002}, while adversarial bandits allow arbitrary reward sequences~\cite{auer_nonstochastic_2002}. 
Non-stationary bandits interpolate between these extremes by imposing structure on how rewards evolve over time~\cite{kivinen_upper-confidence_2011,besbes_stochastic_2014,krishnamurthy_slowly-varying_2025}. 
Discounted-UCB methods were originally designed for environments with abrupt changes~\cite{kivinen_upper-confidence_2011}, while variation-budget and slowly-varying models provide alternative ways to quantify non-stationarity~\cite{besbes_stochastic_2014,krishnamurthy_slowly-varying_2025}. 
Our setting differs from standard non-stationary bandits because the non-stationarity is endogenous: source-wise losses change because the optimizer updates the model parameters. 
Thus, the sampler and optimizer are coupled. 
We exploit this structure by combining discounted bandit estimation with online gradient descent and analyzing the resulting sampler-optimizer behavior through dynamic minimax regret.

\paragraph{Adaptive Sampling for LLM Training.}
Adaptive sampling has become increasingly important in LLM training and post-training because data sources vary widely in quality, difficulty, and relevance. 
Some methods estimate mixture weights offline using data-value models, filtering scores, or proxy evaluations~\cite{lin_data-efficient_2024,lv_data-efficient_2025,wang_ultra-fineweb_2025,sachdeva_how_2026}. 
Others update sampling probabilities online using training losses, gradient statistics, task difficulty, or model feedback~\cite{shin_dynamixsft_2025,koh_msft_2026,chen_self-evolving_2025,wang_dump_2025,sun_improving_2026,panaganti_group_2026}. 
Recent work on multi-task post-training also studies how to balance heterogeneous reasoning, coding, safety, and instruction-following data so that easier or larger sources do not dominate the training~\cite{li_pike_2026,ramesh_multi-task_2026,wu_imbalanced_2026,panaganti_group_2026}. 
DRO-inspired objectives have also been explored for improving worst-case performance across groups or domains~\cite{oren_distributionally_2019,corrado_automixalign_2025,wang_hbo_2026,shao_baldro_2026}.
However, existing approaches typically optimize static or heuristic objectives and lack a principled dynamic minimax regret formulation that explicitly accounts for the non-stationary nature of LLM training while providing provable theoretical guarantees.

\newpage
\section{Implementation Details}
\label{sec:impl}

\begin{algorithm}
    \caption{Adaptive Sampling for Robust Multi-Source LLM Post-Training.}
    \label{alg:outline}
    \begin{algorithmic}[1]
        \Require Data sources $\{\mathcal{D}^k\}_{k=1}^K$, initial parameter $\theta_1$, horizon $T$
        \State Set $\beta \gets 1 - K^{-{1/2}}T^{-{1/2}}$
        \State Initialize $\hat{\mu}_t^k, N_t^k(\beta) \gets 0$ for all $k\in[K]$
        \For{$t=1,\dots,T$}
            \State If $t\le K$, set $k_t\gets t$; otherwise, generate sampling probabilities
            \[
            p_t \gets \textsc{Sampler}(\{\mathcal{D}^k, \hat{\mu}_{t-1}^k, N_{t-1}^k(\beta)\}_{k=1}^K)
            \]
            \State Sample the mini-batch
            \[
            \mathcal{B}_t \sim \big\{ X \sim \mathcal{D}^{k_{t,i}} : k_{t,i} \sim \operatorname{Cat}(p_t), i = 1,2,\dots \bigr\}
            \]
            \State Parameter update
            \[
            \theta_{t+1} \gets \textsc{Optimizer}(\theta_t, \mathcal{B}_t)
            \]
            \State Update $\hat{\mu}_t^k$ and $N_t^k(\beta)$ by~\cref{eq:ema} for all $k\in[K]$
        \EndFor
        \State \Return $\theta_{T+1}$
    \end{algorithmic}
\end{algorithm}

The training process is outlined in~\cref{alg:outline}.
We further adopt the following techniques during implementation.

\paragraph{Batch Sampling.}
Instead of selecting only one data source in each step, we propose a sampling version of DUCB, defined as
\begin{equation*}
    k_t \gets \argmax_{k\in[K]} ~ \hat{\mu}_{t-1}^k + \phi_{t-1}^k, \quad \phi_{t-1}^k \sim \mathcal{N} \Bigg( 2 r_{t-1}^k, \frac{C^2}{2 N_{t-1}^k(\beta)} \Bigg).
\end{equation*}
Due to the stochasticity of $\phi_{t-1}^k$, the selection rule induces a \textit{non-degenerate} categorical distribution over $[K]$ with probabilities $p_t$ \st $k_t\sim\operatorname{Cat}(p_t)$, as opposed to the \textit{deterministic} rule in~\cref{eq:ducb}. Therefore, every mini-batch may contain a mixture of samples from multiple data sources shown by~\cref{alg:outline}.
This is especially important when using a large batch size, since training on a large amount of samples from a single data source increases the risk of catastrophic forgetting or mode collapse.
We further prove in Appendix~\ref{sec:proofs-sducb} that the bandit regret of the sampling version of DUCB is only a constant factor worse than that of the standard DUCB in~\cref{th:bandit-regret}.
Since $p_t$ does not admit a closed-form expression, we approximate it using the Gauss-Hermite quadrature~\citep{butler_computationally_1982}. At the end of each training step, we update $\hat{\mu}_t^k$ and $N_t^k(\beta)$ using the empirical loss and sample proportion in the mini-batch for each data source.

\paragraph{Double-EMA.}
Traditional EMA is known to have a delayed response to changing inputs due to the discounting mechanism. Thus we employ \textit{double-EMA}~\cite{mulloy_smoothing_1994} to track the training losses in the implementation of all samplers, defined as
\begin{equation*}
    \textsc{DoubleEma}(x) \gets 2 \cdot \textsc{Ema}(x) - \textsc{Ema}(\textsc{Ema}(x)).
\end{equation*}
As shown in~\cref{fig:compare}, the DEMA reward is able to closely track the trend of raw reward while providing sufficient denoising ability.

\paragraph{Enforced Exploration.}
In the early stage of training, the sampling weights of DUCB can be affected by noisy initial observations and experience fluctuations. Inspired by Exp3 (and its variants), we augment the sampling weights by
\begin{equation*}
    p_t^k \gets (1 - \beta^t) p_t^k + \frac{\beta^t}{K}, \quad \forall k \in [K].
\end{equation*}
This enforces uniform exploration across all data sources in the early stage of training, and exponentially decays as the training progresses and becomes more stable.
It is worth noting that the extra regret incurred by the enforced exploration is \textit{at most} $O(H(\beta))$, which is dominated by the standard DUCB regret given the choice of $\beta$ in~\cref{th:bandit-regret}.

\newpage
\section{Proofs}
\label{sec:proofs}

\subsection{Proof of \texorpdfstring{\cref{th:regret-decomposition}}{Optimization Regret}}
\label{sec:proofs-decom}

\begin{proof}[Proof of \cref{th:regret-decomposition}]
    Recall that $k_t^\star=\max_k\mu^k(\theta)$.
    By~\cref{def:optimization-bandit-regret},
    \begin{equation*}
        \begin{aligned}
            R_o^1(\mathcal{A}_T) + R_b^1(\mathcal{A}_T)
            &= \sum_{t=1}^T \Big( \mu^{k_t}(\theta_t) - \mu^{k_t}(\theta^\star) \Big) + \sum_{t=1}^T \Big( \mu^{k_t^\star}(\theta_t) - \mu^{k_t}(\theta_t) \Big) \\
            &= \sum_{t=1}^T \Big( \mu^{k_t^\star}(\theta_t) - \mu^{k_t}(\theta^\star) \Big) \\
            &\ge \sum_{t=1}^T \mu^{k_t^\star}(\theta_t) - T \max_{k\in[K]} ~ \mu^k(\theta^\star) \\
        \end{aligned},
    \end{equation*}
    where the last inequality follows from the fact that $\max_k\mu^k(\theta)\ge\mu^{k_t}(\theta)$ for all $\theta$ and $t$.
    Further recalling~\cref{def:undiscount-minimax-regret} concludes the proof.
\end{proof}

\subsection{Proof of \texorpdfstring{\cref{th:optim-regret}}{Optimization Regret}}
\label{sec:proofs-optim}

The proof follows from standard results for undiscounted and discounted OGD.
Suppose a player of an online game observes a sequence of losses $\{f_t(\theta_t),t=1,\dots,T\}$ for $T$ steps, and updates the parameter using OGD defined in~\cref{eq:ogd}.
Then the following lemmas hold. Note that we defer the choice of $\eta$ in both cases.

\begin{lemma}[Undiscounted OGD~\citep{hazan_introduction_2016}]
    For any $\theta\in\mathbf{\Theta}$, OGD satisfies
    \begin{equation*}
        \sum_{t=1}^T f_t(\theta_t) - \sum_{t=1}^T f_t(\theta) \le \frac{D^2}{\eta} + \eta G^2 T.
    \end{equation*}
    \label{th:optim-regret-undiscount-ogd}
\end{lemma}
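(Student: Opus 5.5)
The plan is the classical potential-function argument for online gradient descent. We compare each iterate to an arbitrary fixed comparator $\theta$ and use the squared distance $\|\theta_t-\theta\|^2$ as the potential. It is natural here to read the lemma with $f_t$ convex, with subgradients bounded by $G$ (which \cref{as:loss} provides), and with iterates and comparator inside the diameter-$D$ domain of \cref{as:domain}. The update is $\theta_{t+1}=\theta_t-\eta g_t$ with $g_t=\nabla f_t(\theta_t)$, and no projection is needed because \cref{as:domain} keeps the iterates in the domain.

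\textbf{Step 1 (linearization).} By convexity of $f_t$, we have $f_t(\theta_t)-f_t(\theta)\le \langle g_t,\theta_t-\theta\rangle$. So it suffices to bound $\sum_t\langle g_t,\theta_t-\theta\rangle$.

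\textbf{Step 2 (one-step expansion).} Expanding the update gives
\begin{equation*}
\|\theta_{t+1}-\theta\|^2=\|\theta_t-\theta\|^2-2\eta\langle g_t,\theta_t-\theta\rangle+\eta^2\|g_t\|^2 .
\end{equation*}
Rearranging,
\begin{equation*}
\langle g_t,\theta_t-\theta\rangle=\frac{\|\theta_t-\theta\|^2-\|\theta_{t+1}-\theta\|^2}{2\eta}+\frac{\eta}{2}\|g_t\|^2 .
\end{equation*}

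\textbf{Step 3 (telescoping).} Summing over $t=1,\dots,T$, the distance terms telescope to at most $\|\theta_1-\theta\|^2/(2\eta)\le D^2/(2\eta)$. The gradient terms contribute at most $\eta G^2T/2$ by the Lipschitz bound $\|g_t\|\le G$. This gives $D^2/(2\eta)+\eta G^2T/2$, which is bounded by the stated $D^2/\eta+\eta G^2T$.

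\textbf{Main obstacle.} The one subtle point is justifying that the comparator $\theta$ and all iterates lie in the region where both the diameter bound $D$ and the gradient bound $G$ hold. The lemma says "for any $\theta\in\mathbf{\Theta}$", but the argument needs $\|\theta_1-\theta\|\le D$ and $\|g_t\|\le G$. I would therefore restrict to comparators in $\mathbf{\Theta}'$ (in particular $\theta^\star$, which is how the lemma is used), invoking \cref{as:domain} for the iterates. If $f_t$ is only almost-everywhere differentiable, I would replace gradients by subgradients; the inequalities in Steps 1 and 2 are unchanged.

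In the stochastic application, $g_t$ is the mini-batch gradient. The lemma is then applied to the linearized losses $\langle g_t,\cdot\rangle$, and \cref{as:iid} gives unbiasedness, $\mathbb{E}[g_t\mid\theta_t]=\nabla\mu^{k_t}(\theta_t)$, when passing back to expectations.
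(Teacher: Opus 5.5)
Your argument is correct and is the standard proof behind the result the paper cites from \citet{hazan_introduction_2016}: linearize by convexity, expand $\|\theta_{t+1}-\theta\|^2$ in one step, and telescope. You run it without projection, as \cref{as:domain} allows, and it yields the sharper $D^2/(2\eta)+\eta G^2T/2$, which implies the stated bound. Your two caveats are well placed and more careful than the paper's own use of the lemma: the comparator must satisfy $\|\theta_1-\theta\|\le D$, which \cref{as:domain} does not guarantee for an arbitrary $\theta\in\mathbf{\Theta}$; and applying the bound to the linear losses $\langle g_t,\cdot\rangle$ avoids needing the mini-batch loss $\hat\mu^{k_t}(\cdot;\mathcal{B}_t)$ to be convex.
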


\begin{lemma}[Discounted OGD~\citep{yang_discounted_2026}]
    Let $\rho\in[0,1)$ be a discount factor.
    For any $\theta\in\mathbf{\Theta}$, OGD satisfies
    \begin{equation*}
        \sum_{t=1}^T \rho^{T-t} f_t(\theta_t) - \sum_{t=1}^T \rho^{T-t} f_t(\theta) \le \Biggl( \frac{(1 - \rho) D^2}{\eta} + \frac{\eta G^2}{2} \Biggr) N_T(\rho).
    \end{equation*}
    \label{th:optim-regret-discount-ogd}
\end{lemma}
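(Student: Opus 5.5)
The plan is the textbook OGD argument with a discount-weighted telescoping sum. Only the weighting of the distance terms differs from the proof of \cref{th:optim-regret-undiscount-ogd}.

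\textbf{Per-step inequality.} Write $g_t$ for the (sub)gradient used in \cref{eq:ogd}, with $\|g_t\|\le G$ by \cref{as:loss}, and fix a comparator $\theta$. Convexity gives $f_t(\theta_t)-f_t(\theta)\le\langle g_t,\theta_t-\theta\rangle$. In our application $f_t=\mu^{k_t}$ and $g_t$ is the mini-batch gradient, which is unbiased by \cref{as:iid}, so this step holds after conditioning on $\mathcal{A}_t$ and taking expectations.

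Let $a_t:=\|\theta_t-\theta\|^2$. Expanding $\theta_{t+1}=\theta_t-\eta g_t$ gives
\[
\langle g_t,\theta_t-\theta\rangle=\frac{a_t-a_{t+1}}{2\eta}+\frac{\eta}{2}\|g_t\|^2\le\frac{a_t-a_{t+1}}{2\eta}+\frac{\eta G^2}{2}.
\]
Multiplying by $\rho^{T-t}$ and summing over $t$, the gradient-norm part contributes exactly $\frac{\eta G^2}{2}N_T(\rho)$. This is the second term of the claimed bound.

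\textbf{Discounted telescoping.} Summation by parts gives
\[
\sum_{t=1}^T\rho^{T-t}(a_t-a_{t+1})=\rho^{T-1}a_1+\sum_{t=2}^T(1-\rho)\rho^{T-t}a_t-a_{T+1}.
\]
Every coefficient is nonnegative. Bounding each $a_t\le D^2$ by \cref{as:domain} and dropping $-a_{T+1}\le0$ gives at most
\[
D^2\bigl(\rho^{T-1}+(1-\rho)(N_T(\rho)-\rho^{T-1})\bigr)=D^2\bigl(\rho^T+(1-\rho)N_T(\rho)\bigr).
\]
The middle $(1-\rho)$-weighted terms thus supply the desired $(1-\rho)D^2N_T(\rho)$. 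The extra initial-distance term $\rho^TD^2$ is the point that needs care.

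\textbf{Main obstacle: the initial-distance term.} This step does not follow automatically. Using $(1-\rho)N_T(\rho)=1-\rho^T$, the leftover $\rho^TD^2/(2\eta)$ can be absorbed into $(1-\rho)D^2N_T(\rho)/(2\eta)$ only when $\rho^T\le1-\rho^T$, i.e.\ $\rho^T\le\tfrac12$. Equivalently, $T\gtrsim H(\rho)\ln2$, which is the regime where discounting genuinely matters.

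The constant in the target absorbs this, since it has $D^2/\eta$ rather than $D^2/(2\eta)$. Combining the three terms then yields $\bigl(\frac{(1-\rho)D^2}{\eta}+\frac{\eta G^2}{2}\bigr)N_T(\rho)$.

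In the complementary regime $\rho^T>\tfrac12$, the lemma as literally stated can fail. For example, as $\rho\to1$ with $T$ fixed, the $D^2$ term in the stated bound vanishes. I would therefore either add the hypothesis $\rho^T\le\tfrac12$, or state the bound with the additive term $\rho^TD^2/(2\eta)$ kept explicitly.

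\textbf{Use in \cref{th:optim-regret}.} The explicit additive term is harmless there. For $\rho\in[1-T^{-1},1]$, normalizing by $T/N_T(\rho)$ and using $N_T(\rho)\ge(1-1/T)^{T-1}T\ge T/e$, the extra term contributes only $O(\eta^{-1})$. This is consistent with \cref{eq:optim-regret}.
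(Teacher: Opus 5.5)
Your argument is correct. The paper gives no proof of this lemma; it imports the bound from \citet{yang_discounted_2026}. Your discounted summation by parts is therefore a self-contained replacement for that citation rather than a rival route.

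You can state your outcome more sharply. Because $(1-\rho)N_T(\rho)=1-\rho^T$, the coefficient $\rho^T+(1-\rho)N_T(\rho)$ equals $1$ exactly, so what you have proved is
\[
\sum_{t=1}^T \rho^{T-t}\bigl(f_t(\theta_t)-f_t(\theta)\bigr)\le \frac{D^2}{2\eta}+\frac{\eta G^2}{2}N_T(\rho)
\]
for every $\rho\in[0,1]$. At $\rho=1$ this is \cref{th:optim-regret-undiscount-ogd} improved by a factor of two. The cited form $\frac{(1-\rho)D^2}{\eta}N_T(\rho)=\frac{(1-\rho^T)D^2}{\eta}$ dominates $\frac{D^2}{2\eta}$ exactly when $\rho^T\le\frac12$, as you found.

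Your doubt about the complementary regime is justified, and the limiting heuristic becomes a concrete counterexample as follows. Take $T=1$, $\mathbf{\Theta}'=[-D,0]$, $f_1(x)=G(x+D)$, $\theta_1=0$, $\eta=D/G$ (so $\theta_2=-D$), and comparator $\theta=-D$. The left side is $GD$. The stated right side is $(1-\rho)GD+\frac{GD}{2}$, which is strictly smaller whenever $\rho>\frac12$, that is, $\rho^T>\frac12$. So your threshold is tight. For any fixed $T$, the stated right side tends to the $D$-free quantity $\eta G^2T/2$ as $\rho\to1$. Repeating the same loss with $\eta=D/(GT)$ violates this too, since the left side tends to $GD(T+1)/2$.

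The regime $\rho^T>\frac12$ does occur inside the range $\rho\in[1-T^{-1},1)$ used in \cref{th:optim-regret}. So the lemma as stated is not valid everywhere it is applied. Your corrected bound together with $T/N_T(\rho)\le e$ still yields \cref{eq:optim-regret}, so that proposition survives.

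Two smaller points:
\begin{itemize}
\item In your favor: instantiating with $f_t=\mu^{k_t}$ and an unbiased mini-batch gradient needs only convexity of $\mu^{k_t}$, which is what is assumed. The paper's choice $f_t=\hat\mu^{k_t}(\cdot;\mathcal{B}_t)$ would instead need the empirical mini-batch loss to be convex.
\item A hypothesis to add: your step $a_t\le D^2$, like the paper's use of the lemma with $\theta=\theta^\star$, needs the comparator to lie within $D$ of every iterate. \cref{as:domain}, read literally, asserts this only for pairs of iterates, so it should be stated explicitly.
\end{itemize}
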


Now we can prove the optimization regret.

\begin{proof}[Proof of~\cref{th:optim-regret}]
    Let $f_t(\theta_t)=\hat{\mu}^{k_t}(\theta_t;\mathcal{B}_t)$.
    Recall that $X\sim\mathcal{D}^{k_t}$ \iid for all $X\in\mathcal{B}_t$ according to~\cref{as:iid}, \ie
    \begin{equation*}
        \mathbb{E} \bigl[ \hat{\mu}^{k_t}(\theta_t;\mathcal{B}_t) \big| k_t, \theta_t \bigr] = \mu^{k_t}(\theta_t).
    \end{equation*}
    Since $\theta^\star\in\mathbf{\Theta}$ is fixed, we have, by~\cref{th:optim-regret-undiscount-ogd},
    \begin{equation*}
        \begin{aligned}
            \mathbb{E} \bigl[ R_o^1(\mathcal{A}_T) \bigr]
            &= \sum_{t=1}^T \mathbb{E} \bigl[ \mu^{k_t}(\theta_t) - \mu^{k_t}(\theta^\star) \bigr] \\
            &= \sum_{t=1}^T \mathbb{E} \bigl[ \mathbb{E} \bigl[ \hat{\mu}^{k_t}(\theta_t;\mathcal{B}_t) - \hat{\mu}^{k_t}(\theta^\star;\mathcal{B}_t) \big| k_t, \theta_t \bigr] \bigr] \\
            &= \mathbb{E} \Biggl[ \sum_{t=1}^T \hat{\mu}^{k_t}(\theta_t;\mathcal{B}_t) - \sum_{t=1}^T \hat{\mu}^{k_t}(\theta^\star;\mathcal{B}_t) \Biggr] \\
            &\le \frac{D^2}{\eta} + \eta G^2 T
        \end{aligned},
    \end{equation*}
    and similarly by~\cref{th:optim-regret-discount-ogd},
    \begin{equation*}
        \mathbb{E} \bigl[ R_o^\rho(\mathcal{A}_T) \bigr] \le \Biggl( \frac{(1 - \rho) D^2}{\eta} + \frac{\eta G^2}{2} \Biggr) N_T(\rho), \quad \forall \rho \in [0,1).
    \end{equation*}
    Further note that $N_T(1)=T$. Combining together, we have
    \begin{equation*}
        \frac{T}{N_T(\rho)} \mathbb{E} \bigl[ R_o^\rho(\mathcal{A}_T) \bigr] \le O \bigl( \eta^{-1} (1 - \rho) T\bigr) + O \bigl( \eta T \bigr) \le O \bigl( \eta^{-1} \bigr) + O \bigl( \eta T \bigr),
    \end{equation*}
    for any $\rho\in[1-T^{-1},1]$, which concludes the proof.
\end{proof}

\subsection{Proof of \texorpdfstring{\cref{th:bandit-regret}}{Dynamic Bandit Regret}}
\label{sec:proofs-bandit}

We first present the following results, proved respectively in~\cref{sec:proofs-gap,sec:proofs-bias}.
As opposed to stationary bandits, the explorations for non-stationary bandits conducted in distant steps gradually expire due to changing environments. Therefore, continuous re-exploration is needed to ensure reliable concentration results, \ie we need to keep $N_t^k(\beta)$ above some threshold $A>0$.

\begin{proposition}[Bandit Gap of DUCB]
    Suppose DUCB picks $k_t$ in step $t\ge K+1$ as in~\cref{eq:ducb}. Then for any $A>0$,
    \begin{equation*}
        \Pr \left[ \tilde\mu_{t-1}^{k_t^\star} - \tilde\mu_{t-1}^{k_t} > 2 r_t^{k_t} \sqrt{1 + A^{-1}}, N_{t-1}^{k_t}(\beta) \ge A \right] \le 2 H(\beta)^{-\alpha}.
    \end{equation*}
    \label{th:bandit-gap-ducb}
\end{proposition}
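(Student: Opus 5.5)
The plan is to run the usual UCB argument, with discounted self-normalized concentration in place of the i.i.d.\ Hoeffding bound. The centering is the conditional mean $\tilde\mu_{t-1}^k$ rather than a fixed arm mean, so no bias terms appear here; those are handled separately in \cref{th:ema-bias}. I read the radius in the statement as the one used at selection time, $r_{t-1}^{k_t}$.

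\textbf{Step 1: the index comparison.} By \cref{eq:ducb}, for $t\ge K+1$,
\[
\hat\mu_{t-1}^{k_t}+r_{t-1}^{k_t}\ \ge\ \hat\mu_{t-1}^{k_t^\star}+r_{t-1}^{k_t^\star}.
\]
Suppose the gap event holds, i.e.\ $\tilde\mu_{t-1}^{k_t^\star}-\tilde\mu_{t-1}^{k_t}>2r_{t-1}^{k_t}\sqrt{1+A^{-1}}$ with $N_{t-1}^{k_t}(\beta)\ge A$. Then at least one of two deviation events must occur:
\begin{itemize}
\item[(a)] $\hat\mu_{t-1}^{k_t^\star}<\tilde\mu_{t-1}^{k_t^\star}-r_{t-1}^{k_t^\star}$, an underestimate of the worst arm;
\item[(b)] $\hat\mu_{t-1}^{k_t}>\tilde\mu_{t-1}^{k_t}+\bigl(2\sqrt{1+A^{-1}}-1\bigr)r_{t-1}^{k_t}$, an overestimate of the chosen arm, whose margin is at least $r_{t-1}^{k_t}\sqrt{1+A^{-1}}$.
\end{itemize}
Indeed, if neither holds, the index inequality gives
\[
\tilde\mu_{t-1}^{k_t^\star}\le \hat\mu_{t-1}^{k_t^\star}+r_{t-1}^{k_t^\star}\le \hat\mu_{t-1}^{k_t}+r_{t-1}^{k_t}\le \tilde\mu_{t-1}^{k_t}+2\sqrt{1+A^{-1}}\,r_{t-1}^{k_t},
\]
which contradicts the gap event. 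A union bound then reduces the claim to showing that each of (a) and (b) has probability at most $H(\beta)^{-\alpha}$. This accounts for the factor $2$.

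\textbf{Step 2: martingale form of the deviation.} For a fixed arm $k$, write
\[
N_{t-1}^k(\beta)\bigl(\hat\mu_{t-1}^k-\tilde\mu_{t-1}^k\bigr)=\sum_{\tau<t}\beta^{t-1-\tau}\,\mathbbm{1}\{k_\tau=k\}\,\varepsilon_\tau,\qquad \varepsilon_\tau:=\hat\mu^{k}(\theta_\tau;\mathcal{B}_\tau)-\mu^k(\theta_\tau).
\]
By \cref{as:iid}, $\varepsilon_\tau$ is a martingale difference given the past trajectory. By \cref{as:loss}, it lies in an interval of length $C$. The weights $\beta^{t-1-\tau}\mathbbm{1}\{k_\tau=k\}$ are predictable. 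The sum of squared weights is
\[
N_{t-1}^k(\beta^2)\le N_{t-1}^k(\beta),
\]
since each weight is at most $1$. So the natural scale of the deviation is $C/\sqrt{N_{t-1}^k(\beta)}$, which matches $r_{t-1}^k$ up to the $\sqrt{\alpha\ln H(\beta)}$ factor.

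\textbf{Step 3: concentration with a random normalizer (main obstacle).} The difficulty is that $N_{t-1}^k(\beta)$ is random and depends on the same history, so plain Azuma--Hoeffding does not apply directly. I would use the self-normalized Hoeffding inequality for discounted sums of Garivier and Moulines. It peels the range of $N_{t-1}^k(\beta)$ into geometric slices $[(1+\epsilon)^j,(1+\epsilon)^{j+1})$ and applies an exponential supermartingale bound on each slice.

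Two features matter:
\begin{itemize}
\item The normalizer never exceeds $H(\beta)$, and on the event of interest it is at least $A$. So the number of slices is $O(\log H(\beta)/\log(1+\epsilon))$.
\item Within a slice, replacing $N$ by its slice endpoint costs a multiplicative factor in the exponent. With the slice ratio tied to $1+A^{-1}$ (equivalently, $N+1\le N(1+A^{-1})$ for $N\ge A$), this loss is exactly absorbed by the inflation $\sqrt{1+A^{-1}}$ in event (b).
\end{itemize}
The constant $2C\sqrt{2\alpha\ln H(\beta)}$ in $r_{t-1}^k$ then makes each slice's tail $\exp(-c\,\alpha\ln H(\beta))$. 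The remaining task is to check that this is strong enough to swallow the logarithmic number of slices, giving the total bound $H(\beta)^{-\alpha}$. If the constants do not close, I would accept a slightly larger $\alpha$ or a $\log H(\beta)$ prefactor, since the downstream $\tilde O$ bound in \cref{th:bandit-regret} tolerates both.

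\textbf{Step 4: the two arms.} For event (a), the arm $k_t^\star$ is determined by $\theta_t$ and is not fixed in advance. I would take a union over arms only if necessary; alternatively, I would note that the bound holds simultaneously for each fixed arm, since $k_t^\star$ is measurable with respect to $\mathcal{A}_{t-1}$ and the update \cref{eq:ogd}. Event (b) is handled the same way, using the hypothesis $N_{t-1}^{k_t}(\beta)\ge A$. Combining the two tails gives $2H(\beta)^{-\alpha}$.
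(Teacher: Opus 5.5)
Your Steps 1--3 follow the paper's route: the index comparison, the split into two one-sided deviations, then the discounted self-normalized Hoeffding bound with peeling of \cref{th:azuma}. However, the proposal does not prove the stated claim, because you replace $r_t^{k_t}$ by $r_{t-1}^{k_t}$.

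That reading is not innocuous. Whenever $k_t=k$, $N_t^{k}(\beta)=\beta N_{t-1}^{k}(\beta)+1\ge N_{t-1}^{k}(\beta)$, since $N_{t-1}^{k}(\beta)\le H(\beta)$. So $r_t^{k_t}\le r_{t-1}^{k_t}$, and the event in the proposition, with threshold $2\sqrt{1+A^{-1}}\,r_t^{k_t}$, contains the event you bound. You are therefore proving a weaker statement. The post-selection radius is also what the regret proof consumes: \cref{th:discount-sum} telescopes $\mathbbm{1}\{k_t=k\}/\sqrt{N_t^k(\beta)}$, which needs $N_t$ rather than $N_{t-1}$ in the denominator.

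The missing step is exactly where the paper uses the hypothesis $N_{t-1}^{k_t}(\beta)\ge A$ and the factor $\sqrt{1+A^{-1}}$. One has $N_t^{k_t}(\beta)\le N_{t-1}^{k_t}(\beta)+1\le(1+A^{-1})N_{t-1}^{k_t}(\beta)$, hence $r_{t-1}^{k_t}\le\sqrt{1+A^{-1}}\,r_t^{k_t}$. The stated event therefore lies inside $\{\tilde\mu_{t-1}^{k_t^\star}-\tilde\mu_{t-1}^{k_t}>2r_{t-1}^{k_t}\}$. After that, your Step 1 goes through with margin exactly $r_{t-1}^{k_t}$ in (b), and each deviation is handled at the bare radius by \cref{th:loss-concentration}. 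You wrote down the right inequality, $N+1\le(1+A^{-1})N$ for $N\ge A$, but put it in the wrong place.

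The misplacement also makes Step 3 inconsistent. You let $\sqrt{1+A^{-1}}$ absorb the peeling loss in (b). But (a) carries no inflation and no lower bound on $N_{t-1}^{k_t^\star}(\beta)$, so if peeling really needed that slack, (a) would not close. Once the statement is read correctly, (b) has no slack either.

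In the paper, the peeling loss is absorbed by the constants already built into $r$. With $M(\beta,\delta)=H(\beta)$ slices one has $1+\delta=H(\beta)^{1/H(\beta)}\le e^{1/e}<2$. This gives the threshold $2C\sqrt{(\alpha+1)N_t^k(\beta^2)\ln H(\beta)}$ on the discounted sum. Then $(\alpha+1)N_t^k(\beta^2)\le 2\alpha N_t^k(\beta)$ for $\alpha\ge1$ shows this threshold is at most $N_t^k(\beta)\,r_t^k$. Your fallbacks, a larger $\alpha$ or an extra $\log H(\beta)$ factor, would not give the stated $2H(\beta)^{-\alpha}$.

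Finally, in Step 4, measurability of $k_t^\star$ with respect to the history does not license a fixed-arm tail bound. Both $k_t^\star$ and $k_t$ depend on the same mini-batches that drive the deviations. A rigorous version needs a union over arms, costing a factor $K$, which is harmless downstream because $\alpha$ is tuned large. The paper applies \cref{th:loss-concentration} to $k_t^\star$ and $k_t$ directly and is equally informal here, so this is not where you depart from it.
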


\begin{proposition}[EMA Bias]
    Suppose $\{\theta_t,t=1,\dots,T\}$ is generated by OGD with learning rate $\eta$ as in~\cref{eq:ogd}. Then for any $t\in[T]$ and $A>0$ \st $N_t^k(\beta)\ge A$,
    \begin{equation*}
        \left| \tilde\mu_t^k - \mu^k(\theta_{t+1}) \right| \le \eta G^2 H(\beta) \ln \left( \frac{2 C e}{\eta A G^2} \right),
    \end{equation*}
    where $\beta\ge1/2$ and $H(\beta)$ is the effective horizon as in~\cref{eq:ducb}.
    \label{th:ema-bias}
\end{proposition}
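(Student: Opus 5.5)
The plan is to write the conditional mean $\tilde\mu_t^k$ as a weighted average of the true losses along the trajectory. Then bound each term's deviation from $\mu^k(\theta_{t+1})$ by the drift of the OGD iterates. Finally, split the average at a lag threshold $L$ and tune $L$ to balance the two parts.

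\textbf{Step 1 (conditional mean).} Write $w_\tau:=\mathbbm{1}\{k_\tau=k\}\beta^{t-\tau}/N_t^k(\beta)$, so that $\sum_\tau w_\tau=1$. The weights are determined by $\mathcal{A}_t$. By \cref{as:iid}, each fresh mini-batch satisfies $\mathbb{E}[\hat\mu^{k}(\theta_\tau;\mathcal{B}_\tau)\mid k_\tau,\theta_\tau]=\mu^{k}(\theta_\tau)$. Hence
\begin{equation*}
    \tilde\mu_t^k=\sum_{\tau=1}^t w_\tau\,\mu^k(\theta_\tau),
    \qquad
    \bigl|\tilde\mu_t^k-\mu^k(\theta_{t+1})\bigr|\le\sum_{\tau=1}^t w_\tau\,\bigl|\mu^k(\theta_\tau)-\mu^k(\theta_{t+1})\bigr|.
\end{equation*}
I will take the conditioning to be on the trajectory, treating the indices and iterates as given. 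Making this step fully rigorous is the main subtlety, because $\theta_{\tau+1}$ depends on $\mathcal{B}_\tau$. I would handle it by interpreting $\tilde\mu$ exactly as the trajectory-weighted true loss, which is how the decomposition in \cref{eq:loss-decomposition} uses it.

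\textbf{Step 2 (drift bound).} By \cref{as:loss}, each stochastic gradient has norm at most $G$. So \cref{eq:ogd} gives $\|\theta_{s+1}-\theta_s\|\le\eta G$, and therefore $\|\theta_\tau-\theta_{t+1}\|\le\eta G(t+1-\tau)$. Applying $G$-Lipschitzness again, together with the range $[0,C]$, yields
\begin{equation*}
    \bigl|\mu^k(\theta_\tau)-\mu^k(\theta_{t+1})\bigr|\le\min\{C,\;\eta G^2(t+1-\tau)\}.
\end{equation*}

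\textbf{Step 3 (split at lag $L$ and tune).} Terms with lag $t+1-\tau\le L$ contribute at most $\eta G^2L$, since their weights sum to at most one. For terms with larger lag, I bound the deviation by $C$. Their total weight is at most
\begin{equation*}
    \frac{1}{A}\sum_{s\ge L}\beta^{s-1}\le\frac{\beta^{L-1}H(\beta)}{A}\le\frac{2H(\beta)e^{-L/H(\beta)}}{A},
\end{equation*}
using $N_t^k(\beta)\ge A$, $\beta^{-1}\le 2$ (from $\beta\ge1/2$), and $\beta^L\le e^{-(1-\beta)L}$. Setting $L=H(\beta)\ln\bigl(2C/(\eta AG^2)\bigr)$ makes the tail contribution at most $\eta G^2H(\beta)$. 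The total is then $\eta G^2H(\beta)\bigl(\ln(2C/(\eta AG^2))+1\bigr)=\eta G^2H(\beta)\ln\bigl(2Ce/(\eta AG^2)\bigr)$, which is the claimed bound.

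If the logarithm's argument is small, so that $L<0$, I take $L=0$ instead. In that case the trivial bound $2CH/A$ already falls below the right-hand side, which is nonnegative. Beyond the conditioning in Step 1, the only delicate point is this bookkeeping of the factor $2$ and the $+1$.
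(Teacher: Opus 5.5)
Your main argument is the paper's own. You write $\tilde\mu_t^k$ as the $\beta$-weighted average of $\mu^k(\theta_\tau)$, which is how the paper actually uses $\tilde\mu$ (e.g.\ in the proof of \cref{th:loss-concentration}). You charge old terms $C$ times a tail weight of order $H(\beta)\beta^{L}/A$, charge recent terms the OGD drift $\eta G^2L$, and tune the split. Your explicit choice $L=H(\beta)\ln\bigl(2C/(\eta AG^2)\bigr)$, via $\beta^{-1}\le 2$ and $\beta^L\le e^{-L/H(\beta)}$, is a slightly cleaner substitute for the paper's calculus optimization of $D(\beta)$. It is correct whenever $L\ge 0$, i.e.\ $\eta AG^2\le 2C$.

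Your treatment of the case $L<0$ is wrong, and it cannot be repaired, because the statement itself fails there. Put $x:=2C/(\eta AG^2)<1$. The right-hand side equals $\eta G^2H(\beta)(1+\ln x)$, which is negative once $x<1/e$, so it is not nonnegative. Moreover, since $1+\ln x\le x$, your trivial bound $2CH(\beta)/A=\eta G^2H(\beta)\,x$ lies \emph{above} the right-hand side, not below it. The left-hand side is nonnegative, and it vanishes, e.g., when $\mu^k$ is constant along the trajectory. Hence the inequality is false whenever $\eta AG^2>2Ce$.

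The paper's proof has the same hole. It drops the constraint $0\le D(\beta)\le t$ by appealing to convexity. But the bound $CA^{-1}H(\beta)\beta^{D}+\eta G^2D$ is only established on that interval, so its value at a negative unconstrained minimizer is not an upper bound.

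The correct statement needs either the side condition $\eta AG^2\le 2C$ or the bound $\eta G^2H(\beta)\max\{1,\ln(2Ce/(\eta AG^2))\}$. For the latter, when $x<1$ use the sharper trivial bound $C$ (the weights sum to one) together with $A\le N_t^k(\beta)\le H(\beta)$. This gives $C<\eta AG^2/2\le \eta G^2H(\beta)/2$. Either version suffices downstream, where $A$ is a fixed constant and $\eta\to 0$.
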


We further state following lemmas.

\begin{lemma}
    For any $\tau\in[T]$ and constant $A>0$,
    \begin{equation*}
        \sum_{t=K+1}^T \mathbbm{1} \left\{ k_t = k, N_{t-1}^k(\beta) < A \right\} \cdot \gamma^{T-t} \le \lceil T / \tau \rceil N_{\beta^{-\tau}A}^k(\gamma).
    \end{equation*}
    \label{th:reexplore}
\end{lemma}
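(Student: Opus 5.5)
The plan is to partition the set of steps $\{K+1,\dots,T\}$ into consecutive blocks of length $\tau$, with the last one possibly shorter. There are at most $\lceil T/\tau\rceil$ such blocks. I will show that inside any single block, the $\gamma$-weighted count of ``under-explored selections'' of arm $k$ is at most $N^k_{\beta^{-\tau}A}(\gamma)$. Summing over blocks then gives the claim. Throughout, I read $N^k_{m}(\gamma)$ as the $\gamma$-discounted count of the first $m$ pulls of arm $k$; this is the quantity I aim to produce.

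\textbf{Step 1: a counting bound inside a block.} Fix a block $\mathcal{I}=\{s+1,\dots,s+\tau\}$ and list the steps $t\in\mathcal{I}$ with $k_t=k$ as $t_1<t_2<\dots<t_m$. For $j\ge 2$, the recursion $N_t^k(\beta)=\beta N_{t-1}^k(\beta)+\mathbbm{1}\{k_t=k\}$ shows that the pulls $t_1,\dots,t_{j-1}$ each still contribute at least $\beta^{\tau}$ to $N^k_{t_j-1}(\beta)$, because all of them lie less than $\tau$ steps before $t_j$. Hence
\[
N^k_{t_j-1}(\beta)\ \ge\ (j-1)\beta^{\tau}.
\]
Consequently, the event $N^k_{t_j-1}(\beta)<A$ can only hold when $j-1<\beta^{-\tau}A$. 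So within the block, at most $\lceil\beta^{-\tau}A\rceil$ selections of arm $k$ are counted, and these are necessarily the first ones, $t_1,\dots,t_J$ with $J\le \beta^{-\tau}A+1$.

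\textbf{Step 2: converting the count to discounted weight.} Each counted term carries weight $\gamma^{T-t_j}$. These weights increase with $j$, and they are at most the weights that the same number of pulls would receive if they were placed at the latest possible times. The goal is to bound $\sum_{j\le J}\gamma^{T-t_j}$ by $N^k_{\beta^{-\tau}A}(\gamma)$, interpreted as a discounted effective count of at most $\beta^{-\tau}A$ pulls. The crude version of this is immediate: each weight is at most $1$, which bounds the block contribution by the raw count. Summing the per-block bound over the $\lceil T/\tau\rceil$ blocks gives the stated inequality.

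\textbf{Main obstacle.} I expect the difficulty to be purely bookkeeping. First, the off-by-one between $\lceil\beta^{-\tau}A\rceil$ and $\beta^{-\tau}A$ has to be matched to the precise meaning the paper assigns to $N^k_{\beta^{-\tau}A}(\gamma)$ when the subscript is non-integer. Second, I need to ensure that the first $K$ initialization steps and the block boundary effects do not add any extra term. If the intended reading of the right-hand side is simply the count $\lceil\beta^{-\tau}A\rceil$ with $\gamma\le 1$ weights, then Step 1 alone finishes the proof.
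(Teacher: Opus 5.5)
Your skeleton is the paper's: blocks of length $\tau$, a per-block count of under-explored pulls, then a bound on the weights. The paper imports the per-block count from Lemma~25/Corollary~26 of the cited discounted-UCB work, and you re-derive it correctly from $N^k_{t_j-1}(\beta)\ge(j-1)\beta^{\tau}$.

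The gap is Step~2. Bounding every weight $\gamma^{T-t}$ by $1$ only gives $\lceil T/\tau\rceil\lceil\beta^{-\tau}A\rceil$ on the right. That does not imply the stated inequality. A discounted count of $x$ pulls is at most $x$, and generally strictly less when $\gamma<1$, so the raw count lies above the target, not below it.

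The idea you gesture at (``the latest possible times'') closes this in one line. The counted times $t_1<\dots<t_J$ in a block are distinct integers not exceeding $T$, so $T-t_{J-i}\ge i$ and
\[
\sum_{j=1}^{J}\gamma^{T-t_j}\ \le\ \sum_{i=0}^{J-1}\gamma^{i}\ =\ N_J(\gamma),
\]
where $N_J(\gamma)=\sum_{t=1}^{J}\gamma^{J-t}$ is the deterministic quantity from \cref{def:minimax-regret}. It is nondecreasing in $J$, so you can then insert your bound on $J$. This is exactly what the paper's unexplained step $\sum\mathbbm{1}\{\cdot\}\gamma^{m\tau-t}\le N^k_{\beta^{-\tau}A}(\gamma)$ relies on.

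It also settles how to read the right-hand side. It is this deterministic $N_x(\gamma)$, not a discounted count tied to arm $k$'s realized or first pulls, which need not dominate the left side. The superscript $k$ is spurious: when the paper applies the lemma it drops it and uses only $N_{\beta^{-\tau}A}(\gamma)\le\beta^{-\tau}A$. For that downstream use your crude bound would suffice, but it does not prove the lemma as written.

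On the off-by-one, your count is the correct one and the paper's is slightly too strong. The event uses $N^k_{t-1}(\beta)$ (excluding the current pull) with a strict inequality. So a block can contain $\lceil\beta^{-\tau}A\rceil$ counted pulls, which exceeds the cited per-block bound $\beta^{-\tau}A$ whenever that is not an integer. For instance, with $\tau=1$ and $A<\beta$, a single pull of an arm whose $N^k_{t-1}(\beta)$ has decayed below $A$ already gives a block count of $1>\beta^{-1}A$.

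The per-block target should therefore be $N_{\lceil\beta^{-\tau}A\rceil}(\gamma)$. This is harmless where the lemma is used: there $\beta^{-\tau}\le 4$ and $A\ge 1/3$, so $\lceil\beta^{-\tau}A\rceil\le 4A+1\le 7A$ and only constants change.

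Your boundary concerns are also fine. Disjoint blocks partitioning $\{K+1,\dots,T\}$ number at most $\lceil (T-K)/\tau\rceil\le\lceil T/\tau\rceil$, and the initialization steps lie outside the sum. Working directly with $\gamma^{T-t}$ for $t\le T$ avoids the paper's factor $\gamma^{T-m\tau}$, which exceeds $1$ on a final block with $m\tau>T$.
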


\begin{proof}
    We first write
    \begin{equation*}
        \sum_{t=K+1}^T \mathbbm{1} \left\{ k_t = k, N_{t-1}^k(\beta) < A \right\} \cdot \gamma^{T-t} \le \sum_{m=1}^{\lceil{T/\tau}\rceil} \sum_{t=(m-1)\tau}^{m\tau} \mathbbm{1} \left\{ k_t = k, N_{t-1}^k(\beta) < A \right\} \cdot \gamma^{T-t}.
    \end{equation*}
    By~\citet{kivinen_upper-confidence_2011} (Lemma 25 and Corollary 26),
    \begin{equation*}
        \sum_{t=(m-1)\tau}^{m\tau} \mathbbm{1} \left\{ k_t = k, N_{t-1}^k(\beta) < A \right\} \le \beta^{-\tau} A.
    \end{equation*}
    Further note that
    \begin{equation*}
        \begin{aligned}
            \sum_{t=(m-1)\tau}^{m\tau} &\mathbbm{1} \left\{ k_t = k, N_{t-1}^k(\beta) < A \right\} \cdot \gamma^{T-t} \\
            &= \gamma^{T-m\tau} \sum_{t=(m-1)\tau}^{m\tau} \mathbbm{1} \left\{ k_t = k, N_{t-1}^k(\beta) < A \right\} \cdot \gamma^{m\tau-t} \\
            &\le \gamma^{T-m\tau} N_{\beta^{-\tau}A}^k(\gamma) \\
            &\le N_{\beta^{-\tau}A}^k(\gamma) \\
        \end{aligned}.
    \end{equation*}
    Plugging back to the first equation concludes the proof.
\end{proof}

\begin{lemma}
    For all $T\ge K+1$,
    \begin{equation*}
        \sum_{t=K+1}^T \frac{\mathbbm{1} \left\{ k_t = k \right\} \cdot \gamma^{T-t}}{\sqrt{N_t^k(\beta)}} \le 2 N_T^k(\beta)^\frac{1}{2} + N_T^k(\gamma) H(\beta)^{-\frac{1}{2}}.
    \end{equation*}
    \label{th:discount-sum}
\end{lemma}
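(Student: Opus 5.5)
The plan is to bound each selected summand by an increment of a potential plus a small remainder, using the recursion $N_t^k(\beta)=\beta N_{t-1}^k(\beta)+\mathbbm{1}\{k_t=k\}$. Write $N_t:=N_t^k(\beta)$. Two facts drive the argument.

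\begin{itemize}
\item Since every $N_t$ is a discounted sum of indicators, $N_t\le \sum_{s\ge 0}\beta^s=H(\beta)$ for all $t$.
\item At a step with $k_t=k$ we have $N_t-\beta N_{t-1}=1$. Moreover $N_t\ge N_{t-1}$ there, because $\beta N_{t-1}+1\ge N_{t-1}$ is equivalent to $N_{t-1}\le H(\beta)$.
\end{itemize}

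So on a selected step I would write the summand exactly as
\begin{equation*}
\frac{1}{\sqrt{N_t}}=\frac{N_t-\beta N_{t-1}}{\sqrt{N_t}}=\frac{N_t-N_{t-1}}{\sqrt{N_t}}+\frac{(1-\beta)N_{t-1}}{\sqrt{N_t}} .
\end{equation*}

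The second piece is at most $(1-\beta)\sqrt{N_t}$, using $N_{t-1}\le N_t$. Since $N_t\le H(\beta)$, this is at most $(1-\beta)\sqrt{H(\beta)}=H(\beta)^{-1/2}$. Multiplying by $\gamma^{T-t}$ and summing over the steps with $k_t=k$ gives $H(\beta)^{-1/2}\sum_t \mathbbm{1}\{k_t=k\}\gamma^{T-t}\le N_T^k(\gamma)H(\beta)^{-1/2}$, which is exactly the second term of the claim.

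For the first piece, the increment $N_t-N_{t-1}$ is nonnegative, so $(N_t-N_{t-1})/\sqrt{N_t}\le 2(\sqrt{N_t}-\sqrt{N_{t-1}})$ by the elementary inequality $b-a\le 2\sqrt b(\sqrt b-\sqrt a)$ for $0\le a\le b$. The weight $\gamma^{T-t}\le 1$ can then be dropped, which leaves the sum of the upward increments of $\sqrt{N_t}$ over selected steps. The target is to bound this by $2\sqrt{N_T}$.

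This is the main obstacle. $\sqrt{N_t}$ does not telescope over selected steps alone, because on unselected steps it decays by a factor $\sqrt\beta$. The total upward movement therefore equals $\sqrt{N_T}-\sqrt{N_K}$ plus the total downward movement $(1-\sqrt\beta)\sum_{\text{unsel}}\sqrt{N_{t-1}}$. The upward movement equals $\sqrt{N_T}$ only if that decay is absorbed somewhere.

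I would first try to charge each downward step to the selected step that precedes it, so that the decay lands in the $N_T^k(\gamma)H(\beta)^{-1/2}$ budget. If that fails, I would use the rescaled potential $\beta^{-t/2}\sqrt{N_t}$, which is constant on unselected steps and hence telescopes exactly over selected steps. I would then undo the rescaling using $\beta^{-(T-t)/2}\le \beta^{-H(\beta)}=O(1)$ within an effective horizon, at the cost of only a constant factor.

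If neither closes with the stated constants, I would accept slightly worse absolute constants in front of $N_T^k(\beta)^{1/2}$ or $N_T^k(\gamma)H(\beta)^{-1/2}$. Either is harmless for the $\tilde O$ rate in \cref{th:bandit-regret}.
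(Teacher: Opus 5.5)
\paragraph{Verdict.} Your reduction is sound up to the step you flag. Keeping the indicator and using $N_{t-1}^k(\beta)\le N_t^k(\beta)\le H(\beta)$ on selected steps correctly bounds the second piece by $N_T^k(\gamma)H(\beta)^{-1/2}$. (The paper drops the indicator there, which only yields the all-source weight $\sum_t\gamma^{T-t}$.) The remaining step cannot be closed, however, because the inequality as stated is false, and not merely up to a constant.

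\paragraph{Counterexample.} Take $\gamma=1$, $\beta\in(1/2,1)$, and an integer $m\ge K$ with $\beta^m\le 1/2$. Let source $k$ be selected exactly at steps $k,k+m,\dots,k+nm=:T$. At each of these steps $N_t^k(\beta)\le\sum_{i\ge0}\beta^{im}\le 2$, hence
\begin{equation*}
\sum_{t=K+1}^T\frac{\mathbbm{1}\{k_t=k\}}{\sqrt{N_t^k(\beta)}}\ge\frac{n}{\sqrt2},
\qquad
2N_T^k(\beta)^{\frac12}+N_T^k(1)H(\beta)^{-\frac12}\le 2\sqrt2+\frac{n+1}{\sqrt{H(\beta)}}.
\end{equation*}
The ratio of left to right tends to $\sqrt{H(\beta)/2}$ as $n\to\infty$. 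Any argument that, like yours, uses only the recursion $N_t^k(\beta)=\beta N_{t-1}^k(\beta)+\mathbbm{1}\{k_t=k\}$ is therefore doomed.

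The DUCB rule does not rescue the statement either. Take two sources with constant losses $C$ and $0$. DUCB keeps the easy source's count near a floor $L$ of order $\alpha\ln H(\beta)$ and re-selects it roughly every $H(\beta)/L$ steps. The left side is then about $T\sqrt L/H(\beta)$, while the right side is about $2\sqrt L+TL/H(\beta)^{3/2}$, a ratio approaching $\sqrt{H(\beta)/L}$.

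\paragraph{The paper's proof has the same hole.} It gets past this step only by dropping the indicator and telescoping $\sum_{t=K+1}^T(N_t-N_{t-1})/\sqrt{N_t}$ over \emph{all} $t$. That is invalid for exactly the reason you identified. At an unselected step the summand equals $-(1-\beta)\sqrt{N_{t-1}/\beta}<0$, so removing the indicator lowers the sum.

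\paragraph{Why your fallbacks do not help.}
\begin{itemize}
\item Charging the decay to the preceding selection fails. Between selections $g$ steps apart, $\sqrt{N_t}$ drops by $\sqrt{N_t}\,(1-\beta^{(g-1)/2})$. Once $g>H(\beta)$ this is of order one, not $O(H(\beta)^{-1/2})$.
\item The rescaled potential $\beta^{-t/2}\sqrt{N_t}$ does telescope, but undoing the rescaling is $O(1)$ only over windows of length $\tau\le H(\beta)$. So it must be restarted in each of $\lceil T/\tau\rceil$ blocks, and block $b$ contributes up to $2\beta^{-\tau/2}\sqrt{n_b}$, where $n_b$ is the number of selections of $k$ in that block.
\end{itemize}

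\paragraph{What is actually provable.} Since $N_t^k(\beta)\ge\beta^{\tau}j$ at the $j$-th selection of $k$ inside a block, the blocking argument gives
\begin{equation*}
\sum_{t=K+1}^T\frac{\mathbbm{1}\{k_t=k\}\gamma^{T-t}}{\sqrt{N_t^k(\beta)}}\le 2\beta^{-\tau/2}\sum_b\sqrt{n_b}\le 2\beta^{-\tau/2}\sqrt{\lceil T/\tau\rceil\,N_T^k(1)}.
\end{equation*}
With $\tau=\lfloor H(\beta)\rfloor$ and Cauchy--Schwarz over sources, this gives $O\bigl(T\sqrt{K/H(\beta)}\bigr)$ for $H(\beta)\le T$. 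That replaces $2\sqrt{KN_T(\beta)}+N_T(\gamma)H(\beta)^{-1/2}$ and costs a genuine $\sqrt K$ factor. Round-robin selection, which is what DUCB does when all sources have identical constant losses, makes the sum that large.

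This is harmless downstream. At $H(\beta)=\eta^{-2/3}K^{1/3}$ the new term equals $\eta^{1/3}K^{1/3}T$, the same order as the EMA-bias term, so the rate of the bandit-regret proposition is unaffected. But the lemma itself has to be restated in this blocked form; it cannot be rescued by adjusting constants.
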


\begin{proof}
    Recall that
    \begin{equation*}
        \mathbbm{1}\{k_t=k\} = N_t^k(\beta) - \beta N_{t-1}^k(\beta) = N_t^k(\beta) - N_{t-1}^k(\beta) + (1 - \beta) N_{t-1}^k(\beta).
    \end{equation*}
    Thus we can write
    \begin{equation*}
        \begin{aligned}
            \frac{\mathbbm{1} \left\{ k_t = k \right\}}{\sqrt{N_t^k(\beta)}}
            &= \frac{\mathbbm{1} \left\{ k_t = k \right\}^2}{\sqrt{N_t^k(\beta)}} \\
            &= \mathbbm{1} \left\{ k_t = k \right\} \cdot \frac{N_t^k(\beta) - N_{t-1}^k(\beta)}{\sqrt{N_t^k(\beta)}} + \mathbbm{1} \left\{ k_t = k \right\} \cdot \frac{(1 - \beta) N_{t-1}^k(\beta)}{\sqrt{N_t^k(\beta)}}
        \end{aligned}.
    \end{equation*}
    For the first term, note that
    \begin{equation*}
        \frac{N_t^k(\beta) - N_{t-1}^k(\beta)}{\sqrt{N_t^k(\beta)}} \le 2 \Bigl( \sqrt{N_t^k(\beta)} - \sqrt{N_{t-1}^k(\beta)} \Bigr).
    \end{equation*}
    Since $\gamma\le1$ and $N_t^k(\beta)\ge N_{t-1}^k(\beta)$ whenever $k_t=k$, we have
    \begin{equation*}
        \begin{aligned}
            \sum_{t=K+1}^T \mathbbm{1} \left\{ k_t = k \right\} \cdot \gamma^{T-t} \Biggl( \frac{N_t^k(\beta) - N_{t-1}^k(\beta)}{\sqrt{N_t^k(\beta)}} \Biggr)
            &\le \sum_{t=K+1}^T \Biggl( \frac{N_t^k(\beta) - N_{t-1}^k(\beta)}{\sqrt{N_t^k(\beta)}} \Biggr) \\
            &\le 2 \sum_{t=K+1}^T \Bigl( \sqrt{N_t^k(\beta)} - \sqrt{N_{t-1}^k(\beta)} \Bigr) \\
            &\le 2 N_T^k(\beta)^\frac{1}{2} \\
        \end{aligned}.
    \end{equation*}
    For the second term, note that $N_t^k(\beta)\le H(\beta)=1/(1-\beta)$ for all $t$. Thus
    \begin{equation*}
        \begin{aligned}
            \sum_{t=K+1}^T \mathbbm{1} \left\{ k_t = k \right\} \cdot \gamma^{T-t} \Bigg( \frac{(1 - \beta) N_{t-1}^k(\beta)}{\sqrt{N_t^k(\beta)}} \Biggr)
            &\le (1 - \beta) \sum_{t=K+1}^T \gamma^{T-t} \sqrt{N_{t-1}^k(\beta)} \\
            &\le (1 - \beta) N_T^k(\beta)^\frac{1}{2} \sum_{t=K+1}^T \gamma^{T-t} \\
            &\le N_T^k(\gamma) H(\beta)^{-\frac{1}{2}} \\
        \end{aligned}.
    \end{equation*}
    Combining together gives the statement of the lemma.
\end{proof}

Now we can prove the dynamic bandit regret. In fact, we prove a more explicit version as follows.

\begin{proposition}[Unnormalized Dynamic Bandit Regret of DUCB]
    For any $\beta\ge1/2$, DUCB satisfies
    \begin{equation*}
        \begin{aligned}
            \mathbb{E} \bigl[ R_b^\gamma(\operatorname{DUCB}_T) \bigr]
            \le&~ O \bigl( K T H(\beta)^{-1} \bigr) + \tilde{O} \bigl( K^\frac{1}{2} H(\beta)^\frac{1}{2} \bigr) \\
            &~ + \tilde{O} \bigl( N_T(\gamma) H(\beta)^{-\frac{1}{2}} \bigr) + \tilde{O} \bigl( \eta N_T(\gamma) H(\beta) \bigr).
        \end{aligned}
    \end{equation*}
    \label{th:bandit-regret-ducb-unnorm}
\end{proposition}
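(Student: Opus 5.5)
We will bound $\mathbb{E}[R_b^\gamma(\operatorname{DUCB}_T)]$ through the three-term decomposition in \cref{eq:loss-decomposition}, applied at every $t\ge K+1$; the first $K$ forced-exploration steps contribute at most $CK$, which is absorbed into the other terms. For each $t\ge K+1$ we split the event space according to whether the selected arm is well explored, i.e.\ whether $N_{t-1}^{k_t}(\beta)\ge A$, for a threshold $A$ to be fixed later.

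\textbf{Poorly explored steps.} On the event $N_{t-1}^{k_t}(\beta)<A$ we bound the instantaneous gap crudely by $C$. Summing over arms with \cref{th:reexplore} gives a total of at most $C\sum_k\lceil T/\tau\rceil N^k_{\beta^{-\tau}A}(\gamma)\le CK\lceil T/\tau\rceil\beta^{-\tau}A$. We will take $\tau=H(\beta)$, so that $\beta^{-\tau}\le e^{2}$ (using $\beta\ge 1/2$), and take $A$ to be a constant such as $A=1$. This yields the $O(KTH(\beta)^{-1})$ term.

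\textbf{Well-explored steps.} On the event $N_{t-1}^{k_t}(\beta)\ge A$ we treat the three terms separately.
\begin{itemize}
\item For Term 1, \cref{th:bandit-gap-ducb} bounds it by $2\sqrt{1+A^{-1}}\,r^{k_t}$ except on a failure event of probability $2H(\beta)^{-\alpha}$. On that failure event we pay $C$ per step, for an expected total of $O(N_T(\gamma)H(\beta)^{-\alpha})$, which is harmless once $\alpha\ge 1/2$.
\item The confidence radii are proportional to $\sqrt{\alpha\ln H(\beta)}/\sqrt{N^{k_t}(\beta)}$. We sum them per arm with \cref{th:discount-sum}, obtaining $\sum_k \bigl(2N_T^k(\beta)^{1/2}+N_T^k(\gamma)H(\beta)^{-1/2}\bigr)$. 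The first part is at most $\sqrt{K\sum_k N_T^k(\beta)}\le\sqrt{K H(\beta)}$ by Cauchy--Schwarz, since $\sum_kN_T^k(\beta)=N_T(\beta)\le H(\beta)$. The second part sums to $N_T(\gamma)H(\beta)^{-1/2}$. Together these give the $\tilde O(K^{1/2}H(\beta)^{1/2})$ and $\tilde O(N_T(\gamma)H(\beta)^{-1/2})$ terms.
\item For Terms 2 and 3 we invoke \cref{th:ema-bias} at index $t-1$, which bounds $|\tilde\mu_{t-1}^k-\mu^k(\theta_t)|$ by $\tilde O(\eta G^2H(\beta))$ with a logarithmic factor depending on $A$. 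Weighting by $\gamma^{T-t}$ and summing gives $\tilde O(\eta N_T(\gamma)H(\beta))$.
\end{itemize}

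\textbf{Main obstacle.} The delicate point is Term 2, which involves the unselected optimal arm $k_t^\star$. \Cref{th:ema-bias} requires $N_{t-1}^{k_t^\star}(\beta)\ge A$, but this is not guaranteed by conditioning only on $k_t$. There is a second, related issue: the radius in \cref{th:bandit-gap-ducb} is indexed $r_t$, whereas \cref{th:discount-sum} is stated in terms of $N_t$. We will handle the indexing by using $N_t^{k_t}=\beta N_{t-1}^{k_t}+1\le 2N_{t-1}^{k_t}$ when $N_{t-1}^{k_t}\ge 1$, which costs only a constant factor.

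For the bias of an under-explored $k_t^\star$, we will first check whether the proof of \cref{th:ema-bias} in fact gives a bound that is at most $C$ in general. If it does not, we will argue optimistically instead. When $N^{k_t^\star}_{t-1}<A$, the radius $r^{k_t^\star}$ is at least $2C\sqrt{2\alpha\ln H/A}\ge C$. Since DUCB chose $k_t$ over $k_t^\star$, this forces the index of $k_t$ to be at least $C$, which gives a direct bound on the gap via $r^{k_t}$ plus the concentration of $\hat\mu$. Failing that, we will charge these steps to the re-exploration count, using the same \cref{th:reexplore}-type argument applied to arm $k_t^\star$.
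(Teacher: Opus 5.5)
Your plan follows the paper's proof in structure: the same split into the first $K$ rounds, rounds with $N_{t-1}^{k_t}(\beta)<A$ charged via \cref{th:reexplore}, and well-explored rounds handled by \cref{eq:loss-decomposition}, \cref{th:bandit-gap-ducb}, \cref{th:discount-sum} with Cauchy--Schwarz, and \cref{th:ema-bias}. Where you deviate, you improve it.

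Paying $C$ on the failure event of \cref{th:bandit-gap-ducb} round by round, in expectation, costs only $O(N_T(\gamma)H(\beta)^{-\alpha})$ with a constant $\alpha\ge1$. The paper instead uses a union bound over $t$, which forces $\alpha\approx\ln T/\ln H(\beta)$.

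More importantly, you are right that the paper applies \cref{th:ema-bias} to Term~2 without ever ensuring $N_{t-1}^{k_t^\star}(\beta)\ge A$. Your optimism argument closes this hole. With $A=1$ and $\alpha\ge1$, the condition $N_{t-1}^{k_t^\star}(\beta)<1$ forces $r_{t-1}^{k_t^\star}\ge 2C\sqrt{2\ln 2}>C$. Hence, on the upper-concentration event for $k_t$,
\[
\mu^{k_t^\star}(\theta_t)\le C\le \hat\mu_{t-1}^{k_t^\star}+r_{t-1}^{k_t^\star}\le \hat\mu_{t-1}^{k_t}+r_{t-1}^{k_t}\le \tilde\mu_{t-1}^{k_t}+2r_{t-1}^{k_t}.
\]
Adding the bias of the well-explored $k_t$ (you need this, not only concentration) bounds the gap by $2r_{t-1}^{k_t}+\tilde O(\eta G^2H(\beta))$, the same order as the main case.

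Two smaller points. Your third fallback would fail, because \cref{th:reexplore} only counts rounds in which the under-explored arm is \emph{played}; you do not need it. Your indexing worry is moot, since the factor $\sqrt{1+A^{-1}}$ in \cref{th:bandit-gap-ducb} already converts $r_{t-1}$ to $r_t$, which matches the $N_t$ indexing of \cref{th:discount-sum}.

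The genuine gap, shared with the paper, is the radius sum: \cref{th:discount-sum} is false for $K\ge2$. Its proof drops the indicator before telescoping. When $k_t\ne k$, however, the increment $N_t^k(\beta)-N_{t-1}^k(\beta)=-(1-\beta)N_{t-1}^k(\beta)$ is negative, so the full telescoping sum undercounts.

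A concrete counterexample: take $K=2$, round-robin play, $\gamma=1$ and $T\gg H(\beta)$. Each arm has $N_t^k(\beta)\approx H(\beta)/2$ whenever it is played. The left side is then about $T/\sqrt{2H(\beta)}$, while the right side is about $\sqrt{2H(\beta)}+T/(2\sqrt{H(\beta)})$.

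The block argument behind \cref{th:reexplore} gives the correct bound. In a window of $\tau\le H(\beta)$ rounds, the $j$-th play of arm $k$ has $N^k\ge j\beta^{\tau}$, which yields $\sum_t\gamma^{T-t}N_t^{k_t}(\beta)^{-1/2}=O\bigl(\sqrt{KH(\beta)}+T\sqrt{K/H(\beta)}\bigr)$. So your route, like the paper's, proves the third term only as $\tilde O(T\sqrt{K/H(\beta)})$, which is $\asymp N_T(\gamma)\sqrt{K/H(\beta)}$ for $\gamma\ge1-T^{-1}$, not as $\tilde O(N_T(\gamma)H(\beta)^{-1/2})$.

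This is not an artifact of the analysis. DUCB actually incurs this order on a stationary instance with gap of order $\sqrt{K\ln H(\beta)/H(\beta)}$, so the stated $K$-dependence cannot be recovered when $H(\beta)\gg K$. Downstream the error is harmless: at $H(\beta)=\eta^{-2/3}K^{1/3}$ one has $T\sqrt{K/H(\beta)}=\eta^{1/3}K^{1/3}T$, so \cref{th:bandit-regret} still holds.
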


\begin{proof}
    Let $A\ge1/3$ be a constant. We start by writing
    \begin{equation*}
        \begin{aligned}
            R_b^\gamma(\operatorname{DUCB}_T)
            =&~ \sum_{t=1}^T \gamma^{T-t} \Bigl( \mu^{k_t^\star}(\theta_t) - \mu^{k_t}(\theta_t) \Bigr) \\
            \le&~ \gamma^{T-K} K C + C \sum_{t=K+1}^T \mathbbm{1} \left\{ N_{t-1}^{k_t}(\beta) < A \right\} \cdot \gamma^{T-t} \\
            &~ + \sum_{t=K+1}^T \mathbbm{1} \left\{ N_{t-1}^{k_t}(\beta) \ge A \right\} \cdot \gamma^{T-t} \Bigl( \mu^{k_t^\star}(\theta_t) - \mu^{k_t}(\theta_t) \Bigr) \\
        \end{aligned}.
    \end{equation*}
    Denote $N_T(\cdot)=\sum_{k=1}^KN_t^k(\cdot)$. By~\cref{th:reexplore}, the first sum writes
    \begin{equation*}
        \begin{aligned}
            C \sum_{t=K+1}^T \mathbbm{1} \left\{ N_{t-1}^{k_t}(\beta) < A \right\} \cdot \gamma^{T-t}
            &\le C \sum_{k=1}^K \sum_{t=K+1}^T \mathbbm{1} \left\{ k_t = k, N_{t-1}^k(\beta) < A \right\} \cdot \gamma^{T-t} \\
            &\le C K \lceil T / \tau \rceil N_{\beta^{-\tau}A}(\gamma) \\
        \end{aligned}.
    \end{equation*}
    Letting $\tau=N_T(\beta)$, we have
    \begin{equation*}
        \beta^{-\tau} = \beta^{-N_T(\beta)} \le
        \begin{cases}
            \begin{aligned}
                & \beta^{-1/(1-\beta)} \le 4 & \quad \frac{1}{2} \le \beta < 1 \\
                & 1 & \quad \beta = 1 \\
            \end{aligned}
        \end{cases}.
    \end{equation*}
    Thus $N_{\beta^{-\tau}A}(\gamma)\le\beta^{-\tau}A\le4A$ for all $1/2\le\beta\le1$, and
    \begin{equation*}
        C \sum_{t=K+1}^T \mathbbm{1} \left\{ N_{t-1}^{k_t}(\beta) < A(\beta) \right\} \cdot \gamma^{T-t} \le 4 A C K \bigl( T N_T(\beta)^{-1} + 1 \bigr).
    \end{equation*}
    For the second sum, recall by~\cref{eq:loss-decomposition} that
    \begin{equation*}
        \begin{aligned}
            \sum_{t=1}^T \gamma^{T-t} \bigl( \mu^{k_t^\star}(\theta_t) &- \mu^{k_t}(\theta_t) \bigr)
            = \sum_{t=1}^T \gamma^{T-t} \bigl( \tilde\mu_{t-1}^{k_t^\star} - \tilde\mu_{t-1}^{k_t} \bigr) \\
            &~ + \sum_{t=1}^T \gamma^{T-t} \bigl( \mu^{k_t^\star}(\theta_t) - \tilde\mu_{t-1}^{k_t^\star} \bigr) - \sum_{t=1}^T \gamma^{T-t} \bigl( \mu^{k_t}(\theta_t) - \tilde\mu_{t-1}^{k_t} \bigr)
        \end{aligned}.
    \end{equation*}
    By~\cref{th:bandit-gap-ducb}, the first term on the RHS satisfies
    \begin{equation*}
        \begin{aligned}
            \sum_{t=K+1}^T \mathbbm{1} \left\{ N_{t-1}^{k_t}(\beta) \ge A \right\} \cdot \gamma^{T-t} \Bigl( \tilde\mu_{t-1}^{k_t^\star} - \tilde\mu_{t-1}^{k_t} \Bigr)
            &\le 2 \sqrt{1 + A^{-1}} \sum_{t=K+1}^T \gamma^{T-t} r_t^{k_t} \\
            &\le 4 \sum_{t=K+1}^T \gamma^{T-t} r_t^{k_t} \\
        \end{aligned},
    \end{equation*}
    with probability \textit{at least} $1-2H(\beta)^{-\alpha}$, where $A^{-1}\le3$ by construction.
    Thus it follows from a union bound that
    \begin{equation*}
        \begin{aligned}
            \Pr &\left[ \sum_{t=K+1}^T \mathbbm{1} \left\{ N_{t-1}^{k_t}(\beta) \ge A \right\} \cdot \gamma^{T-t} \Bigl( \tilde\mu_{t-1}^{k_t^\star} - \tilde\mu_{t-1}^{k_t} \Bigr) \ge 4 \sum_{t=K+1}^T \gamma^{T-t} r_t^{k_t} \right] \\
            &\le \sum_{t=K+1}^T \Pr \left[ \tilde\mu_{t-1}^{k_t^\star} - \tilde\mu_{t-1}^{k_t} \ge 4 r_t^{k_t}, N_{t-1}^{k_t}(\beta) \ge A \right] \\
            &\le 2 T H(\beta)^{-\alpha} \\
        \end{aligned}.
    \end{equation*}
    By the Cauchy-Schwarz inequality, we have $\sum_{k=1}^K\sqrt{N_T^k(\beta)}\le\sqrt{KN_T(\beta)}$.
    Thus
    \begin{equation*}
        \begin{aligned}
            4 \sum_{t=K+1}^T \gamma^{T-t} r_t^{k_t}
            &= 8 C \sqrt{2 \alpha \ln H(\beta)} \sum_{t=K+1}^T \frac{\gamma^{T-t}}{\sqrt{N_t^k(\beta)}} \\
            &= 8 C \sqrt{2 \alpha \ln H(\beta)} \sum_{k=1}^K \sum_{t=K+1}^T \frac{\mathbbm{1} \left\{ k_t = k \right\} \cdot \gamma^{T-t}}{\sqrt{N_t^k(\beta)}} \\
            &\le 8 C \sqrt{2 \alpha \ln H(\beta)} \sum_{k=1}^K \Bigl( 2 N_T^k(\beta)^\frac{1}{2} + N_T^k(\gamma) H(\beta)^{-\frac{1}{2}} \Bigr) \\
            &\le 8 C \sqrt{2 \alpha \ln H(\beta)} \bigl( 2 K^\frac{1}{2} N_T(\beta)^\frac{1}{2} + N_T(\gamma) H(\beta)^{-\frac{1}{2}} \bigr) \\
        \end{aligned},
    \end{equation*}
    where the first inequality follows from~\cref{th:discount-sum}.
    Further combining with~\cref{th:ema-bias}, we have
    \begin{equation*}
        \begin{aligned}
            \sum_{t=K+1}^T \mathbbm{1} &\left\{ N_{t-1}^{k_t}(\beta) \ge A \right\} \cdot \gamma^{T-t} \Bigl( \tilde\mu_{t-1}^{k_t^\star} - \tilde\mu_{t-1}^{k_t} \Bigr) \\
            \le&~ 8 C \sqrt{2 \alpha \ln H(\beta)} \bigl( 2 K^\frac{1}{2} N_T(\beta)^\frac{1}{2} + N_T(\gamma) H(\beta)^{-\frac{1}{2}} \bigr) \\
            &~ + 2 \eta G^2 N_T(\gamma) H(\beta) \ln \left( \frac{2 C e}{\eta A G^2} \right) \\
        \end{aligned},
    \end{equation*}
    with probability \textit{at least} $2TH(\beta)^{-\alpha}$.
    And when the bound fails, the additional regret incurred is \textit{at most} $2CTN_T(\gamma)H(\beta)^{-\alpha}$.
    Putting together,
    \begin{equation*}
        \begin{aligned}
            \mathbb{E} \bigl[ R_b^\gamma(\operatorname{D-UCB}_T) \bigr]
            \le&~ \gamma^{T-K} C K + 4 A C K \bigl( T N_T(\beta)^{-1} + 1 \bigr) \\
            &~ + 8 C \sqrt{2 \alpha \ln H(\beta)} \bigl( 2 K^\frac{1}{2} N_T(\beta)^\frac{1}{2} + N_T(\gamma) H(\beta)^{-\frac{1}{2}} \bigr) \\
            &~ + 2 \eta G^2 N_T(\gamma) H(\beta) \ln \left( \frac{2 C e}{\eta A G^2} \right) \\
            &~ + 2 C N_T(\gamma) H(\beta)^{1-\alpha} \\
        \end{aligned}
    \end{equation*}
    Setting a sufficiently large $\alpha$ (\eg $\alpha=(\ln T+\ln H(\gamma))/\ln H(\beta)$ \st $TN_T(\gamma)H(\beta)^{1-\alpha}\le1$) concludes the proof.
\end{proof}

\begin{proposition}[Normalized Dynamic Bandit Regret of DUCB]
    The normalized dynamic bandit regret of DUCB satisfies
    \begin{equation*}
        \frac{T}{N_T(\gamma)} \mathbb{E} \bigl[ R_b^\gamma(\operatorname{DUCB}_T) \bigr] \le
        \begin{cases}
            \begin{aligned}
                & \tilde{O} \bigl( K^\frac{1}{2} T^\frac{1}{2} \bigr) & \eta \le O(T^{-\frac{2}{3}}) \\
                & \tilde{O} \bigl( \eta^\frac{1}{3} K^\frac{1}{3} T \bigr) & O(T^{-\frac{3}{2}})\le\eta\le o(1) \\
            \end{aligned}
        \end{cases}
    \end{equation*}
    for any $\gamma\in[1-T^{-1},1]$.
\end{proposition}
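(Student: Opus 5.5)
We start from the unnormalized bound in \cref{th:bandit-regret-ducb-unnorm} and multiply it by $T/N_T(\gamma)$. Since $\gamma\in[1-T^{-1},1]$, we have $N_T(\gamma)\ge\sum_{t=1}^T(1-T^{-1})^{T-t}\ge(1-(1-T^{-1})^T)T\ge(1-e^{-1})T$. Hence $T/N_T(\gamma)=\Theta(1)$, and normalization changes every term only by a constant factor. This gives
\begin{equation*}
\frac{T}{N_T(\gamma)}\mathbb{E}\bigl[R_b^\gamma\bigr]\le O\bigl(KTH(\beta)^{-1}\bigr)+\tilde O\bigl(K^{1/2}H(\beta)^{1/2}\bigr)+\tilde O\bigl(TH(\beta)^{-1/2}\bigr)+\tilde O\bigl(\eta T H(\beta)\bigr).
\end{equation*}
Two remarks on the terms. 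The second term arose from $K^{1/2}N_T(\beta)^{1/2}$, and $N_T(\beta)\le\min(T,H(\beta))$, so we may replace $H(\beta)$ there by $\min(T,H(\beta))$ whenever convenient. The first term is dominated by the third, because $K$ is treated as independent of $T$ and $H(\beta)^{-1}\le H(\beta)^{-1/2}$. The remaining task is to choose $H(\beta)=(1-\beta)^{-1}$ in each regime, subject to $\beta\ge 1/2$, i.e.\ $H\ge2$.

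\textbf{Regime $\eta\ge\Omega(T^{-3/2})$, $\eta=o(1)$.} Take $\beta=1-\eta^{2/3}K^{-1/3}$ as in \cref{th:bandit-regret}, so $H=\eta^{-2/3}K^{1/3}$. The third term becomes $TH^{-1/2}=\eta^{1/3}K^{-1/6}T$, and the fourth becomes $\eta TH=\eta^{1/3}K^{1/3}T$. The first term becomes $KTH^{-1}=\eta^{2/3}K^{2/3}T\le\eta^{1/3}K^{1/3}T$, because $\eta^{1/3}K^{1/3}$ is small. For the second term we use $N_T(\beta)\le T$ to get $K^{1/2}\min(T,H)^{1/2}\le K^{1/2}T^{1/2}$. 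This is at most $\eta^{1/3}K^{1/3}T$ exactly when $\eta\ge K^{1/2}T^{-3/2}$, which is why the regime is bounded below by $T^{-3/2}$ (constants depending on $K$ are absorbed). Finally, $\eta=o(1)$ guarantees that $H\to\infty$, so $\beta\ge1/2$ holds eventually.

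\textbf{Regime $\eta\le O(T^{-2/3})$.} Here we instead take $H(\beta)$ of order $T$, e.g.\ $\beta=1-1/T$, or $\beta=1$ with $H=T$. The first term is then $O(K)$, the second is $\tilde O(K^{1/2}T^{1/2})$, the third is $\tilde O(T^{1/2})$, and the fourth is $\tilde O(\eta T^2)$. We need $\eta T^2\le T^{1/2}$, which forces $\eta\le T^{-3/2}$. So this choice alone does not cover the whole stated range $\eta\le O(T^{-2/3})$.

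\textbf{Main obstacle.} Reconciling the two regimes is the hard step. For $T^{-3/2}\le\eta\le T^{-2/3}$ I would argue that the tuned second-regime bound $\eta^{1/3}K^{1/3}T$ is itself at most $\tilde O(K^{1/2}T^{1/2})$ when $\eta\lesssim K^{1/2}T^{-3/2}$. Failing that, I would take the minimum of the two tunings, accept that the first case effectively holds for $\eta\le T^{-3/2}$, and note that the regimes overlap with $\max$ of the bounds. I would also check that the logarithmic factor from \cref{th:ema-bias}, $\ln(2Ce/(\eta AG^2))$, is polylogarithmic in $T$ for polynomial $\eta$, and that the choice of $\alpha$ in \cref{th:bandit-regret-ducb-unnorm} gives only $\sqrt{\ln T}$ factors. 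Both are needed for the $\tilde O$ claims.
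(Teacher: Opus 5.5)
Your proposal follows the paper's proof: you bound $T/N_T(\gamma)\le 2$ for $\gamma\ge 1-T^{-1}$, then take $\beta=1-T^{-1}$ in the near-stationary regime and $\beta=1-\eta^{2/3}K^{-1/3}$ in the slowly varying one, and your bookkeeping is correct. Your handling of the $K^{1/2}N_T(\beta)^{1/2}$ term, which gives the threshold $\eta\ge\Omega(K^{1/2}T^{-3/2})$, is cleaner than the paper's: the paper evaluates $K^{1/2}H(\beta)^{1/2}$ as $\eta^{1/3}K^{1/3}$ instead of $\eta^{-1/3}K^{2/3}$, which hides why the second regime needs a lower limit.

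Your suspicion about the first case is justified and agrees with the paper. Its proof only treats $\eta\le o(T^{-3/2})$, so the $T^{-2/3}$ in the statement should be read as $T^{-3/2}$. Drop the attempted reconciliation on $T^{-3/2}\ll\eta\le T^{-2/3}$. There $\min_{H}\bigl(TH^{-1/2}+\eta TH\bigr)=\Theta(\eta^{1/3}T)\gg T^{1/2}$, so no choice of $\beta$ helps. Moreover, the literal claim at $\eta=T^{-2/3}$, combined with the OGD bound, would give $\tilde O(T^{2/3})$ total regret, which contradicts the paper's own $\tilde\Omega(K^{1/4}T^{3/4})$ lower bound.
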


\begin{proof}
    First write
    \begin{equation*}
        \begin{aligned}
            \frac{T}{N_T(\gamma)} \mathbb{E} \bigl[ R_b^\gamma(\operatorname{DUCB}_T) \bigr]
            \le&~ O \bigl( K T^2 N_T(\gamma)^{-1} H(\beta)^{-1} \bigr) + \tilde{O} \bigl( K^\frac{1}{2} T N_T(\gamma)^{-1} H(\beta)^\frac{1}{2} \bigr) \\
            &~ + \tilde{O} \bigl( T H(\beta)^{-\frac{1}{2}} \bigr) + \tilde{O} \bigl( \eta T H(\beta) \bigr) \\
        \end{aligned}.
    \end{equation*}
    Note that when $\gamma\ge1-T^{-1}$, we have
    \begin{equation*}
        \frac{T}{N_T(\gamma)} \le \frac{1}{1 - (1 - T^{-1})^T} \le \frac{1}{1 - e^{-1}} \le 2,
    \end{equation*}
    since $(1-T^{-1})^T\nearrow e^{-1}$ as $T\to\infty$.
    Then it follows that
    \begin{equation*}
        \begin{aligned}
            \frac{T}{N_T(\gamma)} \mathbb{E} \bigl[ R_b^\gamma(\operatorname{DUCB}_T) \bigr]
            \le&~ O \bigl( K T H(\beta)^{-1} \bigr) + \tilde{O} \bigl( K^\frac{1}{2} H(\beta)^\frac{1}{2} \bigr) \\
            &~ + \tilde{O} \bigl( T H(\beta)^{-\frac{1}{2}} \bigr) + \tilde{O} \bigl( \eta T H(\beta) \bigr) \\
        \end{aligned}.
    \end{equation*}
    Now we discuss the choice of $\beta$ over different values of $\eta$.
    When $\eta\le o(T^{-{3/2}})$, \ie the environment is \textit{almost stationary}, the last term is negligible since $O(\eta TH(\beta))\le o(T^{1/2})$. Thus we can choose $\beta=1-T^{-1}$ \st $H(\beta)=T$, and
    \begin{equation*}
        \begin{aligned}
            \frac{T}{N_T(\gamma)} \mathbb{E} \bigl[ R_b^\gamma(\operatorname{DUCB}_T) \bigr]
            &\le O \bigl( K \bigr) + \tilde{O} \bigl( K^\frac{1}{2} T^\frac{1}{2} \bigr) + \tilde{O} \bigl( T^\frac{1}{2} \bigr) + o \bigl( T^\frac{1}{2} \bigr) \\
            &= \tilde{O} \bigl( K^\frac{1}{2} T^\frac{1}{2} \bigr) \\
        \end{aligned}.
    \end{equation*}
    When $O(T^{-3/2})\le\eta\le o(1)$, \ie the environment is \textit{slowly changing}, we have $\eta K\le1$ for any fixed $K$ independent of $T$. Let $\beta=1-\eta^{2/3}K^{-1/3}$ \st $H(\beta)=\eta^{-2/3}K^{1/3}$. We have
    \begin{equation*}
        \begin{aligned}
            \frac{T}{N_T(\gamma)} \mathbb{E} \bigl[ R_b^\gamma(\operatorname{DUCB}_T) \bigr]
            &\le O \bigl( \eta^\frac{2}{3} K^\frac{2}{3} T \bigr) + \tilde{O} \bigl( \eta^\frac{1}{3} K^\frac{1}{3} \bigr) + \tilde{O} \bigl( \eta^\frac{1}{3} K^{-\frac{1}{6}} T \bigr) + \tilde{O} \bigl( \eta^\frac{1}{3} K^\frac{1}{3} T \bigr) \\
            &= \tilde{O} \bigl( \eta^\frac{1}{3} K^\frac{1}{3} T \bigr) \\
        \end{aligned}.
    \end{equation*}
    Note that if $\eta=O(1)$, \ie the environment is \textit{abruptly changing}, no algorithm can achieve a sublinear regret given our setting where $\theta_t$ changes in all steps~\cite{kivinen_upper-confidence_2011}.
\end{proof}

This proves~\cref{th:bandit-regret}.

\subsection{Proof of \texorpdfstring{\cref{th:bandit-gap-ducb}}{Bandit Gap of DUCB}}
\label{sec:proofs-gap}

First we state a discounted version of Azuma's inequality.
Consider the sequence
\begin{equation*}
    \left\{ Z_t^k := \mathbbm{1} \left\{ k_t = k \right\} \cdot \left( \hat\mu^k(\theta_t;\mathcal{B}_t) - \mu^k(\theta_t) \right), t = 1,\dots,T \right\}, \quad k \in [K].
\end{equation*}
Note that $\theta_t$ is $\sigma(\mathcal{A}_{t-1},\mathcal{X}_{t-1})$-measurable by the design of OGD. Further since
\begin{equation*}
    \begin{aligned}
        \mathbb{E} \left[ Z_t^k \middle| \mathcal{A}_{t-1}, \mathcal{X}_{t-1} \right]
        &= \mathbb{E} \left[ \mathbbm{1} \left\{ k_t = k \right\} \cdot \left( \hat\mu^k(\theta_t;\mathcal{B}_t) - \mu^k(\theta_t) \right) \middle| \theta_t \right] \\
        &= \Pr[k_t = k] \cdot \left( \mathbb{E} \left[ \hat\mu^k(\theta_t;\mathcal{B}_t) \middle| k_t, \theta_t \right] - \mu^k(\theta_t) \right) + \Pr[k_t \ne k] \cdot 0 \\
        &= 0 \\
    \end{aligned},
\end{equation*}
the sequence $\{Z_t^k,t=1,\dots,T\}$ is a \textit{martingale difference sequence} for all fixed $k\in[K]$.
We have the following lemma.

\begin{proposition}[Discounted Azuma's Inequality]
    For any constant $\alpha>0$ and $\beta\in[0,1]$,
    \begin{equation*}
        \Pr \left[ \sum_{\tau=1}^t \beta^{t-\tau} Z_\tau^k > 2 C \sqrt{(\alpha + 1) N_t^k(\beta^2) \ln H(\beta)} \right] \le H(\beta)^{-\alpha},
    \end{equation*}
    where $N_t^k(\cdot)$ is defined in~\cref{eq:ema} and $H(\cdot)$ is defined in~\cref{eq:ducb}.
    \label{th:azuma}
\end{proposition}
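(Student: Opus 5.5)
The plan is to prove the bound with a standard exponential-supermartingale argument combined with peeling over the random value of $N_t^k(\beta^2)$. This follows the self-normalized deviation argument that \citet{kivinen_upper-confidence_2011} use for discounted UCB.

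\textbf{Step 1 (fixed-weight bound).} Fix $t$ and $k$, and set $w_\tau:=\beta^{t-\tau}\mathbbm{1}\{k_\tau=k\}$. Each $w_\tau$ is predictable, since $k_\tau$ is determined by $\mathcal{A}_{\tau-1}$. By \cref{as:loss}, $\hat\mu^k(\theta_\tau;\mathcal{B}_\tau)\in[0,C]$, so conditionally on the past, $Z_\tau^k$ is mean zero (as shown just above) and lies in an interval of length $C$. Hoeffding's lemma then shows that, for every fixed $\lambda>0$, the process
\begin{equation*}
M_s(\lambda):=\exp\Bigl(\lambda\sum_{\tau=1}^s w_\tau Z_\tau^k-\tfrac{\lambda^2C^2}{8}\sum_{\tau=1}^s w_\tau^2\Bigr)
\end{equation*}
is a nonnegative supermartingale with $\mathbb{E}[M_t(\lambda)]\le1$. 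At $s=t$ we have $\sum_{\tau\le t} w_\tau^2=N_t^k(\beta^2)$. If $N_t^k(\beta^2)$ were deterministic, Markov's inequality with the optimal $\lambda$ would immediately give a tail $\exp(-2x^2/(C^2N_t^k(\beta^2)))$ for deviation $x$.

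\textbf{Step 2 (peeling).} The difficulty is that $N_t^k(\beta^2)$ is random and depends on the sampler, so $\lambda$ cannot be tuned to it directly. I would partition its possible range into geometric slices $[(1+\delta)^{j-1}\epsilon,(1+\delta)^j\epsilon)$, with $\epsilon$ a floor chosen below and the top of the range at $N_t^k(\beta^2)\le 1/(1-\beta^2)\le H(\beta)$. On slice $j$, I use the $\lambda_j$ that is optimal at the slice's upper endpoint. This costs a factor $(1+\delta)$ inside the exponent, and a union bound over slices multiplies the tail by the number of slices $D=O(\delta^{-1}\ln(H(\beta)/\epsilon))$. The threshold $x=2C\sqrt{(\alpha+1)N_t^k(\beta^2)\ln H(\beta)}$ gives a per-slice tail $\exp(-8(\alpha+1)\ln H(\beta)/(1+\delta))$, which is far below $H(\beta)^{-(\alpha+1)}$ for, say, $\delta=1$. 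The generous constant $2C$ and the extra $+1$ in $\alpha+1$ are what absorb the factor $D$: it suffices that $D\le H(\beta)^{c}$ for a suitable constant $c$.

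\textbf{Step 3 (small $N$, the main obstacle).} I expect the lower end of the range to be the hard part. The smallest nonzero value of $N_t^k(\beta^2)$ is $\beta^{2(t-1)}$, which can be tiny, and that would make $D$ depend on $t$. I would first try to dispose of this regime deterministically. If $N_t^k(\beta^2)<\epsilon$, Cauchy--Schwarz bounds the discounted sum by $C\sqrt{N_t^k(\beta^2)\,N_t^k(1)}$. This is not obviously below the threshold, so I would sharpen it using the fact that the weights are geometric: the sum is then dominated by its largest term, which is at most $C\sqrt{N_t^k(\beta^2)}$ times a constant. The goal is to show that for $N_t^k(\beta^2)\le\epsilon$, with $\epsilon$ chosen polynomially small in $H(\beta)$, the event in the statement is either impossible or has negligible probability. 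This keeps $D=O(\ln H(\beta))$.

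If this deterministic argument fails, the fallback is to allow $D=O(\ln(tH(\beta)))$ and absorb it into the constant, using $t\le T$ and the polynomial relation between $H(\beta)$ and $T$ under the parameter choices of \cref{th:bandit-regret}. Whichever route works, the final step is the union bound over slices together with the small-$N$ case, which yields the stated probability $H(\beta)^{-\alpha}$.
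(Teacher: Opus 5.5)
Your Steps 1--2 are the paper's argument: a Hoeffding-lemma exponential supermartingale for the discounted sum, a Chernoff bound on each slice of a geometric peeling of $N_t^k(\beta^2)$, and a union bound paid for by the factor $2C$ and the $+1$ in $\alpha+1$. The paper uses slices $I_m=(n_{m-1},n_m]$ with $n_0=0$ and $n_m=(1+\delta)^m$. It tunes $\delta$ so that the number of slices is $H(\beta)$, which gives $1+\delta<2$, and it uses the cruder proxy $C^2/2$; your $\delta=1$ with $C^2/8$ works equally well. The gap is in Step 3, and your primary route there cannot work. Geometric weights with ratio $\beta$ close to $1$ are not dominated by their largest term. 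The natural deterministic bound is $C N_t^k(\beta)$, and $N_t^k(\beta)/\sqrt{N_t^k(\beta^2)}$ can be of order $\sqrt{H(\beta)}$, far above $2\sqrt{(\alpha+1)\ln H(\beta)}$. More fundamentally, suppose $k$ was last selected at time $s<t$. Then $\sum_{\tau\le t}\beta^{t-\tau}Z_\tau^k$ and $\sqrt{N_t^k(\beta^2)}$ are both exactly $\beta^{t-s}$ times their values at time $s$, where $N_s^k(\beta^2)\ge 1$. So the event with a tiny $N_t^k(\beta^2)$ is literally the same event evaluated at the random last-selection time, and the smallness of $N_t^k(\beta^2)$ carries no deterministic information.

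Your fallback is the right repair, but the slice count is off and the result is weaker than the statement. Peeling down to the minimum value $\beta^{2(t-1)}$ takes $D=\lceil \ln(H(\beta)\beta^{-2(t-1)})/\ln(1+\delta)\rceil=\Theta(\ln H(\beta)+t/H(\beta))$ slices, which is linear rather than logarithmic in $t$. The resulting bound $D\,H(\beta)^{-4(\alpha+1)}$ is at most $H(\beta)^{-\alpha}$ only if $D\le H(\beta)^{3\alpha+4}$, i.e., roughly $t\le H(\beta)^{3\alpha+5}$. This holds for the parameters of \cref{th:minimax-regret}, where $H(\beta)=\sqrt{KT}$ and so $t/H(\beta)\le H(\beta)$. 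It also holds with the regret proof's choice of $\alpha$, which gives $H(\beta)^{\alpha}\ge T$. But it is an extra hypothesis, and no argument that, like yours and the paper's, uses only predictability of the selections can remove it. To see this, take the predictable rule that selects $k$ at every step until the event first occurs and never afterwards. By the scaling identity, the event at time $t$ is then ``the event occurred at some $s\le t$''. Fix $\beta$ with $H(\beta)\gg(\alpha+1)\ln H(\beta)$, and take losses supported on $\{0,C\}$ with small mass at $C$. From any state, a run of $O(H(\beta))$ losses equal to $C$ triggers the event, so its probability tends to $1$ as $t\to\infty$. The paper's proof does not resolve this either: it has no floor, and on its bottom slice $I_1=(0,1+\delta]$ the step $n_m\le(1+\delta)N_t^k(\beta^2)$ fails whenever $N_t^k(\beta^2)<1$. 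The provable statement is therefore your fallback, with the $t$-dependent slice count made explicit or a restriction such as $t\le T\le\mathrm{poly}(H(\beta))$ added. That version is sufficient for how the proposition is used downstream.
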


\begin{proof}
    The proof is an adaptation of the standard proof of Azuma's inequality, where we specifically deal with the randomness of $N_t^k(\beta^2)$.
    Following~\citet{kivinen_upper-confidence_2011}, we apply a union bound over its possible values using the \textit{peeling technique}.
    Specifically, let
    \begin{equation*}
        M(\beta,\delta) := \left\lceil \frac{\ln H(\beta)}{\ln(1 + \delta)} \right\rceil, \quad \delta > 0.
    \end{equation*}
    Note that since $1-\beta\le\ln(1/\beta)\le1-1/\beta$, we have
    \begin{equation*}
        \frac{\beta}{1 - \beta} \le H(\beta) \le \frac{1}{1 - \beta}.
    \end{equation*}
    Define the intervals $I_m:=(n_{m-1},n_m]$ for each $m\in[M(\beta,\delta)]$, where we set $n_0:=0$ and $n_m:=(1+\delta)^m$. Then
    \begin{equation*}
        n_M = (1 + \delta)^{M(\beta,\delta)} \ge H(\beta) = \frac{1}{1 - \beta} \ge \frac{1}{1 - \beta^2} \ge N_t^k(\beta^2),
    \end{equation*}
    and there exists a unique $m\in[M(\beta,\delta)]$ \st $N_t^k(\beta^2)\in I_m$.
    Then for any $\epsilon>0$,
    \begin{equation*}
        \Pr \left[ \sum_{\tau=1}^t \beta^{t-\tau} Z_\tau^k > \epsilon \right] = \sum_{m=1}^{M(\beta,\delta)} \Pr \left[ \sum_{\tau=1}^t \beta^{t-\tau} Z_\tau^k > \epsilon, N_t^k(\beta^2) \in I_m \right].
    \end{equation*}
    For each event where $N_t^k(\beta)\in I_m$, there is $N_t^k(\beta^2)\le n_m$, and thus
    \begin{equation*}
        \Pr \left[ \sum_{\tau=1}^t \beta^{t-\tau} Z_\tau^k > \epsilon, N_t^k(\beta^2) \in I_m \right] \le \Pr \left[ \sum_{\tau=1}^t \beta^{t-\tau} Z_\tau^k - \frac{\lambda C^2}{2} N_t^k(\beta^2) >\epsilon - \frac{\lambda C^2}{2} n_m \right].
    \end{equation*}
    Then applying Chernoff's bound to the RHS gives
    \begin{equation*}
        \begin{aligned}
            \Pr &\left[ \sum_{\tau=1}^t \gamma^{t-\tau} Z_\tau^k > \epsilon, N_t^k(\beta^2) \in I_m  \right] \\
            &\le \inf_{\lambda>0} ~ \exp \left( -\lambda \epsilon + \frac{\lambda^2 C^2}{2} n_m \right) \mathbb{E} \left[ \exp \left( \lambda \sum_{\tau=1}^t \gamma^{t-\tau} Z_\tau^k - \frac{\lambda^2 C^2}{2} N_t^k(\beta^2) \right) \right] \\
        \end{aligned}.
    \end{equation*}
    Recall by Hoeffding's lemma that
    \begin{equation*}
        \mathbb{E} \left[ \exp(\lambda \beta^{t-\tau} Z_\tau^k) | \mathcal{A}_\tau, \mathcal{X}_{\tau-1} \right] \le \exp \left( \mathbbm{1} \left\{ k_\tau = k \right\} \cdot \frac{\lambda^2 \beta^{2(t-\tau)} C^2}{2} \right).
    \end{equation*}
    Then
    \begin{equation*}
        \begin{aligned}
            \mathbb{E} &\left[ \exp \left( \lambda \sum_{\tau=1}^t \beta^{t-\tau} Z_\tau^k - \frac{\lambda^2 C^2}{2} N_t^k(\beta^2) \right) \right] \\
            &= \mathbb{E} \left[ \exp \left( \lambda \sum_{\tau=1}^{t-1} \beta^{t-\tau} Z_\tau^k - \frac{\lambda^2 C^2}{2} N_t^k(\beta^2) \right) \mathbb{E} \left[ \exp(\lambda Z_t^k) \middle| \mathcal{A}_t, \mathcal{X}_{t-1} \right] \right] \\
            &\le \mathbb{E} \left[ \exp \left( \lambda \sum_{\tau=1}^{t-1} \beta^{t-\tau} Z_\tau^k - \frac{\lambda^2 C^2}{2} N_t^k(\beta^2) \right) \cdot \exp \left( \mathbbm{1} \left\{ k_t = k \right\} \cdot \frac{\lambda^2 C^2}{2} \right) \right] \\
            &\le \dots \\
            &\le \mathbb{E} \left[ \exp \left( -\frac{\lambda^2 C^2}{2} N_t^k(\beta^2) + \frac{\lambda^2 C^2}{2} \sum_{\tau=1}^t \mathbbm{1} \left\{ k_\tau = k \right\} \cdot \beta^{2(t-\tau)} \right) \right] \\
            &= 1 \\
        \end{aligned}.
    \end{equation*}
    Therefore,
    \begin{equation*}
        \Pr \left[ \sum_{\tau=1}^t \beta^{t-\tau} Z_\tau^k > \epsilon, N_t^k(\beta^2) \in I_m  \right] \le \inf_{\lambda>0} ~ \exp \left( -\lambda \epsilon + \frac{\lambda^2 C^2}{2} n_m \right) = \exp \left( -\frac{\epsilon^2}{2 C^2 n_m} \right),
    \end{equation*}
    where the RHS of the inequality is minimized by taking $\lambda=\epsilon/(C^2n_m)$.
    Since $n_m\le(1+\delta)N_t^k(\beta^2)$ for any $N_t^k(\beta^2)\in I_m$, we have
    \begin{equation*}
        \begin{aligned}
            \Pr &\left[ \sum_{\tau=1}^t \beta^{t-\tau} Z_\tau^k > C \sqrt{2 (1 + \delta) N_t^k(\beta^2) \ln\frac{M(\beta,\delta)}{H(\beta)^{-\alpha}}}, N_t^k(\beta^2) \in I_m \right] \\
            &\le \Pr \left[ \sum_{\tau=1}^t \beta^{t-\tau} Z_\tau^k > C \sqrt{2 n_m \ln\frac{M(\beta,\delta)}{H(\beta)^{-\alpha}}}, N_t^k(\beta^2) \in I_m \right] \\
            &\le \frac{H(\beta)^{-\alpha}}{M(\beta,\delta)} \\
        \end{aligned}.
    \end{equation*}
    Combining together, we have
    \begin{equation*}
        \Pr \left[ \sum_{\tau=1}^t \beta^{t-\tau} Z_\tau^k > C \sqrt{2 (1 + \delta) N_t^k(\beta^2) \ln\frac{M(\beta,\delta)}{H(\beta)^{-\alpha}}} \right] \le H(\beta)^{-\alpha}.
    \end{equation*}
    Finally, we choose $\delta>0$ \st $M(\beta,\delta)=H(\beta)$, \ie as $\beta\to1$,
    \begin{equation*}
        1 + \delta = H(\beta)^{1/H(\beta)} \nearrow e^\frac{1}{e} < 2.
    \end{equation*}
    This concludes the proof.
\end{proof}

Then we immediately have the following result.

\begin{lemma}
    For any $\alpha\ge1$,
    \begin{equation*}
        \Pr \left[ \hat\mu_t^k - \tilde\mu_t^k > 2 C \sqrt{\frac{2 \alpha \ln H(\beta)}{N_t^k(\beta)}} \right] \le H(\beta)^{-\alpha}.
    \end{equation*}
    \label{th:loss-concentration}
\end{lemma}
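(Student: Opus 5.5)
The plan is to reduce the statement directly to the Discounted Azuma's Inequality (\cref{th:azuma}). The first step is an algebraic identity. By the definition of $\hat\mu_t^k$ in~\cref{eq:ema} and of $\tilde\mu_t^k$ as its conditional expectation, I will write
\begin{equation*}
    \hat\mu_t^k - \tilde\mu_t^k = \frac{1}{N_t^k(\beta)} \sum_{\tau=1}^t \beta^{t-\tau} Z_\tau^k .
\end{equation*}
This uses $\mathbb{E}[\hat\mu^k(\theta_\tau;\mathcal{B}_\tau)\mid k_\tau,\theta_\tau]=\mu^k(\theta_\tau)$ from~\cref{as:iid}. The conditional mean of each summand, weighted by its indicator, is then exactly $\mathbbm{1}\{k_\tau=k\}\beta^{t-\tau}\mu^k(\theta_\tau)$.

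I would make this precise by interpreting $\tilde\mu_t^k$ as the normalized discounted sum of these conditional means. Under that reading the difference is exactly the normalized discounted martingale sum above. Consequently the event in the statement is equivalent to
\begin{equation*}
    \sum_{\tau=1}^t \beta^{t-\tau} Z_\tau^k > 2C\sqrt{2\alpha N_t^k(\beta)\ln H(\beta)} .
\end{equation*}

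The second step compares this threshold with the one in \cref{th:azuma}. Since $\beta^2\le\beta$, we have $N_t^k(\beta^2)\le N_t^k(\beta)$ pathwise. Since $\alpha\ge1$, we have $\alpha+1\le 2\alpha$. Together these give
\begin{equation*}
    2C\sqrt{(\alpha+1)N_t^k(\beta^2)\ln H(\beta)} \le 2C\sqrt{2\alpha N_t^k(\beta)\ln H(\beta)} .
\end{equation*}
So the event above is contained in the event controlled by \cref{th:azuma}, whose probability is at most $H(\beta)^{-\alpha}$. That completes the argument.

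The main obstacle is the first step, namely justifying the identity. Because $N_t^k(\beta)$ is random, $\mathbb{E}[\hat\mu_t^k\mid\mathcal{A}_t]$ must be understood as conditioning on the trajectory (the $k_\tau,\theta_\tau$) but not on the fresh minibatch noise. I will handle this by defining $\tilde\mu_t^k$ explicitly as $N_t^k(\beta)^{-1}\sum_\tau \mathbbm{1}\{k_\tau=k\}\beta^{t-\tau}\mu^k(\theta_\tau)$. This is the quantity actually used in \cref{th:ema-bias} and \cref{th:bandit-gap-ducb}.

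A second point to check is that the Azuma bound handles the random normalizer $N_t^k(\beta^2)$ simultaneously over its values via peeling. It does, so no additional union bound is needed. The symmetric lower-tail bound follows by applying the same argument to $-Z_\tau^k$. This is useful later for the two-sided use in \cref{th:bandit-gap-ducb}, where the factor $2$ in the failure probability appears.
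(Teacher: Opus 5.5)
Your proposal is correct and is essentially the paper's proof: rewrite $\hat\mu_t^k-\tilde\mu_t^k$ as $N_t^k(\beta)^{-1}\sum_{\tau}\beta^{t-\tau}Z_\tau^k$, use $N_t^k(\beta^2)\le N_t^k(\beta)$ and $\alpha+1\le 2\alpha$ to show that the stated threshold dominates the one in \cref{th:azuma}, and conclude by event inclusion. Your explicit definition $\tilde\mu_t^k:=N_t^k(\beta)^{-1}\sum_{\tau}\mathbbm{1}\{k_\tau=k\}\beta^{t-\tau}\mu^k(\theta_\tau)$ is what the paper's identity tacitly assumes, since conditioning on all of $\mathcal{A}_t$ would also condition on later iterates that depend on $\mathcal{B}_\tau$; you also correctly keep $\beta$ throughout where the paper's proof writes $\gamma$.
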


\begin{proof}
    Note that
    \begin{equation*}
        \hat\mu_t^k - \tilde\mu_t^k = \frac{1}{N_t^k(\gamma)} \sum_{\tau=1}^t \gamma^{t-\tau} Z_\tau^k.
    \end{equation*}
    Since $N_t^k(\gamma)\ge N_t^k(\gamma^2)$ for all $\gamma\in(0,1)$, we have
    \begin{equation*}
        \frac{C}{N_t^k(\gamma)} \sqrt{(\alpha + 1) N_t^k(\gamma^2) \ln H(\gamma)} \le C \sqrt{\frac{(\alpha + 1)  \ln H(\gamma)}{N_t^k(\gamma)}} \le C \sqrt{\frac{2 \alpha \ln H(\gamma)}{N_t^k(\gamma)}},
    \end{equation*}
    for all $\alpha\ge1$.
    Then
    \begin{equation*}
        \begin{aligned}
            \Pr \left[ \hat\mu_t^k - \tilde\mu_t^k > 2 C \sqrt{\frac{2 \alpha \ln H(\gamma)}{N_t^k(\gamma)}} \right]
            &\le \Pr \left[ \sum_{\tau=1}^t \gamma^{t-\tau} Z_\tau^k > 2 C \sqrt{(\alpha + 1) N_t^k(\gamma)  \ln H(\gamma)} \right] \\
            &\le H(\gamma)^{-\alpha} \\
        \end{aligned},
    \end{equation*}
    where the last inequality follows from~\cref{th:azuma}.
\end{proof}

Now we can prove the DUCB bandit gap.

\begin{proof}[Proof of~\cref{th:bandit-gap-ducb}]
    By the rule of DUCB,
    \begin{equation*}
        \hat\mu_{t-1}^k + r_{t-1}^k \le \hat\mu_{t-1}^{k_t} + r_{t-1}^{k_t}, \quad \forall k \in [K].
    \end{equation*}
    Specifically for $k=k_t^\star$, there is
    \begin{equation*}
        \hat\Delta_{t-1}^{k_t} := \hat\mu_{t-1}^{k_t^\star} - \hat\mu_{t-1}^{k_t} \le r_{t-1}^{k_t} - r_{t-1}^{k_t^\star}.
    \end{equation*}
    When $N_{t-1}^k(\beta)\ge A$, we have
    \begin{equation*}
        \frac{1 + A^{-1}}{N_t(\beta)} = \frac{N_{t-1}^k(\beta) + A^{-1} N_{t-1}^k(\beta)}{N_{t-1}^k(\beta) N_t(\beta)} \ge \frac{N_{t-1}^k(\beta) + 1}{N_{t-1}^k(\beta) N_t(\beta)} \ge \frac{1}{N_{t-1}^k(\beta)},
    \end{equation*}
    where the last inequality is due to $N_{t-1}^k(\beta)+1\ge N_t^k(\beta)$.
    Thus
    \begin{equation*}
        r_t^k \sqrt{1 + A^{-1}} = 2 C \sqrt{\frac{1 + A^{-1}}{N_t^k(\beta)} \cdot 2 \alpha \ln H(\beta)} \ge 2 C \sqrt{\frac{2 \alpha \ln H(\beta)}{N_{t-1}^k(\beta)}} = r_{t-1}^k.
    \end{equation*}
    Then by the standard UCB argument,
    \begin{equation*}
        \begin{aligned}
            \Pr &\left[ \tilde\mu_{t-1}^{k_t^\star} - \tilde\mu_{t-1}^{k_t} > 2 r_t^{k_t} \sqrt{1 + A^{-1}}, N_{t-1}^{k_t}(\beta) \ge A \right] \\
            &\le \Pr \left[ \tilde\mu_{t-1}^{k_t^\star} - \tilde\mu_{t-1}^{k_t} > 2 r_{t-1}^{k_t} \right] \\
            &= \Pr \left[ \tilde\mu_{t-1}^{k_t^\star} - \tilde\mu_{t-1}^{k_t} - \hat\Delta_{t-1}^{k_t} > 2 r_{t-1}^{k_t} - \hat\Delta_{t-1}^{k_t} \right] \\
            &\le \Pr \left[ \bigl( \tilde\mu_{t-1}^{k_t^\star} - \hat\mu_{t-1}^{k_t^\star} \bigr) - \bigl( \tilde\mu_{t-1}^{k_t} - \hat\mu_{t-1}^{k_t} \bigr) > r_{t-1}^{k_t^\star} + r_{t-1}^{k_t} \right] \\
            &\le \Pr \Bigl[ \hat\mu_{t-1}^{k_t^\star} - \tilde\mu_{t-1}^{k_t^\star} < -r_{t-1}^{k_t^\star} \Bigr] + \Pr \Bigl[ \hat\mu_{t-1}^{k_t} - \tilde\mu_{t-1}^{k_t} > r_{t-1}^{k_t} \Bigr] \\
        \end{aligned}.
    \end{equation*}
    Applying~\cref{th:loss-concentration} respectively on each term concludes the proof.
\end{proof}

\subsection{Proof of \texorpdfstring{\cref{th:ema-bias}}{EMA Bias}}
\label{sec:proofs-bias}

We prove this by splitting the EMA summands by a horizon $D(\beta)$ at each step $t$.
Note that the early summands for $\tau=1,\dots,\lfloor{D(\beta)}\rfloor$ inherits more bias, since they are calculated based on some model parameter drastically different from the current parameter $\theta_t$. Thus we simply apply the boundedness condition.
On the other hand, more recent summands for $\tau=\lceil{D(\beta)}\rceil,\dots,t$ are calculated based on the model parameters much closer to the current $\theta_t$. Thus we bound them using the Lipschitz condition in~\cref{as:loss}.
Finally, we optimize the choice of $D(\beta)$ to obtain~\cref{th:ema-bias}.

\begin{proof}[Proof of~\cref{th:ema-bias}]
    Suppose there exists some $0\le D(\beta)\le t$ for $t\in[T]$.
    We can write
    \begin{equation*}
        \begin{aligned}
            \left| \tilde\mu_t^k - \mu^k(\theta_{t+1}) \right|
            =&~ \frac{1}{N_t^k(\beta)} \left| \sum_{\tau=1}^t \mathbbm{1} \left\{ k_\tau = k \right\} \cdot \beta^{t-\tau} \left( \mu^k(\theta_\tau) - \mu^k(\theta_{t+1}) \right) \right| \\
            \le&~ \frac{1}{N_t^k(\beta)} \left| \sum_{\tau=1}^{t-\lfloor D(\beta)\rfloor} \mathbbm{1} \left\{ k_\tau = k \right\} \cdot \beta^{t-\tau} \left( \mu^k(\theta_\tau) - \mu^k(\theta_{t+1}) \right) \right| \\
            &~ + \frac{1}{N_t^k(\beta)} \left| \sum_{\tau=t-\lceil D(\beta)\rceil}^t \mathbbm{1} \left\{ k_\tau = k \right\} \cdot \beta^{t-\tau} \left( \mu^k(\theta_\tau) - \mu^k(\theta_{t+1}) \right) \right| \\
        \end{aligned}.
    \end{equation*}
    For the first sum, since $|\mu^k(\theta_\tau)-\mu^k(\theta_t)|<C$ for all $\tau,t\in[T]$, we have
    \begin{equation*}
        \begin{aligned}
            \frac{1}{N_t^k(\beta)} &\left| \sum_{\tau=1}^{t-\lfloor D(\beta)\rfloor} \mathbbm{1} \left\{ k_t = k \right\} \cdot \beta^{t-\tau} \left( \mu^k(\theta_\tau) - \mu^k(\theta_{t+1}) \right) \right| \\
            &\le \frac{C \beta^{\lfloor D(\beta)\rfloor}}{N_t^k(\beta)} \sum_{\tau=1}^{t-\lfloor D(\beta)\rfloor} \mathbbm{1} \left\{ k_\tau = k \right\} \cdot \beta^{t-\lfloor D(\beta)\rfloor-\tau} \\
            &= \frac{C \beta^{D(\beta)} N_{t-\lfloor D(\beta)\rfloor}^k(\beta)}{N_t^k(\beta)} \\
            &\le C A^{-1} H(\beta) \beta^{D(\beta)} \\
        \end{aligned}.
    \end{equation*}
    For the second sum, by the Lipschitz condition and the rule of OGD, we have
    \begin{equation*}
        \begin{aligned}
            \max_{\tau\in[t-\lceil D(\beta)\rceil,t]} ~ \left| \mu^k(\theta_\tau) - \mu^k(\theta_{t+1}) \right|
            &\le G \left\| \sum_{\tau=t-\lceil D(\beta)\rceil}^t \eta \nabla\hat\mu^{k_t}(\theta_\tau;\cdot) \right\| \\
            &\le \eta G^2 D(\beta) \\
        \end{aligned}.
    \end{equation*}
    Then
    \begin{equation*}
        \left| \tilde\mu_t^k - \mu^k(\theta_{t+1}) \right| \le C A^{-1} H(\beta) \beta^{D(\beta)} + \eta G^2 D(\beta).
    \end{equation*}
    To choose the optimal $D(\beta)$, let
    \begin{equation*}
        \frac{d}{d D(\beta)} \left| \tilde\mu_t^k - \mu^k(\theta_{t+1}) \right| = C A^{-1} H(\beta) (\ln\beta) \beta^{D(\beta)} + \eta G^2 = 0,
    \end{equation*}
    \ie
    \begin{equation*}
        D^\star(\beta) = \frac{1}{\ln \beta} \ln \left( \frac{\eta A G^2}{C H(\beta) \ln(1/\beta)} \right).
    \end{equation*}
    Further note that the LHS is \textit{convex} in $D(\beta)$. Thus the assumption that $0\le D(\beta)\le t$ can be alleviated, and
    \begin{equation*}
        \left| \tilde\mu_t^k - \mu^k(\theta_{t+1}) \right| \le \frac{\eta G^2}{\ln(1/\beta)} \left( 1 + \ln \left( \frac{C H(\beta) \ln(1/\beta)}{\eta A G^2} \right) \right).
    \end{equation*}
    Finally, since $\ln(1/\beta)\ge1-\beta=H(\beta)^{-1}$, we conclude the proof by realizing that the RHS above is monotonically decreasing \wrt $\ln(1/\beta)$, and $H(\beta)\ln(1/\beta)\le1/\beta\le2$ for all $\beta\ge1/2$.
\end{proof}

\subsection{Analysis of Sampling DUCB}
\label{sec:proofs-sducb}

To show that the bandit regret of the sampling version of DUCB is a constant factor away from the standard DUCB, we only need to show that the bandit gap is a constant factor away from~\cref{th:bandit-gap-ducb}, and the rest naturally follows.

\begin{proposition}[Bandit Gap of Sampling DUCB]
    Suppose sampling DUCB picks $k_t$ in step $t\ge K+1$ as in~\cref{sec:impl}. Then for any $A>0$,
    \begin{equation*}
        \Pr \left[ \tilde\mu_{t-1}^{k_t^\star} - \tilde\mu_{t-1}^{k_t} > 4 r_t^{k_t} \sqrt{1 + A^{-1}}, N_{t-1}^{k_t}(\beta) \ge A \right] \le 4 H(\beta)^{-\alpha}.
    \end{equation*}
\end{proposition}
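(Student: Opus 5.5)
We mirror the proof of \cref{th:bandit-gap-ducb} and absorb the Gaussian perturbation into a second high-probability event. Write the sampling rule as $k_t=\argmax_k(\hat\mu_{t-1}^k+\phi_{t-1}^k)$ with $\phi_{t-1}^k=2r_{t-1}^k+\xi_{t-1}^k$. Here $\xi_{t-1}^k\sim\mathcal{N}(0,C^2/(2N_{t-1}^k(\beta)))$, conditionally independent of the bandit noise given $\mathcal{A}_{t-1}$. Optimality of $k_t$ against the comparator $k=k_t^\star$ gives
\[
\hat\Delta_{t-1}^{k_t}:=\hat\mu_{t-1}^{k_t^\star}-\hat\mu_{t-1}^{k_t}\le \phi_{t-1}^{k_t}-\phi_{t-1}^{k_t^\star}.
\]

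\textbf{Step 1: control the perturbation.} By the Gaussian tail bound, $\Pr[|\xi_{t-1}^k|>s]\le 2\exp(-s^2N_{t-1}^k(\beta)/C^2)$. Choosing $s=r_{t-1}^k=2C\sqrt{2\alpha\ln H(\beta)/N_{t-1}^k(\beta)}$ makes the exponent $-8\alpha\ln H(\beta)$, so the tail is at most $2H(\beta)^{-8\alpha}\le H(\beta)^{-\alpha}$ once $H(\beta)\ge 2$. Apply this to the two indices $k_t$ and $k_t^\star$, which are random. I will handle that either by a union over the two relevant indices or, more carefully, by bounding $\max_k |\xi_{t-1}^k|/r_{t-1}^k$ with a union over all $K$ arms. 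On the complement of the bad events, $\phi_{t-1}^{k}\in[r_{t-1}^k,3r_{t-1}^k]$ for both indices. Hence
\[
\hat\Delta_{t-1}^{k_t}\le 3r_{t-1}^{k_t}-r_{t-1}^{k_t^\star}.
\]

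\textbf{Step 2: concentration of the EMA estimates.} On the event where \cref{th:loss-concentration} holds for both $k_t$ and $k_t^\star$, we have $\tilde\mu_{t-1}^{k_t^\star}\le\hat\mu_{t-1}^{k_t^\star}+r_{t-1}^{k_t^\star}$ and $\tilde\mu_{t-1}^{k_t}\ge\hat\mu_{t-1}^{k_t}-r_{t-1}^{k_t}$. Each fails with probability at most $H(\beta)^{-\alpha}$. Combining with Step 1,
\[
\tilde\mu_{t-1}^{k_t^\star}-\tilde\mu_{t-1}^{k_t}\le \hat\Delta_{t-1}^{k_t}+r_{t-1}^{k_t^\star}+r_{t-1}^{k_t}\le 4r_{t-1}^{k_t}.
\]

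\textbf{Step 3: finish.} On $\{N_{t-1}^{k_t}(\beta)\ge A\}$, the same argument as in the proof of \cref{th:bandit-gap-ducb} gives $r_{t-1}^{k_t}\le r_t^{k_t}\sqrt{1+A^{-1}}$. The gap is therefore at most $4r_t^{k_t}\sqrt{1+A^{-1}}$ outside four bad events: two concentration failures and two perturbation failures. Each has probability at most $H(\beta)^{-\alpha}$, and the union bound yields $4H(\beta)^{-\alpha}$.

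\textbf{Main obstacle.} The delicate part is that $k_t$ and $k_t^\star$ are data-dependent. In particular, $k_t$ depends on the realized $\xi$, so "the perturbation at $k_t$" is not a fixed Gaussian. I would resolve this by stating Step 1 uniformly over arms, i.e.\ bounding $\Pr[\exists k:|\xi_{t-1}^k|>r_{t-1}^k]\le 2KH(\beta)^{-8\alpha}$. This is at most $H(\beta)^{-\alpha}$ whenever $2K\le H(\beta)^{7\alpha}$, which holds for the chosen $\beta$ and $K$ independent of $T$. The concentration step is fine because \cref{th:loss-concentration} is already used this way in the deterministic proof. If the uniform bound is not absorbed cleanly, the constant $4$ degrades only to $2+2K$, which still preserves the "constant factor" claim.
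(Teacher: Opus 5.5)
Your proposal is correct to the same standard as the paper and follows its proof step for step: the argmax inequality against $k_t^\star$, the window $\phi_{t-1}^k\in[r_{t-1}^k,3r_{t-1}^k]$ from the Gaussian tail, concentration at $k_t$ and $k_t^\star$ via \cref{th:loss-concentration}, the radius shift $r_{t-1}^{k_t}\le r_t^{k_t}\sqrt{1+A^{-1}}$ on $\{N_{t-1}^{k_t}(\beta)\ge A\}$, and a four-event union bound. Your uniform-over-arms treatment of the perturbation at the data-dependent $k_t$ (using the $H(\beta)^{-8\alpha}$ slack) is more careful than the paper, which applies the fixed-index Gaussian tail directly; however, the same selection issue also affects the two concentration events, because $k_t$ is chosen from $\hat\mu_{t-1}$ and $k_t^\star$ depends on $\theta_t$, so citing the deterministic proof (which has the same looseness) does not settle it, and a union over arms there costs a factor $K$ unless you absorb it, e.g.\ by invoking \cref{th:azuma} with $2\alpha-1$ in place of $\alpha$ under the condition $K\le H(\beta)^{\alpha-1}$.
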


\begin{proof}
    By the rule of sampling DUCB, we have
    \begin{equation*}
        \hat\mu_{t-1}^k + \phi_{t-1}^k \le \hat\mu_{t-1}^{k_t} + \phi_{t-1}^{k_t}, \quad \forall k \in [K].
    \end{equation*}
    And specifically for $k=k_t^\star$ (recall that $k_t^\star=\argmax_k\mu^k(\theta_t)$),
    \begin{equation*}
        \hat\Delta_t^{k_t} = \hat\mu_{t-1}^{k_t^\star} - \hat\mu_{t-1}^{k_t} \le \phi_{t-1}^{k_t} - \phi_{t-1}^{k_t^\star}.
    \end{equation*}
    According to the Gaussian tail bound,
    \begin{equation*}
        \Pr \left[ \phi_{t-1}^k - 2 r_{t-1}^k > r_{t-1}^k \right] \le H(\beta)^{-\alpha}, \quad
        \Pr \left[ \phi_{t-1}^k - 2 r_{t-1}^k < -r_{t-1}^k \right] \le H(\beta)^{-\alpha}.
    \end{equation*}
    Recall by~\cref{th:bandit-gap-ducb} that $(1+A^{-1})N_t^k(\beta)^{-1}\ge N_{t-1}^k(\beta)^{-1}$. Then
    \begin{equation*}
        \begin{aligned}
            \Pr &\left[ \tilde\mu_{t-1}^{k_t^\star} - \tilde\mu_{t-1}^{k_t} > 4 r_t^{k_t} \sqrt{1 + A^{-1}}, N_{t-1}^{k_t}(\beta) \ge A \right] \\
            &\le \Pr \left[ \tilde\mu_{t-1}^{k_t^\star} - \tilde\mu_{t-1}^{k_t} > 4 r_{t-1}^{k_t} \right] \\
            &= \Pr \left[ \tilde\mu_{t-1}^{k_t^\star} - \tilde\mu_{t-1}^{k_t} - \hat\Delta_t^{k_t} + \hat\Delta_t^{k_t} > 4 r_{t-1}^{k_t} \right] \\
            &\le \Pr \left[ \bigl( \tilde\mu_{t-1}^{k_t^\star} - \hat\mu_{t-1}^{k_t^\star} \bigr) - \bigl( \tilde\mu_{t-1}^{k_t} - \hat\mu_{t-1}^{k_t} \bigr) + \phi_{t-1}^{k_t} - \phi_{t-1}^{k_t^\star} > 4 r_{t-1}^{k_t} + r_{t-1}^{k_t^\star} -r_{t-1}^{k_t^\star} \right] \\
            &\le \Pr \Bigl[ \hat\mu_{t-1}^{k_t^\star} - \tilde\mu_{t-1}^{k_t^\star} < -r_{t-1}^{k_t^\star} \Bigr] + \Pr \Bigl[ \hat\mu_{t-1}^{k_t} - \tilde\mu_{t-1}^{k_t}(\theta_t) > r_{t-1}^{k_t} \Bigr] \\
            &~~~~ + \Pr \Bigl[ \phi_{t-1}^{k_t} > 3 r_{t-1}^{k_t} \Bigr] + \Pr \Bigl[ \phi_{t-1}^{k_t^\star} < r_{t-1}^{k_t^\star} \Bigr] \\
        \end{aligned},
    \end{equation*}
    where the last inequality follows from a union bound.
    Applying \cref{th:loss-concentration} and the Gaussian tail bound respectively on each term concludes the proof.
\end{proof}

\newpage

\section{Additional Experimental Results}
\label{sec:results}

This section provides additional results beyond the compact~\cref{tab:results-acc}. The goal is two-fold. First, we evaluate whether the behavior of DUCB transfers across model families and parameter scales. Second, we report auxiliary metrics, including mean token accuracy for SFT and reward margin for DPO, to verify that improvements in the primary robustness metric do not arise from degenerate behavior on secondary measures.

All models are trained either on a single 80G or two 40G NVIDIA A100 GPU(s), using the AdamW optimizer with 8-bit LoRA quantization and bfloat-16 mixed precision.
The training was conducted using the NSF COSMOS Testbed~\citep{raychaudhuri_cosmos_2020} and the Empire AI~\citep{bloom_empire_2025} resources.
For each set of experiments, we set a uniform \textit{sample budget} and explicitly avoid training for multiple epochs. The goal is to ensure that every training mini-batch is non-repetitive and aligns with~\cref{as:iid}.
Detailed configurations and scripts will be released with the code upon publication.

\subsection{SFT}
\label{sec:results-sft}

\begin{table}[ht]
  \caption{SFT results on the GSM8k + IFEval datasets with a budget of 32k samples.}
  \label{tab:results-sft}
  \centering
  \begin{tabular}{ccccccc}
    \toprule
    \multirow{2}{*}[-0.5ex]{\textbf{Model}} & \multirow{2}{*}[-0.5ex]{\textbf{Sampler}} & \multirow{2}{*}[-0.5ex]{\textbf{Signal}} & \multicolumn{2}{c}{\makecell{\textbf{Accuracy}}} & \multicolumn{2}{c}{\makecell{\textbf{Mean Token}\\\textbf{Accuracy}}} \\
    \cmidrule(lr){4-5} \cmidrule(lr){6-7} & & & \textbf{\textit{Min}} & \textbf{Avg} & \textbf{Min} & \textbf{Avg} \\
    \midrule
    \multirow{4}{*}{\makecell{Qwen2.5-3B\\(Base)}} &
        Uniform & Loss & 22.85 & \textbf{29.79} & 81.23 & 84.97 \\ &
        Exp3 & Loss & 23.63 & 25.68 & 78.82 & 82.35 \\ &
        Exp3 & Loss Difference$^1$ & 23.63 & 27.24 & 81.46 & 84.78 \\ &
        DUCB (Ours) & Loss & \textbf{24.02} & 26.95 & 81.35 & 84.71 \\
    \midrule
    \multirow{4}{*}{\makecell{Qwen2.5-7B\\(Base)}} &
        Uniform & Loss & 27.15 & 39.36 & 78.79 & 82.34 \\ &
        Exp3 & Loss & 27.54 & 37.60 & 81.42 & 84.76 \\ &
        Exp3 & Loss Difference$^1$ & 27.73 & \textbf{40.72} & 78.87 & 82.35 \\ &
        DUCB (Ours) & Loss & \textbf{28.52} & 37.89 & 78.82 & 82.35 \\
    \midrule
    \multirow{4}{*}{\makecell{Llama3.1-8B\\(Base)}} &
        Uniform & Loss & 22.07 & 25.98 & 70.31 & 73.66 \\ &
        Exp3 & Loss & 23.24 & 26.27 & 71.27 & 73.63 \\ &
        Exp3 & Loss Difference$^1$ & 21.48 & \textbf{27.93} & 71.08 & 73.57 \\ &
        DUCB (Ours) & Loss & \textbf{24.22} & 25.29 & 71.22 & 73.61 \\
    \bottomrule
    \addlinespace[2pt]
    \multicolumn{7}{@{}l}{\footnotesize $^1$Follows from~\citet{shin_dynamixsft_2025}.}
  \end{tabular}
\end{table}

We evaluate the SFT performance on a heterogeneous dataset comprising GSM8k~\cite{cobbe_training_2021} and IFEval~\cite{zhou_instruction-following_2023}, evaluating the model's ability to balance mathematical reasoning with instruction-following.
We compare against Exp3 utilizing standard SFT loss and the DynamixSFT-style \textit{stepwise loss difference} proposed by~\citet{shin_dynamixsft_2025}. We approximate their forward \textit{anticipated} loss probing using virtual parameter update (followed parameter reset) with backward \textit{observed} loss subtraction.

The observations confirm that DUCB consistently fulfills its theoretical promise of robustness, achieving the highest minimum source-wise across all model architectures and sizes.
Note that the DynamixSFT-style proxy tends to achieve higher average accuracy, which aligns with their goal of optimizing the \textit{average} performance.

\subsection{DPO}
\label{sec:results-dpo}

\begin{table}[ht]
  \caption{DPO results on the Helpsteer3 dataset with a budget of 16k samples.}
  \label{tab:results-dpo}
  \centering
  \begin{tabular}{ccccccc}
    \toprule
    \multirow{2}{*}[-0.5ex]{\textbf{Model}} & \multirow{2}{*}[-0.5ex]{\textbf{Sampler}} & \multirow{2}{*}[-0.5ex]{\textbf{Signal}} & \multicolumn{2}{c}{\makecell{\textbf{Accuracy}}} & \multicolumn{2}{c}{\textbf{Margin}} \\
    \cmidrule(lr){4-5} \cmidrule(lr){6-7} & & & \textbf{\textit{Min}} & \textbf{Avg} & \textbf{Min} & \textbf{Avg} \\
    \midrule
    \multirow{4}{*}{\makecell{Qwen2.5-3B\\(Instruct)}} &
        Fixed-Weight & - & \textit{75.00} & \textit{80.57} & 0.96 & 1.28 \\ &
        Uniform & Loss & 76.76 & 80.53 & 1.08 & 1.31 \\ &
        Exp3 & Excess Loss & 77.78 & 79.98 & 1.12 & 1.40 \\ &
        AMA-R$^1$ & Excess Loss & 78.19 & 79.45 & 1.03 & 1.30 \\ &
        AMA-S$^1$ & Excess Loss & \textbf{80.25} & \textbf{82.35} & 1.07 & 1.31 \\ &
        DUCB (Ours) & Loss & \textbf{79.33} & 81.59 & 1.10 & 1.33 \\
    \midrule
    \multirow{4}{*}{\makecell{Llama3.2-3B\\(Instruct)}} &
        Fixed-Weight & - & \textit{80.24} & \textit{81.75} & 1.24 & 1.33 \\ &
        Uniform & Loss & 73.79 & 81.06 & 0.98 & 1.29 \\ &
        Exp3 & Excess Loss & 71.37 & 79.94 & 0.93 & 1.32 \\ &
        AMA-R$^1$ & Excess Loss & 74.22 & 78.00 & 0.77 & 1.15 \\ &
        AMA-S$^1$ & Excess Loss & \textbf{77.37} & 80.48 & 0.90 & 1.20 \\ &
        DUCB (Ours) & Loss & \textbf{77.42} & \textbf{81.88} & 1.04 & 1.30 \\
    \bottomrule
    \addlinespace[2pt]
    \multicolumn{7}{@{}l}{\footnotesize $^1$Faithful implementations of~\citet{corrado_automixalign_2025}.}
  \end{tabular}
\end{table}

We evaluate the DPO performance on the HelpSteer3~\cite{wang_helpsteer3_2025,wang_helpsteer3-preference_2025} dataset in~\cref{tab:results-dpo}. We compare against compare against faithful implementations of AutoMixAlign~\cite{corrado_automixalign_2025} including both AMA-R and AMA-S, and Exp3-style samplers based on excess loss.
Specifically, we train the model using uniform sampling for 100 steps, and utilize the (smoothed) training loss as a proxy to the loss of a \textit{specialist model} for each data source. Then we use the difference between the instantaneous training losses and the specialist loss proxies as the Exp3 sampling signal.
Additionally, we also compare against a \textit{fixed weight} sampler, which assigns a deterministic weight based on the specialist loss proxies for each model, respectively.

The results reveal that DUCB outperforms AMA-R on both models and remains competitive with AMA-S, which achieves higher minimum accuracy on Qwen2.5-3B.
It is also worth noting that the simple fixed-weight sampler performs exceptionally well for Llama3.2-3B, while failing to beat the uniform sampling baseline for Qwen2.5-3B.
This suggests that a relatively stationary data mixture method is not uniformly applicable for all model families without case-specific tunings. On the other hand, DUCB demonstrates its capacity to automatically detect and adapt to these varying levels of non-stationarity.

\begin{table}[ht]
    \caption{Three-seed DPO results on HelpSteer3 with Qwen2.5-3B-Instruct. We report final-checkpoint mean accuracy $\pm$ standard deviation. The paired gain compares DUCB and Uniform under matched seeds.}
    \label{tab:results-seeds}
    \centering
    \begin{tabular}{lccc}
        \toprule
        \textbf{Sampler} & \textbf{Runs} &
        \textbf{Min Acc.} & \textbf{Avg Acc.} \\
        \midrule
        Uniform & 3 & 76.76$\pm$1.65 & 80.53$\pm$0.70 \\
        DUCB (Ours) & 3 & \textbf{79.33}$\pm$\textbf{0.62} & \textbf{81.59}$\pm$\textbf{0.40} \\
        \midrule
        Paired gain & 3 & \textbf{+2.57}$\pm$\textbf{1.04} & - \\
        \bottomrule
    \end{tabular}
\end{table}

We further evaluate the performance of Qwen2.5-3B-Instruct on the Helpsteer3 dataset over three random seeds. Under the original sample budget, hyperparameters, and evaluation protocol, DUCB outperforms Uniform in minimum accuracy in all three matched Qwen2.5-3B DPO runs. As shown in~\cref{tab:results-seeds}, DUCB improves the final-checkpoint minimum accuracy by 2.57$\pm$1.04 percentage points while exhibiting lower variation across seeds. This provides controlled seed-level evidence of improved worst-source robustness without relying on aggregation across different datasets or model families.

\subsection{GRPO}
\label{sec:results-grpo}

We evaluate GRPO under a broader set of model families, scales, and sampling signals inpired by different baselines.
Specifically, \citet{ramesh_multi-task_2026} utilize Exp3 guided by training reward with stepwise reward improvement correction; \citet{wang_dump_2025} adopt switching-window UCB (SWUCB) guided by training advantages.
We focus on the Reasoning~\cite{chollet_measure_2019,katz_seemingly_2025,lin_zebralogic_2025} dataset and the CommonsenseQA~\cite{talmor_commonsenseqa_2019} dataset.

\begin{table}[ht]
  \caption{GRPO results on the Reasoning dataset with a budget of 10k samples.}
  \label{tab:results-grpo-reason}
  \centering
  \begin{tabular}{ccccc}
    \toprule
    \multirow{2}{*}[-0.5ex]{\textbf{Model}} & \multirow{2}{*}[-0.5ex]{\textbf{Sampler}} & \multirow{2}{*}[-0.5ex]{\textbf{Signal}} & \multicolumn{2}{c}{\makecell{\textbf{Accuracy}}} \\
    \cmidrule(lr){4-5} & & & \textbf{\textit{Min}} & \textbf{Avg} \\
    \midrule
    \multirow{5}{*}{\makecell{Qwen2.5-0.5B\\(Instruct)}} &
        Uniform & Reward & 14.73 & 23.01 \\ &
        Exp3 & Reward & 16.01 & 22.82 \\ &
        Exp3 & Reward + Difference~\cite{ramesh_multi-task_2026} & 16.30 & \textbf{23.23} \\ &
        SWUCB (DUMP) & Advantage~\cite{wang_dump_2025} & 13.83 & 22.61 \\ &
        DUCB (Ours) & Reward & \textbf{16.53} & 22.88 \\
    \midrule
    \multirow{5}{*}{\makecell{Qwen2.5-1.5B\\(Instruct)}} &
        Uniform & Reward & 18.89 & 28.98 \\ &
        Exp3 & Reward & 21.70 & \textbf{29.75} \\ &
        Exp3 & Reward + Difference~\cite{ramesh_multi-task_2026} & 20.91 & 28.78 \\ &
        SWUCB (DUMP) & Advantage~\cite{wang_dump_2025} & 19.45 & \textbf{29.70} \\ &
        DUCB (Ours) & Reward & \textbf{21.81} & 28.74 \\
    \midrule
    \multirow{5}{*}{\makecell{Llama3.2-1B\\(Instruct)}} &
        Uniform & Reward & 10.79 & 21.87 \\ &
        Exp3 & Reward & 11.13 & 22.70 \\ &
        Exp3 & Reward + Difference~\cite{ramesh_multi-task_2026} & 11.01 & 18.19 \\ &
        SWUCB (DUMP) & Advantage~\cite{wang_dump_2025} & 11.58 & 23.88 \\ &
        DUCB (Ours) & Reward & \textbf{15.40} & \textbf{25.07} \\
    \bottomrule
  \end{tabular}
\end{table}

\begin{table}[ht]
  \caption{GRPO results on the CommonsenseQA dataset with a budget of 6.4k samples.}
  \label{tab:results-grpo-cqa}
  \centering
  \begin{tabular}{ccccc}
    \toprule
    \multirow{2}{*}[-0.5ex]{\textbf{Model}} & \multirow{2}{*}[-0.5ex]{\textbf{Sampler}} & \multirow{2}{*}[-0.5ex]{\textbf{Signal}} & \multicolumn{2}{c}{\makecell{\textbf{Accuracy}}} \\
    \cmidrule(lr){4-5} & & & \textbf{\textit{Min}} & \textbf{Avg} \\
    \midrule
    \multirow{5}{*}{\makecell{Qwen3-8B\\(Instruct)}} &
        Uniform & Reward & \textbf{81.40} & 83.50 \\ &
        Exp3 & Reward & 77.22 & 85.52 \\ &
        Exp3 & Reward + Difference~\cite{ramesh_multi-task_2026} & 79.75 & 82.81 \\ &    
        SWUCB (DUMP) & Advantage ~\cite{wang_dump_2025} & 73.26 & 76.17 \\ &  
        DUCB (Ours) & Reward & \textbf{81.40} & \textbf{87.20} \\
    \midrule
    \multirow{5}{*}{\makecell{Qwen2.5-3B\\(Instruct)}} &
        Uniform & Reward & 65.38 & 72.27 \\ &
        Exp3 & Reward & 67.44 & \textbf{75.78} \\ &
        Exp3 & Reward + Difference~\cite{ramesh_multi-task_2026} & 61.54 & 66.41 \\ &    
        SWUCB (DUMP) & Advantage ~\cite{wang_dump_2025} & 65.38 & 71.88 \\ &  
        DUCB (Ours) & Reward & \textbf{69.77} & 75.39 \\
    \midrule
    \multirow{5}{*}{\makecell{Llama3.2-3B\\(Instruct)}} &
        Uniform & Reward & 69.23 & 73.44 \\ &
        Exp3 & Reward & 72.09 & 74.22 \\ &
        Exp3 & Reward + Difference~\cite{ramesh_multi-task_2026} & 69.23 & 72.66 \\ &    
        SWUCB (DUMP) & Advantage ~\cite{wang_dump_2025} & 69.23 & 71.88 \\ &          
        DUCB (Ours) & Reward & \textbf{74.41} & \textbf{75.78} \\
    \midrule
    \multirow{5}{*}{\makecell{GLM4-9B\\(Instruct)}} &
        Uniform & Reward & 77.91 & 84.08 \\ &
        Exp3 & Reward & 79.07 & 84.36 \\ &
        Exp3 & Reward + Difference~\cite{ramesh_multi-task_2026} & 82.56 & 85.55 \\ &    
        SWUCB (DUMP) & Advantage ~\cite{wang_dump_2025} & 81.40 & 85.55 \\ &          
        DUCB (Ours) & Reward & \textbf{84.88} & \textbf{87.74} \\
    \bottomrule
  \end{tabular}
\end{table}

For Reasoning shown in~\cref{tab:results-grpo-reason}, DUCB consistently provides the strongest robustness guarantees, achieving the highest minimum source-wise accuracies across all tested models.
For CommonsenseQA shown in~\cref{tab:results-grpo-cqa}, DUCB achieves the best or tied-best minimum source-wise accuracy across all four models.
This suggests that the robustness of DUCB is not tied to a specific architecture, tokenizer, or pretraining recipe.

The comparison with Exp3-style~\cite{ramesh_multi-task_2026} and DUMP~\cite{wang_dump_2025} baselines further highlights the importance of using a sampling signal aligned with worst-source robustness.
Specifically, samplers that are based on reward difference or advantage does not consistently improve worst-source accuracy, affirming the findings of~\cite{ramesh_multi-task_2026}.
We also observe that the gains are comparably clearer on the smaller models, indicating that they may be more vulnerable to source-wise imbalance during GRPO. On the other hand, larger models may already possess stronger prior knowledge and thus exhibit more robustness.

\subsection{Training Overhead}
\label{sec:results-overhead}

\begin{table}[ht]
    \caption{Training overhead on HelpSteer3 with Qwen2.5-3B-Instruct using the same A100 GPU, sample budget, and hyperparameters. Per-step time is reported as mean $\pm$ standard deviation. AMA runtimes cover generalist training but exclude the cost of training $K$ specialists. The overhead for~\citet{zang_adaptive_2025} is analytical rather than a wall-clock measurement.}
    \label{tab:runtime-overhead}
    \centering
    \begin{tabular}{lcl}
        \toprule
        \textbf{Sampler} & \textbf{Time/Step (s)} & \textbf{Additional Cost} \\
        \midrule
        Uniform & 21.90~$\pm$~1.29 & None \\
        AMA-R$^\dagger$ & 21.47~$\pm$~1.24 & Training $K$ source-specific specialists \\
        AMA-S$^\dagger$ & 21.72~$\pm$~1.36 & Training $K$ source-specific specialists \\
        Zang et al.$^\ddagger$ & 21~$+$~0.36~$\times\mathrm{BufSize}_t$ & $O(t)$ replay evaluations at step $t$ \\
        \textbf{DUCB (Ours)} & 21.94~$\pm$~1.14 & $O(K)$ scalar state \\
        \bottomrule
        \addlinespace[2pt]
        \multicolumn{3}{@{}l}{\footnotesize $^\dagger$ The reported runtime excludes specialist training;} \\
        \multicolumn{3}{@{}l}{\footnotesize $^\ddagger$ Complexity under a history that grows linearly with the training step.} \\
    \end{tabular}
\end{table}

We measure per-step runtime on the same A100 GPU using the three Qwen2.5-3B DPO runs. DUCB requires $21.94\pm1.14$ seconds per step, compared with $21.90\pm1.29$ for Uniform, with the difference falling within the observed variations. DUCB uses losses already computed during optimization, adds no auxiliary model evaluations, and maintains only $O(K)$ scalar state.
The generalist-training runtimes of AMA-R and AMA-S are similarly close to Uniform, but exclude their additional cost of training $K$ source-specific specialists. By contrast, the oracle of~\citet{zang_adaptive_2025} re-evaluates a growing history of collected samples under the current model. Thus, DUCB introduces little measurable online overhead while avoiding both auxiliary specialist training and replay-based model evaluations.

\subsection{Optimization Dynamics and Assumption Diagnostics}
\label{sec:results-opt}

Lipschitz regularity with respect to model parameters is commonly adopted in theoretical analyses of LLM optimization~\citep{malladi_fine-tuning_2023,bai_federated_2024,jiang_beyond_2025}. In our analysis, its primary role is to control changes in source losses between successive iterates visited by the optimization trajectory; we do not characterize a global Lipschitz constant for a transformer over its entire parameter space. We interpret~\cref{as:loss} accordingly as a trajectory-level regularity condition.

\begin{table}[ht]
    \caption{Optimization dynamics across the SFT, DPO, and GRPO experiments. For gradient norms and per-source signal variation, we report the mean and maximum of the run-level maxima. Worst-source variation is measured using exponentially smoothed training loss for SFT/DPO and reward for GRPO. Estimated worst-source (WS) switches are normalized per 1,000 steps.}
    \label{tab:results-opt}
    \centering
    \begin{tabular}{lcccc}
        \toprule
        \textbf{Task}
        & \textbf{Max LR}
        & \shortstack{\textbf{Max Grad. Norm}\\\textbf{mean / max}}
        & \shortstack{\textbf{Max Signal Variation}\\\textbf{mean / max}}
        & \shortstack{\textbf{WS Switches / 1k}\\\textbf{mean [min, max]}} \\
        \midrule
        SFT  & 2e-5 & 190.24~/~298.79 & 0.07~/~0.13 & 41~[3,124] \\
        DPO  & 1e-5 & 3.94~/~5.14     & 0.15~/~0.17 & 31~[13,53] \\
        GRPO & 1e-5 & 0.22~/~0.24     & 0.03~/~0.06 & 20~[0,32] \\
        \bottomrule
    \end{tabular}
\end{table}

In practice, small learning rates, gradient clipping, learning-rate schedules, and, where applicable, KL regularization help control parameter updates during LLM post-training. To examine whether the slowly varying loss environment assumed by our analysis is representative of our experiments, we record the learning rate and gradient norm reported by TRL or veRL, together with the maximum absolute change in each source's exponentially smoothed training signal between consecutive steps. We also report the frequency with which the empirically worst source changes during training.

As shown in~\cref{tab:results-opt}, the experiments use maximum learning rates of at most $2\times10^{-5}$, and the observed source-wise training signals change modestly between consecutive steps. At the same time, the identity of the worst source changes repeatedly across all three objectives, indicating a slowly varying but nonstationary environment. These observations are consistent with the trajectory-level regularity and slow-variation assumptions used in our analysis. They do not prove global Lipschitzness or estimate a global Lipschitz constant for the underlying neural networks.

\end{document}